\title{Alternating Gradient Flows: A Theory of\\ Feature Learning in Two-layer Neural Networks}
\author{%
  Daniel Kunin\textsuperscript{$\dagger$}\\
  Stanford University 
  \And
  Giovanni Luca Marchetti\textsuperscript{$\dagger$} \\
    KTH 
  \And
  Feng Chen \\
    Stanford University
  \And
  Dhruva Karkada \\
    UC Berkeley
  \AND
  James B. Simon \\
    UC Berkeley and Imbue
  \And
  Michael R. DeWeese \\
    UC Berkeley
  \And
  Surya Ganguli \\
    Stanford University
  \And 
  Nina Miolane \\
    UC Santa Barbara
}
\begin{document}

\maketitle

\begingroup
\renewcommand\thefootnote{}
\footnotetext{\hspace{-1.5mm}$^\dagger$ Correspondence to: \texttt{kunin@berkeley.edu} and \texttt{glma@kth.se}.}
\endgroup



\begin{abstract}
    What features neural networks learn, and how, remains an open question.
    In this paper, we introduce \emph{Alternating Gradient Flows (AGF)}, an algorithmic framework that describes the dynamics of feature learning in two-layer networks trained from small initialization.
    Prior works have shown that gradient flow in this regime exhibits a staircase-like loss curve, alternating between plateaus where neurons slowly align to useful directions and sharp drops where neurons rapidly grow in norm.
    AGF approximates this behavior as an alternating two-step process: maximizing a utility function over dormant neurons and minimizing a cost function over active ones. 
    AGF begins with all neurons dormant.
    At each iteration, a dormant neuron activates, triggering the acquisition of a feature and a drop in the loss.
    AGF quantifies the order, timing, and magnitude of these drops, matching experiments across several commonly studied architectures.
    We show that AGF unifies and extends existing saddle-to-saddle analyses in fully connected linear networks and attention-only linear transformers, where the learned features are singular modes and principal components, respectively.
    In diagonal linear networks, we prove AGF converges to gradient flow in the limit of vanishing initialization.
    Applying AGF to quadratic networks trained to perform modular addition, we give the first complete characterization of the training dynamics, revealing that networks learn Fourier features in decreasing order of coefficient magnitude.
    Altogether, AGF offers a promising step towards understanding feature learning in neural networks.
\end{abstract}

\vspace{-5pt}
\section{Introduction}
 
The impressive performance of artificial neural networks is often attributed to their capacity to learn \emph{features} from data. 
Yet, a precise understanding of \emph{what} features they learn and \emph{how} remains unclear.

A large body of recent work has sought to understand what features neural networks learn by reverse engineering the computational mechanisms implemented by trained neural networks \cite{olah2020zoom, elhage2021mathematical, olsson2022context, cunningham2023sparse, bricken2023monosemanticity, nanda2023progress}.
Known as \emph{mechanistic interpretability (MI)}, this approach involves decomposing trained networks into interpretable components to uncover their internal representations and algorithmic strategies \cite{bereska2024mechanistic}.
MI has achieved notable successes in understanding the \emph{emergent abilities} of large language models \cite{wei2022emergent}, including identifying induction heads that enable in-context learning \cite{olsson2022context} and revealing that small transformers trained on algebraic tasks use Fourier features \cite{nanda2023progress, chughtai2023toy}.
%
Despite these discoveries, mechanistic interpretability remains limited by its empirical nature, lacking a theoretical framework to formally define features and predict when and how they emerge \cite{sharkey2025open}.

A different line of work, rooted in \emph{deep learning theory}, has sought to understand how features are learned by directly studying the dynamics of the network during training. 
Empirical studies suggest that neural networks learn simple functions first, progressively capturing more complex features during training \cite{arpit2017closer, kalimeris2019sgd, barak2022hidden, atanasov2024optimization}. 
Termed \emph{incremental}, \emph{stepwise}, or \emph{saddle-to-saddle}, this learning process is marked by long plateaus of minimal loss change followed by rapid drops.
It is conjectured to arise from networks, initialized at small-scale, jumping between saddle points of the loss landscape \cite{jacot2021saddle}, with each drop corresponding to the acquisition of a new feature \cite{saxe2019mathematical}.
This saddle-to-saddle process has been explored across a range of simple settings, including diagonal linear networks \cite{berthier2023incremental, pesme2023saddle}, fully connected linear networks \cite{gidel2019implicit, jacot2021saddle}, tensor decomposition \cite{li2020towards, ge2021understanding}, self-supervised learning \cite{simon2023stepwise, karkada2025solvable}, shallow ReLU networks \cite{phuong2020inductive, lyu2021gradient, wang2022early, boursier2022gradient, min2023early, glasgow2023sgd, wang2024understanding}, attention-only transformers \cite{boix2023transformers, zhang2025training}, and multi-index models \cite{abbe2021staircase, abbe2022merged, abbe2023sgd, bietti2023learning, simsek2024learning, dandi2024benefits, arnaboldi2024repetita}.
However, these analyses often rely on different simplifying assumptions about the architecture (e.g., linear activations), data (e.g., orthogonal inputs), or optimizer (e.g., layer-wise learning), making it difficult to establish a unified, scalable understanding of feature learning.
 
In this work, we introduce a theoretical framework that unifies existing analyses of saddle-to-saddle dynamics in two-layer networks under the vanishing initialization limit, precisely predicting the order, timing, and magnitude of loss drops, while extending beyond classical settings to explain empirically observed patterns of feature emergence.
See \cref{fig:overview} for a visual overview of the paper.
Our main contributions are:
\begin{compactenum}
    \item We introduce \emph{Alternating Gradient Flows (AGF)}, a two-step algorithm that approximates gradient flow in two-layer networks with small initialization by alternating between maximizing a utility over dormant neurons and minimizing a cost over active ones (\cref{sec:framework}).
    \item We prove that AGF converges to the dynamics of gradient flow for diagonal linear networks in the vanishing initialization limit (\cref{sec:diagnn}).
    \item We show how AGF unifies and generalizes existing theories of saddle-to-saddle dynamics across fully connected linear networks and attention-only linear transformers (\cref{sec:unifying-analysis}).
    \item We use AGF to predict the emergence of Fourier features in modular arithmetic tasks, providing the first theoretical account of both the order in which frequencies appear and the dynamics that drive it (\cref{sec:modular-addition}).
\end{compactenum}
Our work takes a step toward unifying deep learning theory and mechanistic interpretability, suggesting that what features networks learn, and how, can be understood through optimization dynamics.
See \cref{app:related-work} for further discussion of related theoretical approaches to studying feature learning and saddle-to-saddle dynamics in two-layer networks, including mean-field analysis \cite{mei2018mean,chizat2018global,sirignano2020mean,rotskoff2022trainability} and teacher–student frameworks \cite{saad1995line,saad1995exact,saad1995dynamics,goldt2019dynamics,veiga2022phase,sarao2020optimization,loureiro2021learning,arnaboldi2023high}.

\begin{figure}[t]
    \centering
    \includegraphics[width=\linewidth]{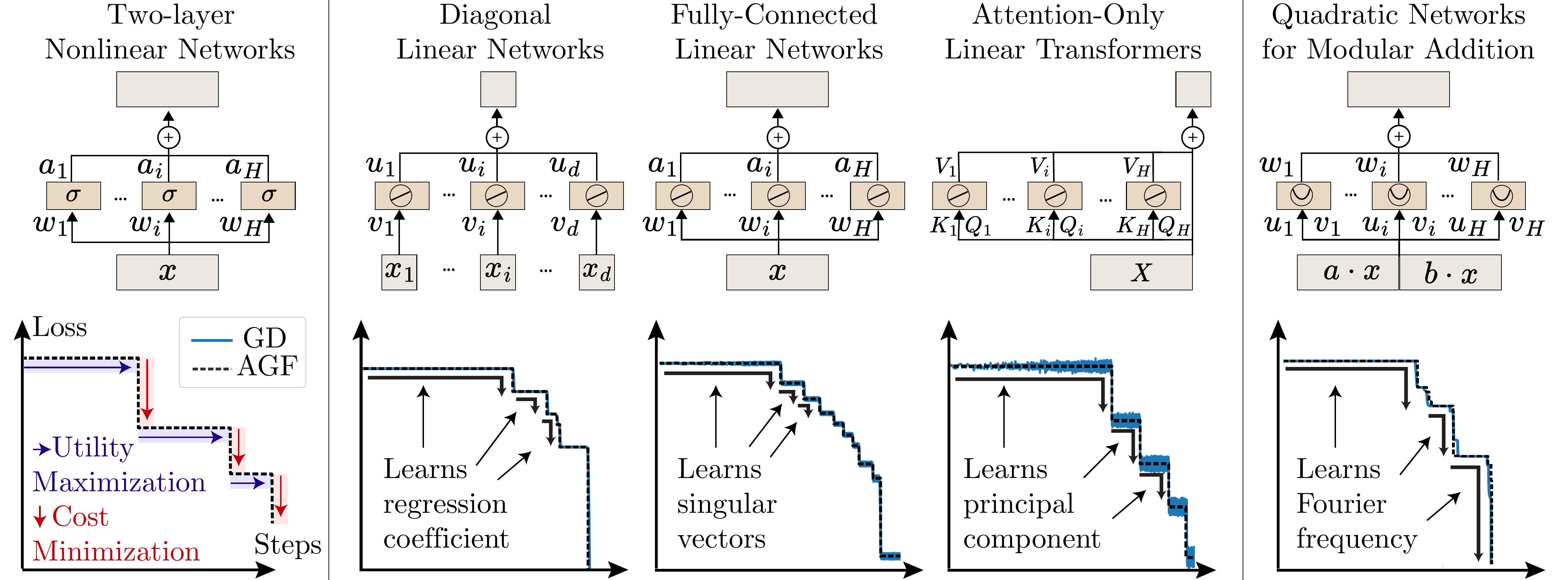}
    \vspace{-14pt}
    \caption{\textbf{A unified theory of feature learning in two-layer networks.} Left: Alternating Gradient Flows (AGF) models feature learning as a two-step process alternating between utility maximization (blue plateaus) and cost minimization (red drops), where each drop reflects learning a new feature (see \cref{sec:framework}). Middle: AGF unifies prior analyses of saddle-to-saddle dynamics (see \cref{sec:diagnn,sec:unifying-analysis}). Right: AGF enables new analysis of empirical phenomena (see \cref{sec:modular-addition}).}
    \vspace{-17pt}
    \label{fig:overview}
\end{figure}

\vspace{-5pt}
\section{Deriving a Two-step Algorithm that Approximates Gradient Flow}
\label{sec:framework}

\paragraph{Setup and notations.}
We consider a two-layer neural network with $H$ hidden \emph{neurons} of the form $f(x; \Theta) = \sum_{i=1}^{H} a_i \sigma(w_i^\intercal g_i(x))$, where each hidden neuron $i$ has learnable input weights \(w_i \in \mathbb{R}^d\) and output weights \(a_i \in \mathbb{R}^c\), and processes a potentially neuron-specific input representation $g_i(x)$.
The activation $\sigma \colon \mathbb{R} \rightarrow \mathbb{R}$ is \emph{origin-passing}, i.e., $\sigma(0) = 0$, as satisfied by common functions (e.g., linear, ReLU, square, tanh). 
Let $f_i(x;\theta_i) = a_i \sigma(w_i^\intercal g_i(x))$ denote the output of the $i^{\mathrm{th}}$ neuron, where \(\theta_i = (w_i, a_i)\).
The parameters $\Theta = (\theta_1, \dots, \theta_H) \in \mathbb{R}^{H \times (d+c)}$ for the entire network evolve under gradient flow (GF), \(\dot{\Theta} = -\eta\nabla_\Theta \mathcal{L}(\Theta)\), to minimize the MSE loss $\mathcal{L}(\Theta) = \mathbb{E}_{x} \left[ \frac{1}{2}\|y(x) - f(x;\Theta)\|^2\right]$, where \(y \colon \mathbb{R}^d \to \mathbb{R}^c\) is the ground truth function and the expectation is taken over the input distribution.
The learning rate $\eta > 0$ rescales time without affecting the trajectory.
The parameter vectors for each neuron are initialized \( a_i \sim \mathcal{N}(0, \tfrac{\alpha^2}{2c}\mathbf{I}_c),  w_i \sim \mathcal{N}(0, \tfrac{\alpha^2}{2d} \mathbf{I}_d) \).
In the limit $\alpha \to \infty$, the gradient flow dynamics enter the \emph{kernel regime} \cite{jacot2018neural, chizat2019lazy}.
When $\alpha = 1$, the dynamics correspond to a \emph{mean-field} or \emph{maximal update} parameterization \cite{mei2018mean, yang2020feature}. 
We study the \emph{small-scale} initialization regime $\alpha \ll 1$, where the dynamics are conjectured to follow a saddle-to-saddle trajectory as $\alpha \to 0$ \cite{jacot2021saddle, atanasov2024optimization}.

\subsection{The behavior of gradient flow at small-scale initialization}

\citet{bakhtin2011noisy} showed that, under a suitable time rescaling, the trajectory of a smooth dynamical system initialized at a critical point and perturbed by vanishing noise converges to a piecewise constant process that jumps instantaneously between critical points.
%
These results suggest that, in the vanishing initialization limit $\alpha \to 0$, gradient flow converges to a similar piecewise constant process, jumping between saddle points of the loss before reaching a minimum.
To approximate this behavior, we first examine the structure of critical points in the loss landscape of the two-layer networks we study, and then analyze the dynamics of gradient flow near them.
These dynamics reveals two distinct phases, separated by timescales, which in turn motivates the construction of AGF.

\paragraph{Critical points are structured by partitions of neurons into dormant and active sets.}
As $\alpha \to 0$, the initialization converges to the origin $\Theta = \mathbf{0}$, which is a critical point of the loss.
The origin is one of potentially an exponential number of distinct (up to symmetry) critical points, that can be structured according to partitions of the $H$ hidden neurons.
Formally, split neurons into two disjoint sets: \emph{dormant neurons} $\mathcal{D} \subseteq [H]$ and \emph{active neurons} $\mathcal{A} = [H] \backslash \mathcal{D}$, with parameters $\Theta_\mathcal{D} = (\theta_i)_{i \in \mathcal{D}}$ and $\Theta_\mathcal{A} = (\theta_i)_{i \in \mathcal{A}}$, respectively.
Due to the two-layer structure of our model and the origin-passing activation function, setting the parameters of the dormant neurons to zero $\Theta_\mathcal{D} = \mathbf{0}$ restricts the loss to a function $\mathcal{L}(\Theta_\mathcal{A})$ of only the active neurons.
Now any critical point $\Theta_\mathcal{A}^*$ of the restricted loss yields a critical point  $\Theta = (\mathbf{0}, \Theta_\mathcal{A}^*)$ of the original loss.
Thus the $2^H$ partitions of dormant and active neurons structure critical ponts of the loss landscape, with the origin corresponding to the special critical point with all neurons dormant.

\paragraph{Dynamics near a saddle point.}
Consider the dynamics near a saddle point defined by a dormant and active set, where we assume the active neurons are at a local minimum of their restricted loss $\mathcal{L}(\Theta_\mathcal{A})$.
By construction, neurons in our two-layer network only interact during training through the \emph{residual} $r(x;\Theta) = y(x) - f(x;\Theta) \in \mathbb{R}^c$, which quantifies the difference between the predicted and target values at each data point $x$.
Near a critical point, the residual primarily depends on the active neurons, as the contribution from the dormant neurons is effectively zero.
Consequently, dormant neurons evolve independently of each other, while active neurons remain collectively equilibrated.   
To characterize the dynamics of the dormant neurons, which will determine how the dynamics escape the saddle point, we define the \emph{utility function}, $\mathcal{U}_i \colon \mathbb{R}^{d+c} \to \mathbb{R}$ for each $i \in \mathcal{D}$ as:
\begin{equation}
    \label{eq:utility-defn}
    \mathcal{U}_i(\theta; r) = \mathbb{E}_x\left[\left\langle f_i(x;\theta), r(x)\right\rangle\right],     \quad \text{where } r(x) = y(x) - f(x; \Theta_\mathcal{A}). 
\end{equation}
Intuitively, the utility $\mathcal{U}_i$ quantifies how correlated the $i^{\text{th}}$ dormant neuron is with the residual $r$, which itself is solely determined by the active neurons.  
Using this function we can approximate the directional and radial dynamics for the parameters of each dormant neuron as
\begin{equation}\label{eq:dormant-dynamics}
    \textstyle
    \frac{d}{dt} \bar{\theta}_i = \eta\|\theta_i\|^{\kappa - 2} \mathbf{P}^\perp_{\theta_i}\nabla_{\theta}\mathcal{U}_i(\bar{\theta}_i; r)
    ,\quad \quad 
    \frac{d}{dt}\|\theta_i\| = \eta\kappa \|\theta_i\|^{\kappa - 1} \mathcal{U}_i(\bar{\theta}_i; r),
\end{equation}
where $\bar{\theta}_i = \theta_i / \| \theta_i\|$ are the normalized parameters, $\mathbf{P}^\perp_{\theta_i} = \left(\mathbf{I}_d - \bar{\theta}_i\bar{\theta}_i^\intercal\right)$ is an orthogonal projection matrix, and $\kappa \ge 2$ is the leading order of the Taylor expansion of the utility from the origin.
Note that the directional dynamics scale as $\|\theta_i\|^{\kappa - 2}$, while the radial dynamics scale as $\|\theta_i\|^{\kappa - 1}$.  
Because $\|\theta_i\| \ll 1$ for dormant neurons, the directional dynamics evolves significantly faster than the radial dynamics.  
Consequently, the contribution of a dormant neuron to the residual is suppressed, keeping the residual effectively constant, and this approximation to the dynamics stays valid.
This separation of timescales between directional and radial components has been utilized in several previous studies of gradient descent dynamics near the origin \cite{maennel2018gradient, atanasov2021neural, kumar2024directional, kumar2024early, berthier2024learning, kunin2024get}.

\paragraph{Estimating the jump time from dormant to active.}
The dynamical approximation in \cref{eq:dormant-dynamics} breaks down if a dormant neuron crosses the threshold $\|\theta_i\| = \mathcal{O}(1)$, becoming active by influencing the residual.
We denote by $i_* \in \mathcal{D}$ the first dormant neuron to reach this threshold and by $\tau_{i_*}$ the {\it jump time} at which this dormant to active transition occurs. 
To determine $\tau_{i_*}$, we compute for each dormant neuron the earliest time $\tau_{i}$ when $\|\theta_i(\tau_i)\| > 1$, such that $i_* = \argmin_{i \in \mathcal{D}} \tau_i$.
This time $\tau_i$ is defined implicitly in terms of the \emph{accumulated utility}, the path integral $\mathcal{S}_i(t) = \int_0^t \kappa \mathcal{U}_i(\bar{\theta}_i(s); r)\ ds$:
\begin{equation}
    \label{eq:jump-time-neuron}
    \textstyle
    \tau_{i} = \inf\left\{t > 0 \;\middle|\; \mathcal{S}_i(t) > \frac{c_i}{\eta} \right\}, 
    \quad \text{where }
    c_i =
    \begin{cases}
        -\log \left(\|\theta_i(0)\|\right) &\text{if } \kappa = 2,\\
        -\frac{1}{2 - \kappa}\left(\|\theta_i(0)\|^{2 - \kappa} - 1 \right) &\text{if } \kappa > 2.
    \end{cases}
\end{equation}
In the vanishing initialization limit as $\alpha \to 0$, the jump time defined in \cref{eq:jump-time-neuron} diverges as gradient flow exhibits extreme slowing near the critical point at the origin. 
Thus, to capture meaningful dynamics in the limit, we \emph{accelerate} time by setting the learning rate $\eta = \mathbb{E}_{\Theta_0}\left[ c_i\right]$ where $c_i$ is the threshold defined in \cref{eq:jump-time-neuron} and the expectation is taken over the initialization.

\begin{figure*}[t]
    \centering
    \begin{minipage}[h]{0.49\textwidth}
        \scriptsize 
        \SetEndCharOfAlgoLine{}
        \begin{algorithm}[H]
            \scriptsize
            \caption{\textbf{AGF}}
            \label{alg:alternating-gradient-flows}
            
            \textbf{Initialize:} $\mathcal{D} \gets [H]$, $\mathcal{A} \gets \emptyset$, $\mathcal{S} \gets 0 \in \mathbb{R}^H$
        
            \While{$\nabla\mathcal{L}(\Theta_\mathcal{D}) \neq 0$}{
                \vspace{-0.7mm}
                \begin{tikzpicture}[overlay,remember picture]
                    \node[opacity=0.0, rounded corners,inner sep=3pt, yshift=-2.5mm] (A) at (0.0,0) {\phantom{A}};
                    \node[opacity=0.0, rounded corners,inner sep=3pt, yshift=-18mm] (B) at (6.0,0) {\phantom{B}};
                    \draw[fill=blue!20,opacity=0.5] (A.north west) rectangle (B.south east);
                \end{tikzpicture}
                                
                \While{$\forall i \in \mathcal{D} \quad \mathcal{S}_i \le c_i$}{
                    \For{$i \in \mathcal{D}$}{
                        $\frac{d\bar{\theta}_i}{dt} = \eta_\alpha\|\theta_i\|^{\kappa - 2}\mathbf{P}^\perp_{\theta_i}\, \nabla \mathcal{U}(\bar{\theta}_i, r)$\\
                        $\frac{d\mathcal{S}_i}{dt} = \eta_\alpha\kappa \mathcal{U}(\bar{\theta}_i, r)$
                    }
                }
                
                Activate neuron: $\mathcal{D}, \mathcal{A} \gets \mathcal{D} \setminus \{i_*\}, \mathcal{A} \cup \{i_*\}$

                \vspace{-2.1mm}
                \begin{tikzpicture}[overlay,remember picture]
                    \node[opacity=0.0, rounded corners,inner sep=3pt, yshift=-2mm] (A) at (0.0,0) {\phantom{A}};
                    \node[opacity=0.0, rounded corners,inner sep=3pt, yshift=-13mm] (B) at (6.0,0) {\phantom{B}};
                    \draw[fill=red!20,opacity=0.5] (A.north west) rectangle (B.south east);
                \end{tikzpicture}
                                
                \While{$\nabla \mathcal{L}(\Theta_\mathcal{A}) \not = 0$}{
                    \For{$j \in \mathcal{A}$}{
                        $\frac{d\theta_j}{dt} = -\nabla_{\theta_j} \mathcal{L}(\Theta_\mathcal{A})$
                    }
                    Remove collapsed neurons: $\mathcal{D}, \mathcal{A} \gets \mathcal{D} \cup \mathcal{C}, \mathcal{A} \setminus \mathcal{C}$
                }
            }
        \end{algorithm}
        \vspace{-1mm}
    \end{minipage}
    \hfill
    \begin{minipage}[h]{0.49\textwidth}
        \centering
        \caption*{\textbf{Conceptual Illustration: AGF}}
        \includegraphics[width=\linewidth]{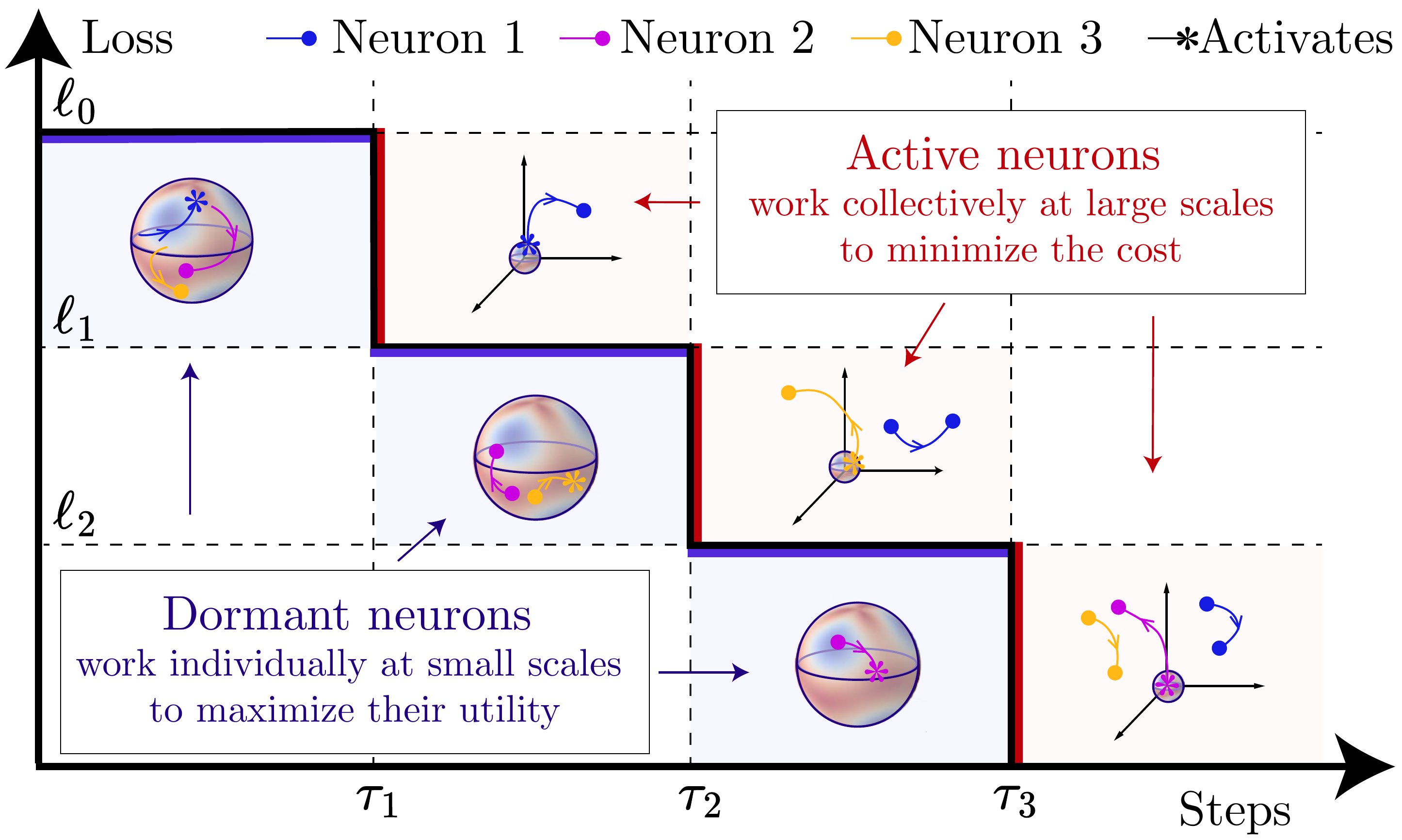}
    \end{minipage}
    \caption{\textbf{Alternating Gradient Flows (AGF).} 
    AGF alternates between utility maximization (blue) over dormant neurons and cost minimization (red) over active neurons, predicting saddle-to-saddle dynamics in training.
    Utility maximization: dormant neurons evolve independently driven by projected gradient flow to maximize their utility.
    We keep track of the accumulated utility for each dormant neuron to determine the first neuron to transition to active.
    Cost minimization: Active neurons work collectively driven by gradient flow to minimize the loss until convergence. 
    After convergence, active neurons compute a new residual that determines the utility at the next iteration.
    It is possible in this step for active neurons to become dormant again (see \cref{app:derivations-framework}).
    }
    \label{fig:agf-full-panel}
    \vspace{-15pt}
\end{figure*}

\subsection{Alternating Gradient Flow (AGF) approximates GF at small-scale initialization}
\label{sec:AGF-algorithm}
Based on the behaviors of gradient flow (GF) at small-scale initialization, we introduce \textit{Alternating Gradient Flows (AGF)} (\cref{alg:alternating-gradient-flows}) as an approximation of the dynamics.
AGF characterizes phases of constant loss as periods of utility maximization and sudden loss drops as cost minimization steps, accurately predicting both the loss levels and the timing of loss drops during training.
At initialization, all neurons are dormant. 
At each iteration, a neuron leaves the dormant set and enters the active set.
The dormant neurons work individually to maximize their utility, and the active neurons collectively work to minimize a cost (\cref{fig:agf-full-panel}). 
Specifically, each iteration in AGF is divided into two steps.

\paragraph{Step 1: Utility maximization over dormant neurons.}
In this step, only dormant neurons update their parameters. They independently maximize (locally) their utility function over the unit sphere by following the projected gradient flow (left of \cref{eq:dormant-dynamics}).
%
%
We keep track of the accumulated utility for each neuron to determine the first neuron to transition and the jump time using \cref{eq:jump-time-neuron}.
At the jump time, we record the norms and directions of the dormant neurons that have not activated, as they will serve as the initialization for the utility maximization phase of the next iteration.

    
\paragraph{Step 2: Cost minimization over active neurons.} 
In this step, all active neurons interact to minimize (locally) the loss, by following the negative gradient flow of $\mathcal{L}(\Theta_\mathcal{A})$. 
%
%
For previously active neurons, the initialization is determined by the previous cost minimization step. 
For the newly activated neuron, the initialization comes from the utility maximization step.
It is possible in this step for active neurons to become dormant again if the optimization trajectory brings an active neuron near the origin (see \cref{app:derivations-framework}).
After convergence, we compute the new residual $r(x) = y(x) - f(x; \Theta_\mathcal{A})$, which defines a new utility for the remaining dormant neurons at the next iteration.

%

\paragraph{Termination.}
We repeat both steps, recording the corresponding sequence of jump times and loss levels, until there are either no remaining dormant neurons or we are at a local minimum of the loss.
Through this process, we have generated a precise sequence of saddle points and jump times to describe how gradient flow leaves the origin through a saddle-to-saddle process.

While AGF and GF exhibit similar behaviors at small-scale initialization ($\alpha \ll 1$), a natural question is whether their trajectories converge to each other as $\alpha \to 0$.
While a general proof remains open (see \cref{sec:discussion}), in the next section we present an illustrative setting where convergence can be established.

\vspace{-5pt}
\section{A Setting Where AGF and GF Provably Converge to Each Other}
\label{sec:diagnn}

\begin{wrapfigure}{r}{0.6\textwidth}
    \vspace{-10pt}
    \centering
    \includegraphics[width=1\linewidth]{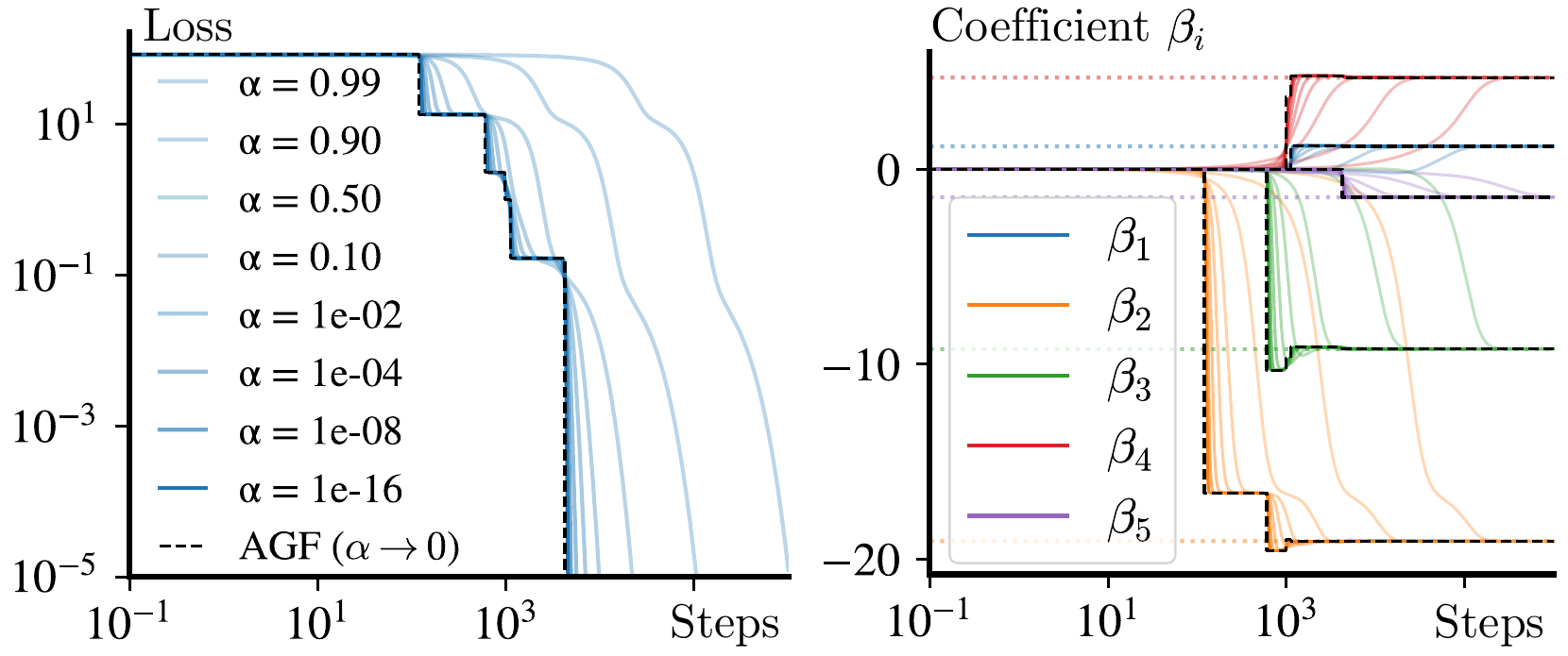}
    \caption{\textbf{AGF $=$ GF as $\alpha \to 0$ in diagonal linear networks.}
    Training loss curves for a diagonal linear network under the setup described in \cref{sec:diagnn} for various initialization values $\alpha$.
    As $\alpha \to 0$, the trajectory predicted by AGF and the empirics of gradient flow converge.
    To ensure a meaningful comparison between experiments we set $\eta = -\log(\alpha)$.
    }
    \label{fig:diagonal-linear-sweep}
    \vspace{-12pt}
\end{wrapfigure}

Diagonal linear networks are simple yet insightful models for analyzing learning \cite{gissin2019implicit, woodworth2020kernel, pesme2021implicit, even2023s, berthier2023incremental, papazov2024leveraging}.
Central to their analysis is an interpretation of gradient flow in function space as mirror descent with an initialization-dependent potential that promotes sparsity when $\alpha$ is small.
Using this perspective, \citet{pesme2023saddle} characterized the sequence of saddles and jump times for gradient flow in the limit $\alpha \to 0$. 
We show that, in this limit, AGF converges to the exact same sequence, establishing a setting where AGF provably converges to gradient flow (see \cref{fig:diagonal-linear-sweep}).

We consider a two-layer diagonal linear network trained via gradient flow to minimize the MSE loss $\mathcal{L}(\beta) = \frac{1}{2n} \|y - \beta^\intercal X\|^2$, where the regression coefficients $\beta \in \mathbb{R}^d$ are parameterized as the element-wise product $\beta = u \odot v$ with $u, v \in \mathbb{R}^d$, and $(X, y) \in \mathbb{R}^{d \times N} \times \mathbb{R}^N$ are the input, output data.
This setup fits naturally within AGF, where the $i^{\mathrm{th}}$ neuron corresponds to $\beta_i$ with parameters $\theta_i = (u_i, v_i)$, data representation $g_i(x) = x_i$, and $\kappa = 2$.
Following \citet{pesme2023saddle}, we initialize $\theta_i(0) = (\sqrt{2}\alpha, 0)$ such that $\beta_i(0) = 0$, and assume the inputs are in \emph{general position}, a standard technical condition in Lasso literature \cite{tibshirani2013lasso, pesme2023saddle} to rule out unexpected linear dependencies.

\paragraph{Utility maximization.}
The utility function for the $i^{\mathrm{th}}$ neuron is $\mathcal{U}_i = -u_i v_i \nabla_{\beta_i} \mathcal{L}(\beta_\mathcal{A})$, which is maximized on the unit sphere $u_i^2 + v_i^2 = 1$ when $\operatorname{sgn}(u_i v_i) = -\operatorname{sgn}(\nabla_{\beta_i} \mathcal{L}(\beta_\mathcal{A}))$ and $|u_i| = |v_i|$, yielding a maximal utility of $\bar{\mathcal{U}}_i^* = \tfrac{1}{2}|\nabla_{\beta_i} \mathcal{L}(\beta_\mathcal{A})|$.
What makes this setting special is that not only can the maximal utility be computed in closed form, but every quantity involved in utility maximization—namely the accumulated utility, jump times, and directional dynamics—admits an exact analytical expression.
The key insight is that the normalized utility $\bar{\mathcal{U}}_i(t)$ for each dormant neuron $i$ evolves according to a separable Riccati ODE, interpolating from its initial value $\bar{\mathcal{U}}_i(0)$ to its maximum $\bar{\mathcal{U}}_i^*$.
As a result, we can derive an explicit formula for the normalized utility $\bar{\mathcal{U}}_i(t)$, whose integral yields the accumulated utility $\mathcal{S}_i(t)$, evolving as:
\begin{equation}
    \label{eq:dln-accumulated-utility}
    \textstyle
    \mathcal{S}_i(\tau^{(k)} + t) = \frac{1}{2\eta_\alpha} \log \cosh \left(2\eta_\alpha \left(2\mathcal{U}_i^* t \pm \frac{1}{2\eta_\alpha} \cosh^{-1} \exp\left(2\eta_\alpha \mathcal{S}_i\left(\tau^{(k)}\right)\right)\right)\right),
\end{equation}
where $\eta_\alpha = -\log(\sqrt{2}\alpha)$ is the learning rate and the unspecified sign is chosen based on $\operatorname{sgn}(u_i v_i)$ and $\operatorname{sgn}(\nabla_{\beta_i} \mathcal{L}(\beta_\mathcal{A}))$, as explained in \cref{app:diagonal-linear}.
This expression allows us to determine the next neuron to activate, as the first $i \in \mathcal{D}$ for which $\mathcal{S}_i = 1$, from which the jump time can be computed.

\paragraph{Cost minimization.}
Active neurons represent non-zero regression coefficients of $\beta$ (see Figure~\ref{fig:diagonal-linear-sweep} right).
During the cost minimization step, the active neurons work to collectively minimize the loss.
When a neuron activates, it does so with a certain sign $\mathrm{sgn}(u_iv_i) = -\operatorname{sgn}(\nabla_{\beta_i} \mathcal{L}(\beta_\mathcal{A}))$.
If, during the cost minimization phase, a neuron changes sign, then it can do so only by returning to dormancy first.
This is due to the fact that throughout the gradient flow dynamics, the quantity $u_i^2 - v_i^2 = 2\alpha^2$ is conserved and thus, in order to flip the sign of the product $u_iv_i$, the parameters must pass through their initialization.
As a result, the critical point reached during the cost minimization step is the unique solution to the constrained optimization problem,
\begin{equation}
    \beta_\mathcal{A} = \argmin_{\beta \in \mathbb{R}^d} \mathcal{L}(\beta) 
    \quad \text{subject to} \quad
    \begin{cases}
        \beta_i = 0 & \text{if } i \notin \mathcal{A},\\
        \beta_i \cdot \mathrm{sgn}(u_i v_i) \ge 0 & \text{if } i \in \mathcal{A}.
    \end{cases}
\end{equation}
All coordinates where $(\beta_\mathcal{A})_i = 0$ are dormant at the next step of AGF.

\paragraph{AGF in action: sparse regression.}
We now connect AGF to the algorithm proposed by \citet{pesme2023saddle}, which captures the limiting behavior of gradient flow under vanishing initialization.
Their algorithm tracks, for each coefficient of $\beta$, an integral of the gradient, $S_i(t) = - \int_0^t \nabla_{\beta_i} \mathcal{L}(\beta(\tilde{t}_\alpha(s))) \, ds$, with a time rescaling $\tilde{t}_\alpha(s) = -\log(\alpha) s$.
They show that in the limit $\alpha \to 0$, this quantity is piecewise linear and remains bounded in $[-1, 1]$. 
Using these properties, each step of their algorithm determines a new coordinate to activate (the first for which $S_i(t) = \pm 1$), adds it to an active set, then solves a constrained optimization over this set, iterating until convergence.
Despite differing formulations, this process is identical to AGF in the limit $\alpha \to 0$:
\begin{theorem}
    \label{thm:diagonal-linear} 
    Let $(\beta_{\text{AGF}}, t_{\text{AGF}})$ and $(\beta_{\text{PF}}, t_{\text{PF}})$ be the sequences produced by AGF and Algorithm 1 of \citet{pesme2023saddle}, respectively.  
    Then, $(\beta_{\text{AGF}}, t_{\text{AGF}}) \to \left(\beta_{\text{PF}}, t_{\text{PF}}\right)$ pointwise as $\alpha \to 0$.
\end{theorem}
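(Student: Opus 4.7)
The plan is to proceed by induction on the round index $k$, showing that the active sets, activation signs, and parameter vectors produced by AGF and by the PF algorithm coincide in the $\alpha \to 0$ limit; from this, the convergence $(\beta_{\text{AGF}}, t_{\text{AGF}}) \to (\beta_{\text{PF}}, t_{\text{PF}})$ is immediate. The base case is trivial since both algorithms start at $\beta = 0$ with every coordinate dormant and all accumulated quantities zero. For the inductive step, assume both algorithms share the same state at the start of round $k+1$. Because the residual $\nabla_{\beta_i}\mathcal{L}(\beta_\mathcal{A})$ is then identical, the utility maximization and cost minimization phases can be analyzed independently.

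The first step is to take the pointwise limit of the closed-form accumulated utility in equation \eqref{eq:dln-accumulated-utility}. I would combine the elementary bound $\bigl|\tfrac{1}{2\eta}\log\cosh(2\eta x) - |x|\bigr| \le \tfrac{\log 2}{2\eta}$ with the asymptotic $\tfrac{1}{2\eta_\alpha}\cosh^{-1}\exp(2\eta_\alpha s) \to s$ for $s > 0$, to show that, as $\eta_\alpha = -\log(\sqrt{2}\alpha) \to \infty$, $\mathcal{S}_i(\tau^{(k)} + t)$ converges uniformly on compact intervals to $\bigl|\mathcal{S}_i(\tau^{(k)}) \pm |\nabla_{\beta_i}\mathcal{L}(\beta_\mathcal{A})|\, t\bigr|$, with the sign determined by the first-order condition $\operatorname{sgn}(u_iv_i) = -\operatorname{sgn}(\nabla_{\beta_i}\mathcal{L}(\beta_\mathcal{A}))$. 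This is exactly the evolution of $|S_i^{\text{PF}}(\tau^{(k)} + t)|$ in PF's construction, where $S_i^{\text{PF}}$ is piecewise linear with slope $-\nabla_{\beta_i}\mathcal{L}(\beta_\mathcal{A})$. The first time the threshold $1$ is crossed therefore converges, and the general position assumption rules out simultaneous crossings, so a unique activating index $i_*$ and its sign are identified in the limit.

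For the cost minimization phase, the passage following equation \eqref{eq:dln-accumulated-utility} already characterizes the fixed point of the active-set flow as the unique solution of the sign-constrained problem $\min_\beta \mathcal{L}(\beta)$ subject to $\beta_i = 0$ outside $\mathcal{A}$ and $\beta_i \operatorname{sgn}(u_iv_i) \ge 0$ inside. The PF algorithm solves precisely the same constrained optimization with the same sign pattern, so the two $\beta^{(k+1)}$ values coincide; coordinates whose optimum is zero are returned to the dormant set in both algorithms, closing the induction.

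The principal obstacle I anticipate concerns dormant neurons $i \ne i_*$ that carry over between rounds. Such a neuron enters round $k+1$ with a partially accumulated $\mathcal{S}_i$ and a direction $\bar{\theta}_i$ that is generally not aligned with the argmax of the new utility induced by the updated residual. To match PF, one must show that in the limit the realignment of $\bar{\theta}_i$ happens on a vanishing timescale relative to the round duration, so that the post-realignment slope of $\mathcal{S}_i$ is again $|\nabla_{\beta_i}\mathcal{L}(\beta_\mathcal{A}^{(k+1)})|$; this relies on the scale separation between directional and radial dynamics in equation \eqref{eq:dormant-dynamics}, amplified by $\eta_\alpha \to \infty$. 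A related subtlety is the sign-reversal branch: when the updated residual flips $\operatorname{sgn}(\nabla_{\beta_i}\mathcal{L})$, $\mathcal{S}_i$ must first decrease through zero before re-accumulating, and one must verify that the $\pm$ selection in equation \eqref{eq:dln-accumulated-utility} reproduces the absolute-value dynamics of $|S_i^{\text{PF}}|$ in the limit. Handling these transitions uniformly across rounds, together with propagating the general position condition to preclude ties at every activation time, is where the bulk of the technical work will concentrate.
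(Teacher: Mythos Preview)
Your proposal is correct and follows essentially the same route as the paper: induction on rounds, the asymptotic $\log\cosh(x)\to |x|$ (equivalently $\cosh^{-1}\exp(x)\sim x+\log 2$) to collapse the accumulated utility in \eqref{eq:dln-accumulated-utility} to the piecewise-linear $|S_i^{\text{PF}}|$, and identification of the cost-minimization step with the same sign-constrained least-squares problem. The obstacle you flag about carried-over dormant neurons is simpler than you anticipate: the paper does not invoke a separate realignment-timescale argument from \eqref{eq:dormant-dynamics}, because the exact Riccati solution already encodes the full directional dynamics through the sign $\zeta_i=\mathrm{sgn}(-\nabla_{\beta_i}\mathcal{L})\,\rho_i(\tau^{(k)})$ and an explicit recursion for $\rho_i=\mathrm{sgn}(u_iv_i)$, so misalignment and sign reversal are captured exactly by the closed form rather than by a scale-separation limit.
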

The key connection lies in the asymptotic identity $\log \cosh(x) \to |x|$ as $|x| \to \infty$, which implies that the accumulated utility $\mathcal{S}_i(t)$ in AGF converges to the absolute value of the integral $S_i(t)$.
The unspecified sign in AGF corresponds to the sign of the boundary conditions in their algorithm.
Thus, AGF converges to the same saddle-to-saddle trajectory as the algorithm of \citet{pesme2023saddle}, and therefore to gradient flow.
See \cref{app:diagonal-linear} for a full derivation.

\vspace{-5pt}
\section{AGF Unifies Existing Analysis of Saddle-to-Saddle Dynamics}
\label{sec:unifying-analysis}


\paragraph{Fully connected linear network.}

\begin{wrapfigure}{r}{0.6\textwidth}
    \vspace{-16pt}
    \centering
    \includegraphics[width=0.98\linewidth]{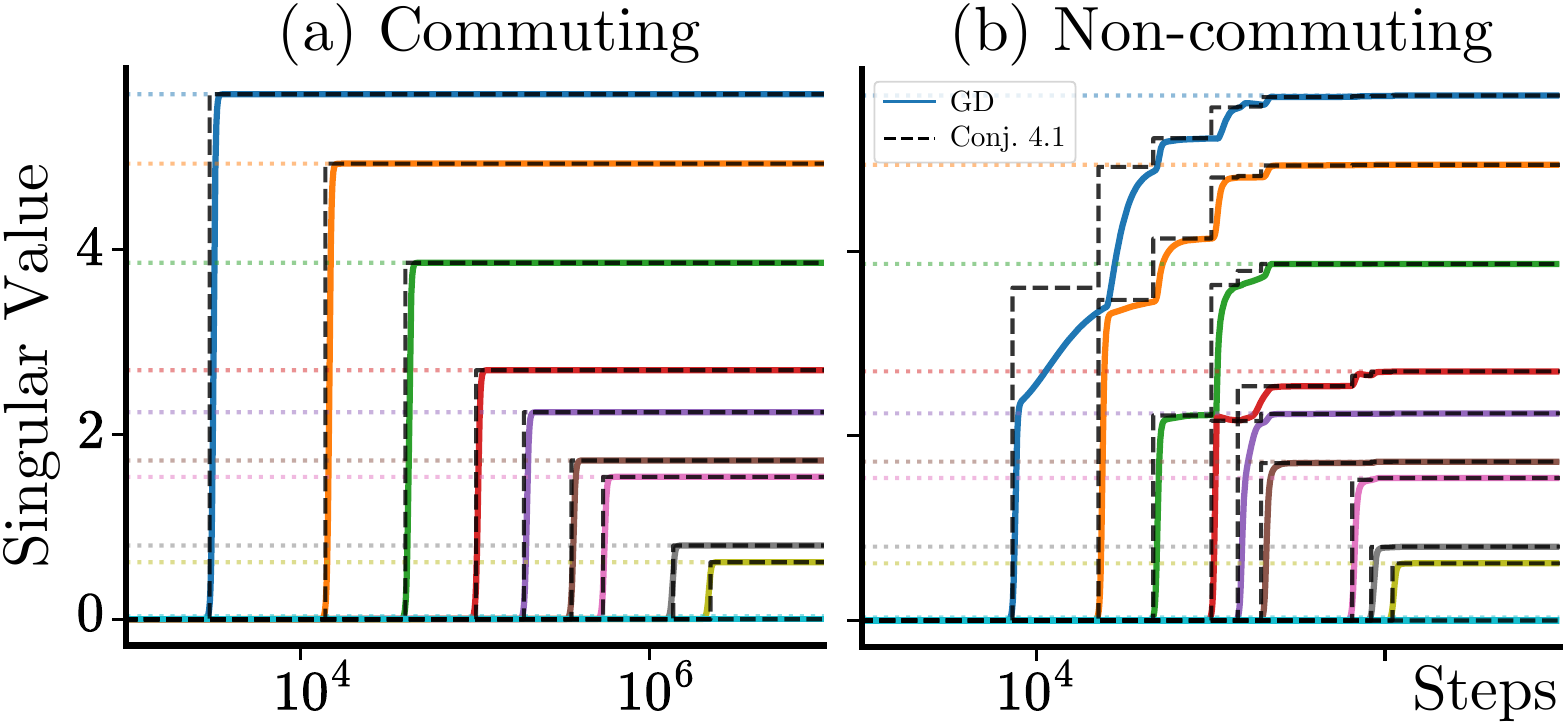}
    \caption{\textbf{Stepwise singular value decomposition.} Training a two-layer fully connected linear network on Gaussian inputs with a power-law covariance $\Sigma_{xx}$ and labels $y(x) = Bx$ generated from a random $B$.
    We show the dynamics of the singular values of the network's map $AW$ when $\Sigma_{xx}$ commutes with $\Sigma_{yx}^\intercal \Sigma_{yx}$ (a) and when it does not (b).
    \cref{conj:fully-connected-linear} (black dashed lines) predicts the dynamics well.
    }
    \label{fig:linear-loss}
    \vspace{-15pt}
\end{wrapfigure}

Linear networks have long served as an analytically tractable setting for studying neural network learning dynamics \cite{baldi1989neural, fukumizu1998effect, saxe2013exact, lampinen2018analytic, domine2024lazy, tahir2024features}. 
Such linear networks exhibit highly nonlinear learning dynamics.
\citet{saxe2013exact} demonstrated that gradient flow from a task-aligned initialization learns a sequential singular value decomposition of the input-output cross-covariance.
This behavior persists in the vanishing initialization limit without task alignment \cite{atanasov2021neural, stoger2021small, gidel2019implicit, li2020towards, jacot2021saddle}, and is amplified by depth \cite{arora2019implicit, gissin2019implicit}.
Here, we show how AGF naturally recovers such greedy low-rank learning.

We consider a fully connected two-layer linear network, $f(x;\theta) = A W x$, with parameters $W \in \mathbb{R}^{H \times d}$ and $A \in \mathbb{R}^{c \times H}$.
The network is trained to minimize the MSE loss with data generated from the linear map $y(x) = Bx$, where $B \in \mathbb{R}^{c \times d}$ is an unknown matrix. 
The inputs are drawn independently from a Gaussian $x \sim \mathcal{N}(0, \Sigma_{xx})$, where $\Sigma_{xx} \in \mathbb{R}^{d \times d}$ is the input covariance, and $\Sigma_{yx} = B \Sigma_{xx} \in \mathbb{R}^{c \times d}$ is the input-output cross-covariance, which we assume are full rank with distinct singular values.
A subtlety in applying AGF to this setting is that the standard notion of a neuron used in \cref{sec:framework} is misaligned with the geometry of the loss landscape.
Due to the network’s invariance under $(W, A) \mapsto (GW, AG^{-1})$ for any $G \in \mathrm{GL}_H(\mathbb{R})$, critical points form manifolds entangling hidden neurons, and conserved quantities under gradient flow couple their dynamics \cite{du2018algorithmic, kunin2020neural}.
%
%
To resolve this, we can reinterpret AGF in terms of evolving dormant and active \emph{orthogonal basis vectors} instead of neurons.
Each basis vector $(\tilde{a_i}, \tilde{w}_i) \in \mathbb{R}^{c + d}$ forms a rank-1 map $\tilde{a}_i\tilde{w}_i^\intercal \in \mathbb{R}^{c \times d}$ such that the function computed by the network is the sum over these rank-1 maps, $f(x;\Theta) = \sum_{i \in [\tilde{H}]} \tilde{a}_i\tilde{w}_i^\intercal x$, where $\tilde{H} = \min(c, H, d)$.
From this perspective, at each iteration of AGF the dormant set loses one basis vector, while the active set gains one.
See \cref{app:linear} for details.


\paragraph{Utility maximization.}
Let $\beta_\mathcal{A} = \sum_{i \in \mathcal{A}}\tilde{a}_i\tilde{w}_i^\intercal$ be the function computed by the active basis vectors and $m = |\mathcal{D}|$.
The total utility over the dormant basis vectors is $\mathcal{U} = \sum_{i = 1}^m \tilde{a}_i^\intercal \nabla_\beta \mathcal{L}(\beta_\mathcal{A}) \tilde{w}_i$.
Maximizing this sum while maintaining orthonormality between the basis vectors yields a Rayleigh quotient problem, whose solution aligns the dormant basis $(\tilde{a}_i, \tilde{w}_i)_{i=1}^m$ with the top $m$ singular modes $(u_i, v_i)_{i=1}^m$ of $\nabla_\beta \mathcal{L}(\beta_\mathcal{A})$.
Each aligned pair attains a maximum utility of $\mathcal{U}^*_i = \sigma_i / 2$, where $\sigma_i$ is the corresponding singular value.
The basis vector aligned with the top singular mode activates first, exiting the dormant set and joining the active one.

\paragraph{Cost minimization.}
The cost minimization step of AGF can be recast as a reduced rank regression problem, minimizing $\mathcal{L}(\beta)$ over $\beta \in \mathbb{R}^{c \times d}$ subject to $\mathrm{rank}(\beta) = k$, where $k = |\mathcal{A}|$.
As first shown in \citet{izenman1975reduced}, the global minimum for this problem is an orthogonal projection of the OLS solution $\beta_\mathcal{A}^* = \mathbf{P}_{U_k} \Sigma_{yx} \Sigma_{xx}^{-1}$, where $\mathbf{P}_{U_k} = U_k U_k^\intercal$ is the projection onto $U_k \in \mathbb{R}^{c \times k}$, the top $k$ eigenvectors of $\Sigma_{yx} \Sigma_{xx}^{-1} \Sigma_{yx}^\intercal$.
Using this solution, we can show that the next step of utility maximization will be computed with the matrix $\nabla_\beta \mathcal{L}(\beta_\mathcal{A}^*) = \mathbf{P}^\perp_{U_k}\Sigma_{yx}$ where $\mathbf{P}^\perp_{U_k} = \mathbf{I}_c - \mathbf{P}_{U_k}$.

\paragraph{AGF in action: greedy low-rank learning.}
In the vanishing initialization limit, AGF reduces to an iterative procedure that selects the top singular mode of $\nabla_\beta \mathcal{L}(\beta_\mathcal{A}^*)$, transfers this vector from the dormant to the active basis, then minimizes the loss with the active basis to update $\beta_\mathcal{A}^*$.
This procedure is identical to the Greedy Low-Rank Learning (GLRL) algorithm by \citet{li2020towards} that characterizes the gradient flow dynamics of two-layer matrix factorization problems with infinitesimal initialization.
Encouraged by this connection, we make the following conjecture:

\begin{conjecture}
    \label{conj:fully-connected-linear}
    In the initialization limit $\alpha \to 0$, a two-layer fully connected linear network trained by gradient flow, with $\eta = -\log(\alpha)$, learns one rank at a time leading to the sequence
    \begin{equation}
        \label{eq:sequence-fully-connected-linear}
        \textstyle
        f^{(k)}(x) = \sum_{i \le k} \mathbf{P}_{u_i}\Sigma_{yx}\Sigma_{xx}^{-1} x, \qquad
        \ell^{(k)} = \frac{1}{2}\sum_{i > k}\mu_i,
        \qquad
        \tau^{(k)} = \sum_{i \le k} \Delta \tau^{(i)},
    \end{equation}
    where $0 \le k \le \min(d, H, c)$, $(u_i, \mu_i)$ are the eigenvectors and eigenvalues of $\Sigma_{yx} \Sigma_{xx}^{-1} \Sigma_{yx}^\intercal$, $\sigma_{i}^{(k)}$ is the $i^{\mathrm{th}}$ singular value of $\mathbf{P}^\perp_{U_k}\Sigma_{yx}$, and $\Delta \tau^{(i)} = \left(1 - \sum_{j=0}^{i-2}\sigma_{i-j}^{(j)} \Delta \tau^{(j + 1)}\right) / \sigma_{1}^{(i-1)}$ with $\Delta \tau^{(0)} = 0$.
\end{conjecture}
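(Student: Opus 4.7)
The plan is to establish the conjecture by induction on $k$, using AGF in its orthogonal-basis reformulation. For the base case $k=0$, all basis vectors are dormant, the residual equals $y(x) = Bx$, the gradient $\nabla_\beta \mathcal{L}(\mathbf{0}) = -\Sigma_{yx}$, and $\ell^{(0)} = \tfrac{1}{2}\operatorname{tr}(\Sigma_{yx}\Sigma_{xx}^{-1}\Sigma_{yx}^\intercal) = \tfrac{1}{2}\sum_i \mu_i$, with $\tau^{(0)} = 0$. Suppose inductively that after $k$ rounds the active basis spans the top eigenspace $U_k$ of $M := \Sigma_{yx}\Sigma_{xx}^{-1}\Sigma_{yx}^\intercal$. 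By Izenman's rank-constrained regression theorem, the cost minimization step produces $\beta_\mathcal{A}^* = \mathbf{P}_{U_k}\Sigma_{yx}\Sigma_{xx}^{-1}$; substituting into $\mathcal{L}$ gives $\mathcal{L}(\beta_\mathcal{A}^*) = \tfrac{1}{2}\operatorname{tr}(\mathbf{P}^\perp_{U_k} M) = \tfrac{1}{2}\sum_{i > k}\mu_i$, matching $\ell^{(k)}$, and yields $\nabla_\beta \mathcal{L}(\beta_\mathcal{A}^*) = -\mathbf{P}^\perp_{U_k}\Sigma_{yx}$ as the driver of the next utility maximization.

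For the utility maximization step, I would invoke the Rayleigh quotient characterization: maximizing $\sum_{i=1}^{m} \tilde{a}_i^\intercal (\mathbf{P}^\perp_{U_k}\Sigma_{yx}) \tilde{w}_i$ over orthonormal pairs with $\|\tilde{a}_i\| = \|\tilde{w}_i\| = 1/\sqrt{2}$ is a classical Ky Fan type problem whose optimum is attained by aligning $(\tilde{a}_i, \tilde{w}_i)$ with the top $m$ singular modes $(u_i^{(k)}, v_i^{(k)})$ of $\mathbf{P}^\perp_{U_k}\Sigma_{yx}$, yielding per-mode maxima $\mathcal{U}_i^* = \sigma_i^{(k)}/2$. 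Since $\kappa = 2$, the rate of accumulated utility at the maximizer is $\kappa\mathcal{U}_i^* = \sigma_i^{(k)}$. The basis vector aligned with $\sigma_1^{(k)}$ activates first; its inclusion extends $U_k$ to $U_{k+1}$, closing the induction on the span.

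The jump-time recursion follows by bookkeeping accumulated utility across rounds. With the rescaling $\eta_\alpha = -\log(\alpha)$, the threshold $c_i/\eta_\alpha$ in \eqref{eq:jump-time-neuron} tends to $1$ as $\alpha \to 0$. Consider the basis vector destined to activate at round $i$: during each earlier round $j \in \{0,\dots,i-2\}$ it sits aligned with the $(i-j)$-th singular mode of $\mathbf{P}^\perp_{U_j}\Sigma_{yx}$ and accumulates utility at rate $\sigma_{i-j}^{(j)}$ for duration $\Delta\tau^{(j+1)}$. The residual gap to threshold, $1 - \sum_{j=0}^{i-2}\sigma_{i-j}^{(j)}\Delta\tau^{(j+1)}$, must then be closed during round $i-1$ at rate $\sigma_1^{(i-1)}$, giving exactly the stated recursion. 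Summing the $\Delta\tau^{(i)}$ yields $\tau^{(k)}$.

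The main obstacle is justifying instantaneous realignment in the non-commuting case (Figure 2(b)). When $\Sigma_{xx}$ and $\Sigma_{yx}^\intercal\Sigma_{yx}$ do not commute, the right singular vectors of $\mathbf{P}^\perp_{U_j}\Sigma_{yx}$ do not extend those of $\mathbf{P}^\perp_{U_{j+1}}\Sigma_{yx}$, so at the start of each round the remaining dormant basis must rotate nontrivially on a Stiefel manifold before producing the clean $\sigma_{i-j}^{(j)}$ accumulation rate assumed above. Showing that this rotation completes on a timescale strictly faster than the jump time — and hence contributes a negligible correction to $\mathcal{S}_i$ in the limit $\alpha \to 0$ — requires quantitative control of the projected gradient flow near the Rayleigh maximizer, presumably via a uniform spectral gap between successive $\sigma_i^{(k)}$ together with a Łojasiewicz-type estimate. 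This is the reason the statement is stated as a conjecture; in the commuting case the required gap is inherited directly from the ambient SVD and the obstacle disappears, which is why Figure 2(a) is easier to match than Figure 2(b).
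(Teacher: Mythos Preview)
Your derivation matches the paper's own heuristic argument essentially step for step: induction on $k$, Izenman's reduced-rank regression for cost minimization, the Rayleigh/Ky Fan characterization for utility maximization over the Stiefel manifold, and the same accumulated-utility bookkeeping for the jump-time recursion.

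One refinement on where the obstacle actually lies: the paper establishes separately (via a scaling argument, since $\kappa = 2$ here) that directional alignment is \emph{instantaneous} in accelerated time as $\alpha \to 0$, so the rotation-timescale issue you emphasize is not the gap. The open problem the paper pinpoints is rather the \emph{identification} question your bookkeeping silently assumes: after the (instantaneous) realignment at the start of round $j+1$, does the dormant basis vector that was sitting on the $m$-th singular mode of $\mathbf{P}^\perp_{U_j}\Sigma_{yx}$ necessarily land on the $(m-1)$-th mode of $\mathbf{P}^\perp_{U_{j+1}}\Sigma_{yx}$, so that the ordering of accumulated utilities is preserved across rounds? This is a question about basins of attraction of the Stiefel gradient flow starting from the old singular frame, not about its convergence speed. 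The paper notes that Poincar\'e separation interlaces the singular values but says nothing about how the singular \emph{vectors} move, and names this as the reason the statement is stated as a conjecture rather than a theorem.
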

When $\Sigma_{xx}$ commutes with $\Sigma_{yx}^\intercal \Sigma_{yx}$, \cref{conj:fully-connected-linear} recovers the sequence originally proposed by \citet{gidel2019implicit}.
%
%
\cref{fig:linear-loss} empirically supports this conjecture.
See \cref{app:linear} for details.

\paragraph{Attention-only linear transformer.}

\begin{wrapfigure}{r}{0.5\textwidth}
    \vspace{-12pt}
    \centering
    \includegraphics[width=0.9\linewidth]{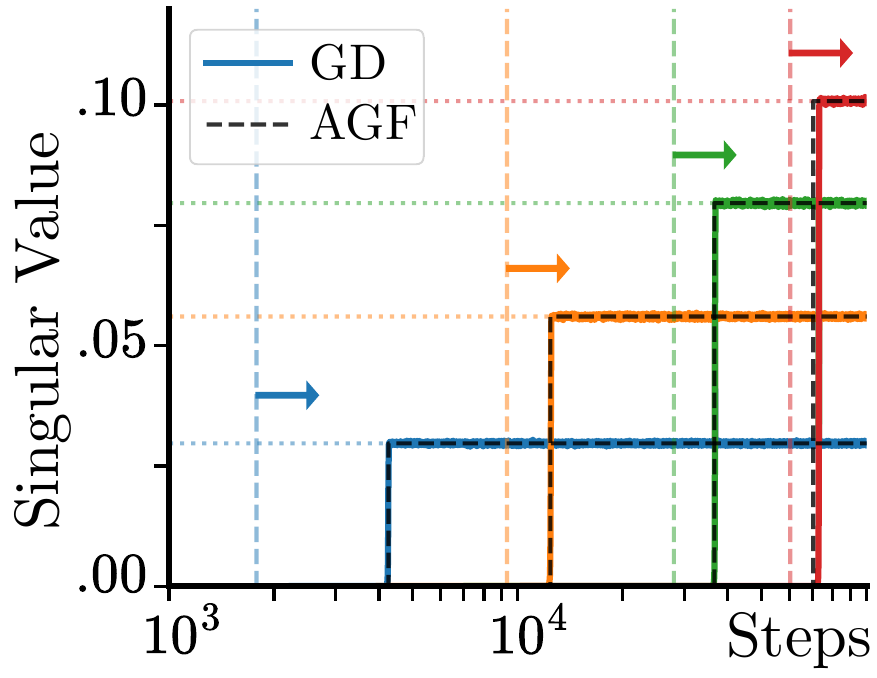}
    \caption{\textbf{Stepwise principal component regression.}
    Training a linear transformer to learn linear regression in context.
    %
    We show the evolution of singular values of $\sum_{i=1}^H V_i K_i Q_i^\intercal$.
    Horizontal lines show theoretical $A_k$ and vertical dashed lines show lower bounds for the jump time from \cref{eq:in-context-learning-sequence} with $l = k-1$.
    Dashed black lines are numerical AGF predictions.
    %
    }
    \label{fig:in-context-learning}
    \vspace{-21pt}
\end{wrapfigure}

Pretrained large language models can learn new tasks from only a few examples in context, without explicit fine-tuning \cite{brown2020language}. 
To understand the emergence of this \emph{in-context learning} ability, previous empirical~\cite{garg2022can,li2023transformers, raventos2023pretraining} and theoretical works~\cite{akyurek2022learning,wu2023many,ahn2022learning,von2023transformers,ahn2023transformers,zhang2024trained} have examined how transformers learn to perform linear regression in context.
Notably, \citet{zhang2025training} showed that an attention-only linear transformer learns to implement principal component regression sequentially, with each learned component corresponding to a drop in the loss.
We show that AGF recovers their analysis (see \cref{fig:in-context-learning}).

We consider a attention-only linear transformer $f(X;\Theta)=X+\sum_{i=1}^H W_{V,i}XX^\intercal W_{K,i} W_{Q,i}^\intercal X$, where $X$ is the input sequence, $W_V\in\mathbb{R}^{H\times(d+1)\times(d+1)}$, $W_K,W_Q\in\mathbb{R}^{H\times(d+1)\times R}$ represent the value, key, and query matrices respectively, and  $H$ denotes the number of attention heads.
While this network uses linear activations, it is cubic in its input and parameters.
Also, when the rank $R=1$, each attention head behaves like a homogeneous neuron with $\kappa = 3$, making it compatible with the AGF framework.

We consider the linear regression task with input $X_i=\left(\begin{smallmatrix} x_i \\ y_i \end{smallmatrix}\right)$ for $i\le N$, where $x_i\sim\mathcal{N}(0,\Sigma_{xx})$, and $y_i =\beta^\intercal x_i$ with $\beta\sim\mathcal{N}(0,\mathbf{I}_d)$.
The input covariance $\Sigma_{xx}\in\mathbb{R}^{d\times d}$ has eigenvalues $\lambda_1\ge \dots \ge \lambda_d>0$ with corresponding eigenvectors $v_1,\dots,v_d$.
The final input token is $X_{N+1}=\left(\begin{smallmatrix} x_{N+1} \\ 0 \end{smallmatrix}\right)$. 
The network is trained by MSE loss to predict $y_{N+1}$ given the entire sequence $X$, where the model prediction is taken to be $\hat{y}_{N+1}=f(X;\Theta)_{d+1,N+1}$.
Following~\citet{zhang2025training}, we initialize all parameters to zero except for some slices, which we denote by $V\in\mathbb{R}^{H}, Q, K \in\mathbb{R}^{H\times d}$, which are initialized from $\mathcal{N}(0, \alpha)$.
The parameters initialized at zero will remain at zero throughout training \cite{zhang2025training}, and the model prediction reduces to $\hat{y}_{N+1}=\sum_{h=1}^{H}V_h \sum_{n=1}^N y_nx_n^\intercal {K}_h {Q}^\intercal_h x_{N+1}$.
%
We defer derivations to \cref{app:transformer}, and briefly outline how AGF predicts the saddle-to-saddle dynamics.

\paragraph{Utility maximization.} 
%
At the $k$\textsuperscript{th} iteration of AGF, the utility of a dormant attention head is $\mathcal{U}_i = N V_i Q_i^\intercal (\Sigma_{xx}^2 - \sum_{h=1}^{k-1}  \lambda_h^2  v_h v_h^\intercal)K_i$ which is maximized over normalized parameters when $\bar{V}_i = \pm1 /\sqrt{3}$ and $\bar{Q}_{i}, {\bar{K}}_{i}= \pm v_k / \sqrt{3}$, where the sign is chosen such that the maximum utility is $\bar{\mathcal{U}}^{*} = N\lambda_k^2 / 3\sqrt{3}$. 
%
In other words, utility maximization encourages dormant attention heads to align their key and query vectors with the dominant principal component of the input covariance not yet captured by any active head.
This creates a race condition among dormant heads, where the first to reach the activation threshold, measured by their accumulated utility, becomes active and learns the corresponding component.
Assuming instantaneous alignment of the key and query vectors, we can lower bound the jump time for the next head to activate, as shown in \cref{eq:in-context-learning-sequence}.
%

\paragraph{Cost minimization.} 
During cost minimization, because $v_1,\dots,v_d$ form an orthonormal basis, the updates for each attention head are decoupled. Thus, we only need to focus on how the magnitude of the newly active head changes.
Specifically, we determine the magnitude $A_k$ of the newly learned function component, given by $f_k(X;\theta_k)_{d+1,N+1}=A_k\sum_{n=1}^N y_nx_n^\intercal v_k v_k^\intercal x_{N+1}$. 
Solving $\frac{\partial\mathcal{L}(A_k)}{\partial A_k}=0$, we find that the optimal magnitude is $A_k=\frac{1}{\text{tr}\Sigma_{xx} + (N+1)\lambda_k}$,
from which we can derive the expression for the prediction and loss level after the  $k$\textsuperscript{th} iteration of AGF, as shown in \cref{eq:in-context-learning-sequence}.

\paragraph{AGF in action: principal component regression.}
At each iteration, the network projects the input onto a newly selected principal component of $\Sigma_{xx}$ and fits a linear regressor along that direction.
Let $\mu^{(l)} = \max_{i\in \mathcal{D}}\|\theta_i(\tau^{(l)})\|$ for $l < k$ and $\eta_\alpha = \alpha^{-1}$.
This process yields the sequence, 
\begin{equation}
    \label{eq:in-context-learning-sequence}
    \textstyle
    \hat{y}^{(k)}_{N+1}=\sum_{i,n=1}^{k,N} \frac{y_nx_n^\intercal v_i v_i^\intercal x_{N+1}}{\text{tr}\Sigma_{xx} + (N+1)\lambda_i},\;
    \ell^{(k)}=\frac{\text{tr}\Sigma_{xx}}{2} - \sum_{i=1}^k \frac{N\lambda_i / 2}{\frac{\text{tr}\Sigma_{xx}}{\lambda_i}+N+1},\;
    \tau^{(k)} \gtrsim \tau^{(l)}+\frac{\eta_\alpha^{-1}}{\mu^{(l)}}\frac{\sqrt{3}}{N\lambda_k^2},
\end{equation}
which recovers the results derived in \citet{zhang2025training} and provides an excellent approximation to the gradient flow dynamics (see \cref{fig:in-context-learning}).

\vspace{-5pt}
\section{AGF Predicts the Emergence of Fourier Features in Modular Addition}
\label{sec:modular-addition}

In previous sections, we showed how AGF unifies prior analyses of feature learning in linear networks.
We now consider a novel nonlinear setting: a two-layer quadratic network trained on modular addition.
Originally proposed as a minimal setting to explore emergent behavior \cite{power2022grokking}, modular addition has since become a foundational setup for mechanistic interpretability. 
Prior work has shown that networks trained on this task develop internal Fourier representations and use trigonometric identities to implement addition as rotations on the circle \cite{nanda2023progress, gromov2023grokking, zhong2024clock}.
Similar Fourier features have been observed in networks trained on group composition tasks \cite{chughtai2023toy}, and in large pre-trained language models performing arithmetic \cite{zhou2024pre, kantamneni2025language}.
Despite extensive empirical evidence for the universality of Fourier features in deep learning, a precise theoretical explanation of their emergence remains open. 
Recent work has linked this phenomenon to the average gradient outer product framework \cite{mallinar2024emergence}, the relationship between symmetry and irreducible representations \cite{marchetti2024harmonics}, implicit maximum margin biases of gradient descent \cite{morwanifeature}, and the algebraic structure of the solution space coupled with a simplicity bias \cite{tian2024composing}.
Here, we leverage AGF to unveil saddle-to-saddle dynamics, where each saddle corresponds to the emergence of a Fourier feature in the network (see \cref{fig:modular-addition}).

We consider a setting similar to \cite{gromov2023grokking,morwanifeature,tian2024composing}, but with more general input encodings.
Given $p \in \mathbb{N}$, the ground-truth function is $y \colon \mathbb{Z}/p \times \mathbb{Z}/p \to \mathbb{Z}/p $, $(a,b) \mapsto a + b \   \textnormal{mod} \  p$, where $\mathbb{Z}/p = \{0, \ldots, p-1 \}$ is the (additive) Abelian group of integers modulo $p$.
Given a vector $x \in \mathbb{R}^p$, we consider the encoding $\mathbb{Z} / p \rightarrow \mathbb{R}^p$, $a \mapsto a \cdot x$, where $\cdot$ denotes the action of the cyclic permutation group of order $p$ on $x$ (which cyclically permutes the components of $x$). 
The modular addition task with this encoding consists of mapping $(a \cdot x, b \cdot x)$ to $(a + b) \cdot x$. 
For $x = e_0$, our encoding coincides with the one-hot encoding $a \mapsto e_a$ studied in prior work. 
We consider a two-layer neural network with quadratic activation function $\sigma(z) = z^2$, where each neuron is parameterized by $\theta_i = (u_i, v_i, w_i) \in \mathbb{R}^{3p}$ and computes the function $f_i(a \cdot x, b \cdot x; \theta_i) = (\langle u_i, a \cdot x\rangle + \langle v_i, b \cdot x\rangle)^2 w_i$.
The network is composed of the sum of $H$ neurons and is trained over the entire dataset of $p^2$ pairs $(a, b)$. 
We make some technical assumptions in order to simplify the analysis. First, since the network contains no bias term, we assume that the data is centered, i.e., we subtract $(\langle x, \mathbf{1} \rangle / p) \mathbf{1}$ from $x$. 
Second, in order to encourage sequential learning of frequencies, we assume that all the non-conjugate Fourier coefficients of $x$ have distinct magnitude: $|\hat{x}[k]| \neq |\hat{x}[k'] |$ for $k \neq \pm k'  \pmod{p}$. 
Third, in order to avoid dealing with the Nyquist frequency $k = p/2$, we assume that $p$ is odd. 
We now describe how AGF applies to this setting (see \cref{app:modular-addition} for proofs). 
Here, $\hat{\cdot}$ denotes the Discrete Fourier Transform (DFT).

\begin{figure}[t]
    \centering
    \includegraphics[width=\linewidth]{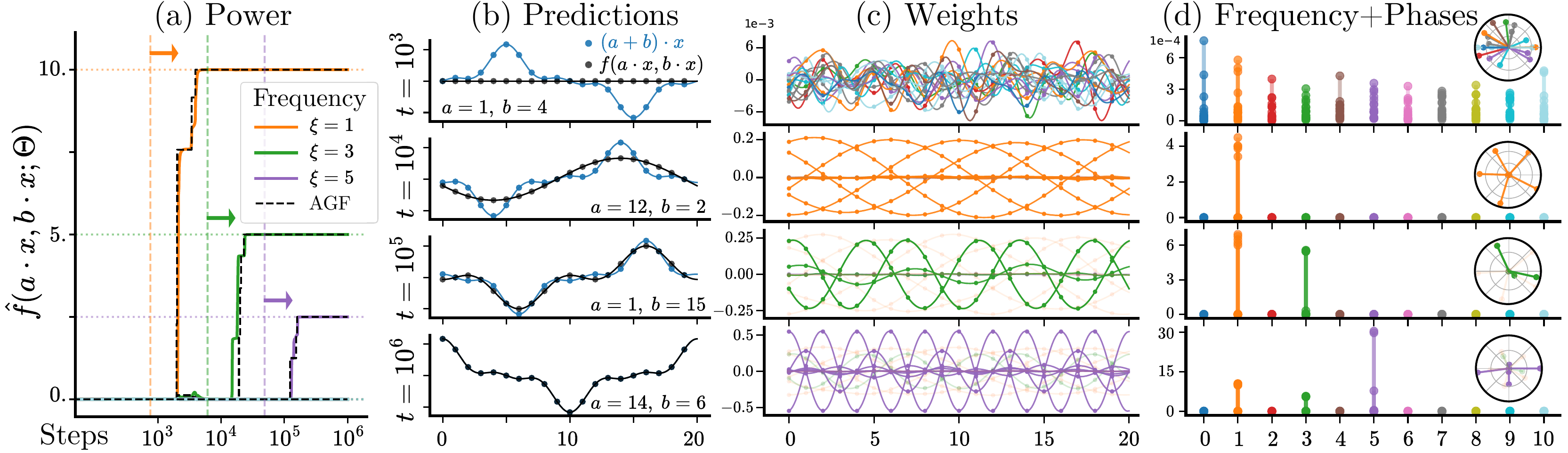}
    \caption{\textbf{Stepwise Fourier decomposition.}  
    We train a two-layer quadratic network on a modular addition task with $p = 20$, using a template vector $x \in \mathbb{R}^p$ composed of three cosine waves: $\hat{x}[1] = 10$, $\hat{x}[3] = 5$, and $\hat{x}[5] = 2.5$.
    (a) Output power spectrum over time. The network learns the task by sequentially decomposing $x$ into its Fourier components, acquiring dominant frequencies first. 
    Colored solid lines are gradient descent, black dashed line is AGF run numerically from the same initialization.
    (b) Model outputs on selected inputs at four training steps, showing progressively accurate reconstructions of the template.
    (c) Output weight vector $w_i$ for all $H = 18$ neurons and (d) their frequency spectra and dominant phase. Neurons are color-coded by dominant frequency. As predicted by the theory, the neurons group by frequency, while distributing their phase shifts.
    }
    \label{fig:modular-addition}
    \vspace{-15pt}
\end{figure}

\paragraph{Utility maximization.}
First, we show how the initial utility can be expressed entirely in the frequency domain of the parameters $\hat{\Theta}$.
For a dormant neuron parameterized by $\theta = (u, v, w)$, the initial utility is $\mathcal{U} = (2 / p^3) \sum_{k\in [p] \backslash \{0\}}  \left|\hat{x}[k] \right|^2 \hat{u}[k] \hat{v}[k] \overline{\hat{w}[k]}\overline{\hat{x}[k]}$.
%
%
We then argue that the unit vectors maximizing $\mathcal{U}$ align with the dominant harmonic of $\hat{x}$: 
\begin{theorem}
    Let $\xi$ be the frequency that maximizes $|\hat{x}[k]|$, $k = 1, \ldots, p-1$, and denote by $s_x$ the phase of $\hat{x}[\xi]$. Then the unit vectors $\theta_* = (u_*, v_*, w_*)$ maximizing the utility function $\mathcal{U}$ are:
    \begin{equation}
        u_*[a] = A_p \cos\left(\omega_\xi a + s_u\right),\quad 
        v_*[b] = A_p\cos\left(\omega_\xi b + s_v\right),\quad
        w_*[c] = A_p \cos\left(\omega_\xi c + s_w\right),
    \end{equation}
    where \( a, b, c \in [p] \) are indices, \( s_u, s_v, s_w  \in \mathbb{R} \) are phase shifts satisfying \( s_u + s_v \equiv s_w + s_x \pmod{2\pi} \), $A_p = \sqrt{2/(3p)}$ is the amplitude, and $\omega_\xi = 2\pi \xi/p$ is the frequency. Moreover, $\mathcal{U}$ has no other local maxima and achieves a maximal value of $\bar{\mathcal{U}}^* = \sqrt{2 / (27 p^3)}|\hat{x}[\xi]|^3 $. 
\end{theorem}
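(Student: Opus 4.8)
The plan is to reduce the optimization of $\mathcal{U}$ over the unit sphere in $\mathbb{R}^{3p}$ to a pointwise optimization in the Fourier domain, exploiting the fact that $\mathcal{U}$ decouples across frequencies $k \in [p]\setminus\{0\}$. Starting from the stated expression $\mathcal{U} = (2/p^3)\sum_{k} |\hat{x}[k]|^2\, \hat{u}[k]\hat{v}[k]\overline{\hat{w}[k]}\,\overline{\hat{x}[k]}$, I would first use the Hermitian symmetry $\hat{u}[-k] = \overline{\hat{u}[k]}$ (valid since $u$ is real, and likewise for $v,w,x$) to pair the $k$ and $p-k$ terms and rewrite $\mathcal{U}$ as a sum over $k = 1,\dots,\lfloor p/2\rfloor$ of $(4/p^3)|\hat{x}[k]|^2 \operatorname{Re}\!\left(\hat{u}[k]\hat{v}[k]\overline{\hat{w}[k]}\,\overline{\hat{x}[k]}\right)$ (with the obvious care at $k = p/2$ when $p$ is even). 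Writing each complex coefficient in polar form $\hat{u}[k] = \rho^u_k e^{i\phi^u_k}$ etc., the real part becomes $\rho^u_k\rho^v_k\rho^w_k\rho^x_k \cos(\phi^u_k + \phi^v_k - \phi^w_k - \phi^x_k)$, so the phases enter only through this cosine and are optimally chosen to make it equal to $1$, i.e. $\phi^u_k + \phi^v_k \equiv \phi^w_k + \phi^x_k \pmod{2\pi}$ for every $k$ carrying nonzero mass.

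The remaining problem is to maximize $\sum_k a_k\, \rho^u_k\rho^v_k\rho^w_k$ — where $a_k = (4/p^3)|\hat{x}[k]|^3 > 0$ on the support of $\hat{x}$ — subject to the Parseval constraints $\sum_k (\rho^u_k)^2 = \sum_k(\rho^v_k)^2 = \sum_k(\rho^w_k)^2 = p/2$ (the factor coming from the DFT normalization and the $k \leftrightarrow p-k$ folding; I would fix the exact constant by a direct Parseval computation). Here is where I expect the one genuinely non-routine step: showing that the optimal solution is supported on a single frequency. The trilinear objective $\sum_k a_k \rho^u_k \rho^v_k \rho^w_k$ with three independent $\ell^2$ constraints is not concave, so I cannot just invoke KKT for the global max; instead I would argue by a rearrangement/exchange argument. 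Given any feasible point with mass spread over $\ge 2$ frequencies, by AM–GM $\rho^u_k\rho^v_k\rho^w_k \le \tfrac13\big((\rho^u_k)^2\rho^v_k\rho^w_k/\rho^u_k \cdots\big)$ — more cleanly, I would first observe that at any local max we must have $\rho^u_k = \rho^v_k = \rho^w_k =: \rho_k$ for each $k$ (otherwise shifting mass among the three vectors at a fixed $k$ to equalize them strictly increases $\rho^u_k\rho^v_k\rho^w_k$ while preserving all three norm constraints, using $3\sqrt[3]{xyz} \le x+y+z$ with a mass-conserving redistribution), reducing the problem to maximizing $\sum_k a_k \rho_k^3$ subject to $\sum_k \rho_k^2 = p/2$. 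This scalar problem is now transparent: $\sum_k a_k \rho_k^3 \le (\max_k a_k)\sum_k \rho_k^3 \le (\max_k a_k)(\sum_k \rho_k^2)^{3/2}/\text{(nothing lost only if one term)}$ — concretely, among vectors on a fixed $\ell^2$ sphere the $\ell^3$-weighted sum $\sum a_k\rho_k^3$ is maximized by putting all mass on $\arg\max a_k$, which is exactly $\xi = \arg\max_k|\hat{x}[k]|$, and this also shows there are no spurious local maxima on the frequency support. Unfolding $\rho_\xi^2 = p/2$ back to the time domain gives amplitude $A_p = \sqrt{2/(3p)}$ and the cosine form $u_*[a] = A_p\cos(\omega_\xi a + s_u)$, with the phase relation $s_u + s_v \equiv s_w + s_x$ inherited from the phase-alignment condition above, and plugging back yields $\bar{\mathcal{U}}^* = (4/p^3)|\hat{x}[\xi]|^3 \cdot (p/2)^{3/2} \cdot (\text{const})$, which I would simplify to the claimed $\sqrt{2/(27p^3)}\,|\hat{x}[\xi]|^3$.

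The main obstacle, then, is the non-concavity of the trilinear objective: establishing that no local maximum splits its mass across frequencies, and that equalizing $\rho^u = \rho^v = \rho^w$ frequency-by-frequency is forced at any critical point. I would handle this via the explicit mass-transfer perturbations sketched above (they are strictly improving, hence rule out any non-single-frequency local max), together with a clean application of weighted power-mean inequalities; everything else — the Fourier rewriting, Parseval bookkeeping, and back-substitution of amplitudes and phases — is mechanical. A minor additional care point is the edge case $p$ even, where the Nyquist frequency $k = p/2$ has a real (not complex) Fourier coefficient and must be treated separately in the folding, but since the argument only needs the single dominant frequency $\xi$ this does not affect the conclusion.
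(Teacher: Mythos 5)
Your approach mirrors the paper's: pass to the Fourier domain via Plancherel, fold $k\leftrightarrow p-k$ via conjugate symmetry, split the problem into magnitudes and phases, align the phases to set each cosine to $1$, and then argue that the magnitude profile first equalizes $\rho^u_k=\rho^v_k=\rho^w_k$ and then concentrates on the dominant frequency $\xi$. Two points need fixing.

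First, the constraint is misstated: $\|\theta\|=1$ with $\theta=(u,v,w)\in\mathbb{R}^{3p}$ gives a \emph{single} constraint $\|u\|^2+\|v\|^2+\|w\|^2=1$, hence under Plancherel $\|\hat u\|^2+\|\hat v\|^2+\|\hat w\|^2=p$, not the three separate sphere constraints $\sum_k(\rho^u_k)^2=\sum_k(\rho^v_k)^2=\sum_k(\rho^w_k)^2=p/2$ you wrote. This is not cosmetic for your own argument: the mass-transfer step that equalizes $\rho^u_k,\rho^v_k,\rho^w_k$ at a fixed $k$ redistributes $\ell^2$-mass \emph{among the three vectors}, which is feasible only under the combined constraint; under three separate sphere constraints every such perturbation is immediately infeasible. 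The AM--GM inequality $3(\rho^u_k\rho^v_k\rho^w_k)^{2/3}\le(\rho^u_k)^2+(\rho^v_k)^2+(\rho^w_k)^2$ does exactly the job you want once the \emph{sum} is what is conserved. With the correct budget, after equalizing, the folded constraint reads $3\sum_k\rho_k^2=p/2$, so $\rho_\xi=\sqrt{p/6}$ and $\bar{\mathcal{U}}^*=(4/p^3)|\hat x[\xi]|^3(p/6)^{3/2}=\sqrt{2/(27p^3)}\,|\hat x[\xi]|^3$, pinning down the constant you deferred.

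Second (a weakness shared with the paper, which dispatches this step with ``easily seen''): the chain $\sum_k a_k\rho_k^3\le a_\xi\|\rho\|_3^3\le a_\xi\|\rho\|_2^3$ identifies the global maximizer and shows the mass cannot be spread across frequencies at the global maximum, but it does not establish that there are ``no spurious local maxima,'' as you assert. In fact a direct second-order check on the constraint sphere suggests that concentrating at any single frequency $k$ (equalized magnitudes, aligned phases) is a strict local maximum: perturbing toward a new frequency $k'$ removes $O(\epsilon^2)$ of budget at $k$, decreasing the objective by $O(\epsilon^2)$, while the new contribution at $k'$ is cubic, hence only $O(\epsilon^3)$. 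So the ``no other local maxima'' claim requires an argument beyond the inequality chain you give, and beyond what the paper's proof sketches; what your argument does establish is which concentrated configuration attains the global maximum.
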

Therefore, after utility minimization, neurons specialize to unique frequencies.
%
%
Now that we know the maximal utility, we can estimate the jump time by assuming instantaneous alignment as done in \cref{sec:unifying-analysis}, resulting in the lower bound shown in \cref{eq:ma-sequence}.
%

\paragraph{Cost minimization.}
To study cost minimization, we consider a regime in which a group $\mathcal{A}$ of $N \leq H$ neurons activates simultaneously, each aligned to the harmonic of frequency $\xi$.
While this is a technical simplification of AGF, which activates a single neuron per iteration, it allows us to analyze the collective behavior more directly.
We additionally assume that once aligned, the neurons in $\mathcal{A}$ remain aligned under the gradient flow (see \cref{sec:mod-add-cost-min-part1} for a discussion of possible ``resonant'' escape directions).
We then analyze cost minimization for a configuration $\Theta_\mathcal{A} = (u^i, v^i, w^i)_{i=1}^N$ of aligned neurons with arbitrary amplitudes and phase shifts, and prove the following result.

\begin{theorem}\label{thm:costminmainbody}
    The loss function satisfies the lower bound $\mathcal{L}(\Theta_\mathcal{A}) \geq \| x \|^2 / 2 -  \langle x, \mathbf{1}  \rangle^2 / (2p)  -  |\hat{x}[\xi]|^2 / p$, which is tight for $N \geq 6$. When the bound is achieved, the network learns the function  $f(a \cdot x , b \cdot x; \Theta_\mathcal{A}) =  (2 |\hat{x}[\xi]|  / p) \   (a + b) \cdot  \chi_\xi$,
    where $\chi_\xi[c] = \cos\left(2\pi \xi c /  p + s_x\right)$, which leads to the new utility function for the remaining dormant neurons: $\mathcal{U} = (2 / p^3) \sum_{k \in \left[p\right] \backslash \{0, \pm 
        \xi\}}|\hat{x}[k]|^2\hat{u}[\xi] \hat{v}[\xi]\overline{\hat{w}[\xi] \hat{x}[\xi]}$.
\end{theorem}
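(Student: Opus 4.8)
The plan is to work entirely in the Fourier domain, since the quadratic activation turns the squared inner products into convolutions that diagonalize under the DFT. First I would reduce the loss $\mathcal{L}(\Theta_\mathcal{A})$ for a configuration of $N$ aligned neurons—each of the form $u^i[a] = \alpha_i \cos(\omega_\xi a + s_u^i)$, and similarly for $v^i, w^i$—to an expression in the amplitudes $\alpha_i, \beta_i, \gamma_i$ and phases $s_u^i, s_v^i, s_w^i$. Expanding $f_i(a\cdot x, b\cdot x;\theta_i) = (\langle u^i, a\cdot x\rangle + \langle v^i, b\cdot x\rangle)^2 w^i$ and using that $\langle u^i, a\cdot x\rangle$ is itself a single cosine in $a$ at frequency $\xi$ (because $u^i$ is a pure harmonic and $x$ is centered), the square produces a DC term plus a harmonic at frequency $2\xi$. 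Summing over the group $\mathcal{A}$, the network output on $(a\cdot x, b\cdot x)$ lands in the span of $\{\mathbf{1}, \chi_\xi, \chi_{2\xi}, \dots\}$ evaluated at $a+b$, with coefficients that are trigonometric polynomials in the phases. Meanwhile the target $(a+b)\cdot x$ has Fourier content at every frequency $k$ with weight $\hat{x}[k]$. Because the only frequency the aligned neurons can produce at the $(a+b)$-slot is $\pm\xi$ (the cross term $\langle u^i,a\cdot x\rangle\langle v^i,b\cdot x\rangle$ contributes $\cos(\omega_\xi(a+b)+\cdots)$), the best the network can do is perfectly cancel the $\hat{x}[\xi]$ (and $\hat{x}[-\xi] = \overline{\hat{x}[\xi]}$) component of the residual, leaving all other frequencies untouched. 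Parseval then gives exactly $\mathcal{L} \geq \|x\|^2/2 - \langle x,\mathbf 1\rangle^2/(2p) - |\hat{x}[\xi]|^2/p$, the second term accounting for the removed DC component under centering.

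The next step is to show the bound is achieved for $N \geq 6$ and to identify the minimizer. Here I would set up the constraint that the $N$ neurons collectively reproduce the target's $\xi$-component: this amounts to requiring $\sum_{i} \alpha_i\beta_i\gamma_i e^{\iota(s_u^i + s_v^i - s_w^i)}$ (and the conjugate relation) to equal the prescribed value $2|\hat x[\xi]|/p$ in the right phase, \emph{and} that the unwanted harmonics—the DC contributions $\propto \sum_i(\alpha_i^2 + \beta_i^2)\gamma_i$ in the $w$-direction and the $2\xi$ content—all cancel. Counting real degrees of freedom (three amplitudes and three phases per neuron, modulo the two-sphere normalization or overall scaling handled by cost minimization) against the number of scalar constraints needed to kill the spurious frequencies shows that $N = 6$ suffices generically; for smaller $N$ one cannot simultaneously satisfy the reproduction constraint and annihilate the parasitic terms. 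I would exhibit an explicit symmetric solution—e.g., neurons with equal amplitudes and phases $s_w^i$ spread uniformly around the circle so that $\sum_i \cos(2\omega_\xi c + 2s_w^i) $-type sums vanish—to certify tightness, which also yields the clean form $f(a\cdot x,b\cdot x;\Theta_\mathcal{A}) = (2|\hat x[\xi]|/p)(a+b)\cdot\chi_\xi$ and matches the phase-distribution phenomenon in \cref{fig:modular-addition}(d).

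Finally, computing the new residual $r = y - f(\cdot;\Theta_\mathcal{A})$ is immediate: it is the original target with its $\pm\xi$ Fourier mode zeroed out, i.e. $\widehat{r}$ agrees with $\widehat{x}$ on every frequency except $\pm\xi$. Plugging this residual into the initial-utility formula $\mathcal{U} = (2/p^3)\sum_{k\in[p]\setminus\{0\}} |\hat{x}[k]|^2 \hat u[k]\hat v[k]\overline{\hat w[k]}\,\overline{\hat x[k]}$ derived in the utility-maximization paragraph—but now with $\hat{x}[k]$ replaced by the residual's spectrum—kills the $k = \pm\xi$ terms, giving exactly $\mathcal{U} = (2/p^3)\sum_{k\in[p]\setminus\{0,\pm\xi\}} |\hat{x}[k]|^2\hat u[\xi]\hat v[\xi]\overline{\hat w[\xi]\hat x[\xi]}$ as claimed (the statement's indexing of the $\hat u,\hat v,\hat w$ at $\xi$ reflecting that the next dormant neuron will align to whichever frequency now dominates the truncated spectrum). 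The main obstacle I anticipate is the tightness argument at $N \geq 6$: carefully enumerating which harmonics the aligned ansatz can generate at the output, setting up the exact system of cancellation constraints, and verifying that six neurons give enough freedom while fewer do not—this is the one place where a genuine (if finite) computation, rather than a soft Parseval/orthogonality argument, seems unavoidable.
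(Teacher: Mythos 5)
Your lower-bound argument takes a genuinely different route from the paper's. You argue via Parseval: the target $(a+b)\cdot x$ lives on the anti-diagonal frequency lattice $(k,k,-k)$ in the three indices $(a,b,c)$, while the aligned neurons produce output supported at frequency combinations involving only $\{0,\pm\xi,\pm2\xi\}$ in $(a,b)$ and $\pm\xi$ in $c$; the two supports meet only at $(\pm\xi,\pm\xi,\mp\xi)$, so orthogonality of distinct frequency bins forces the loss to be at least the spectral mass of the unreachable modes. The paper instead decomposes the loss as $\mathcal{L}(\Theta) = \mathcal{C}(\Theta) - \mathcal{U}(\Theta;y) + \textnormal{const}$, expands $\mathcal{C}$ into a trigonometric polynomial in the amplitudes and phases (\cref{lemma:MA-cost-function-as-cosines}), and completes a square using the identity $\sum_{i,j} C_iC_j\cos(\alpha_i-\alpha_j) = (\sum_i C_i\cos\alpha_i)^2 + (\sum_i C_i\sin\alpha_i)^2$; the lower bound then falls out of a scalar quadratic in the one remaining nonnegative cross-term, and the long list of equality conditions is read off automatically. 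Your approach is conceptually cleaner and generalizes more easily, but to make it rigorous you would need to actually set up the three-dimensional DFT, verify the network's spectral support and its orthogonality to the target's off-$\xi$ content, and carry the normalizations through Parseval carefully — work that is roughly equivalent in volume to the paper's trigonometric expansion, just organized differently. The paper's approach has the advantage of handing you the equality conditions explicitly, which is what makes the tight construction for $N\ge 6$ mechanical rather than a degree-of-freedom heuristic as in your sketch. Two smaller points: (i) your tightness argument by dimension counting is only heuristic — what certifies $N\ge 6$ is the explicit phase-spreading construction, which both you and the paper invoke, but you should make it the primary argument rather than the fallback; (ii) in the utility update, the formula $\mathcal{U} = (2/p^3)\sum_k |\hat x[k]|^2 \hat u[k]\hat v[k]\overline{\hat w[k]\hat x[k]}$ has the factor $|\hat x[k]|^2$ coming from the \emph{encoding} of the inputs and the factor $\overline{\hat x[k]}$ coming from the \emph{target}, and only the target-side factor is replaced by the residual's spectrum; your phrase ``$\hat x[k]$ replaced by the residual's spectrum'' is ambiguous and, read literally, would also kill the encoding factor. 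You reach the right final formula anyway, but the derivation needs to separate those two roles of $\hat x$, as the paper does by computing $\mathcal{U}(\theta;r) = \mathcal{U}(\theta;y) - \Delta(\theta)/p^2$ explicitly.
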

Put simply, the updated utility function after the first iteration of AGF has the same form as the old one, but with the dominant frequency $\xi$ of $x$ removed (together with its conjugate frequency $-\xi$). 
Therefore, at the second iteration of AGF, another group of neurons aligns with the harmonic of the second dominant frequency of $x$. 
Lastly, we argue via orthogonality that the groups of neurons aligned with different harmonics optimize their loss functions independently, implying that at each iteration of AGF, another group of neurons learns a new Fourier feature of $x$.
%

\paragraph{AGF in action: greedy Fourier decomposition.}
Taken together, we have shown that a two-layer quadratic network with hidden dimension $H \ge 3p$ trained to perform modular addition with a centered encoding vector $x \in \mathbb{R}^p$ sequentially learns frequencies $\xi_1, \xi_2, \dotsc$, ordered by decreasing $|\hat{x}[\xi]|$.
After learning $k \ge 0$ frequencies, the function, level, and next jump time are:
\begin{equation}
    \label{eq:ma-sequence}
    \textstyle
    f^{(k)}(a \cdot x , b \cdot x) =  \sum_{i = 1}^k\frac{2 |\hat{x}[\xi_i]|}{p} \   (a + b) \cdot  \chi_{\xi_i},
    \ell^{(k)} = \sum_{i > k}\frac{|\hat{x}[\xi_i]|^2}{p},
    \tau^{(k)} \gtrsim \tau^{(l)} + \frac{\eta_\alpha^{-1}}{\mu^{(l)}}\frac{\sqrt{3} p^{\frac{3}{2}}}{\sqrt{2}\left|\hat{x}[\xi_{k + 1}] \right|^3},
\end{equation}
%
where $\mu^{(l)} = \max_{i\in \mathcal{D}}\|\theta_i(\tau^{(l)})\|$ and learning rate $\eta_\alpha = \alpha^{-1}$.
Simulating AGF numerically, we find this sequence closely approximates the gradient flow dynamics (see \cref{fig:modular-addition}).

\paragraph{Extensions to other algebraic tasks.}
Our analysis of modular addition can naturally extend to a broader class of algebraic problems.
First, one can consider modular addition over multiple summands, defined by the map $y \colon (\mathbb{Z}/ p)^k \rightarrow \mathbb{Z}/ p $, $(a_i)_{i=1}^k \mapsto a_1 + \cdots + a_k \pmod{p}$.
Using a higher-degree activation function $\sigma(x) = x^k$, the arguments for utility maximization should carry over, while the cost minimization step might be more subtle. 
%
Second, one can replace modular integers with an arbitrary finite group and study a network trained to learn the group multiplication map. 
Non-commutative groups introduce technical challenges due to the involvement of higher-dimensional unitary representations in their Fourier analysis.
We leave a detailed analysis of these extensions to future work.

\vspace{-5pt}
\section{Discussion, Limitations, and Future Work}
\label{sec:discussion}

\begin{wrapfigure}{r}{0.5\textwidth}
    \vspace{-20pt}
    \centering
    \includegraphics[width=\linewidth]{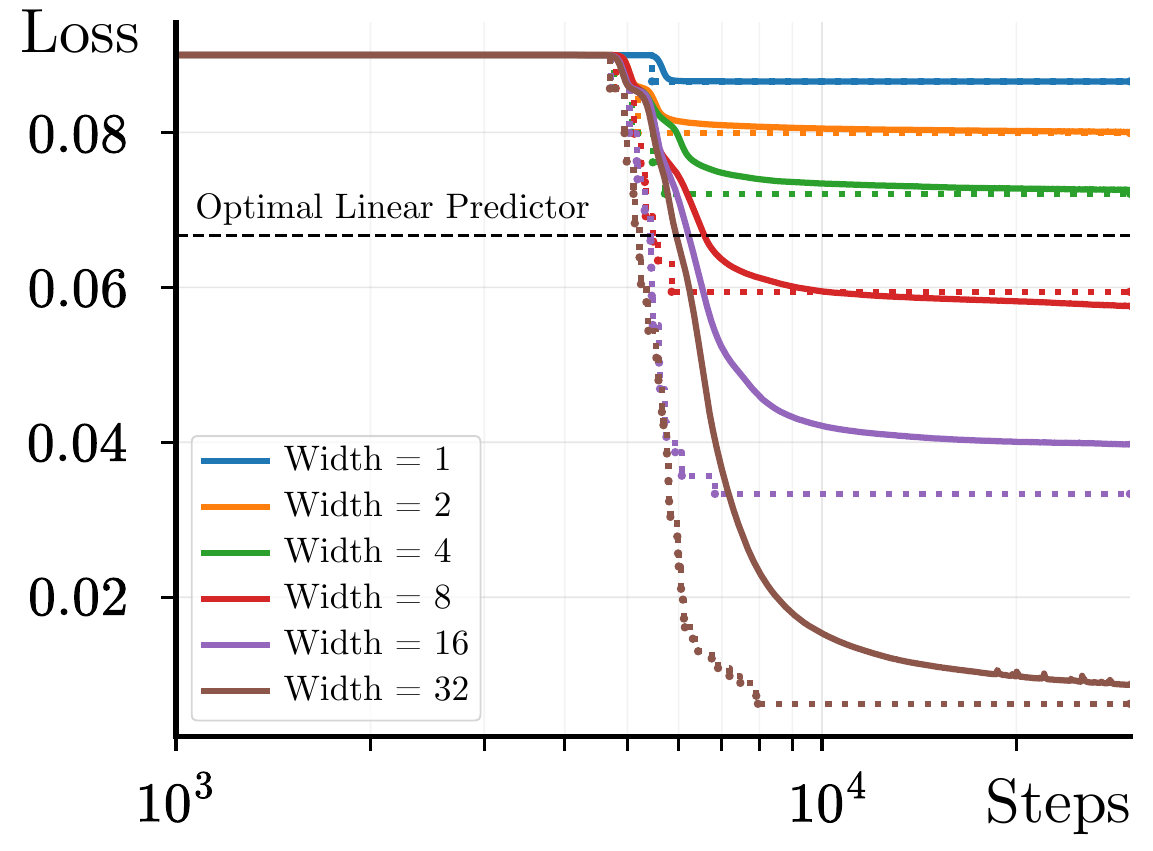}
    \caption{\textbf{From a staircase to a slide in a two-layer ReLU network.} Training loss for a two-layer ReLU network on a subset of CIFAR-10 under varying hidden widths from an extremely small initialization. Solid lines show gradient descent dynamics and dotted lines show the sequence produced by a numerical implementation of AGF from the \emph{same} initialization. At small widths, gradient descent loss curves are stepwise and AGF tracks the jumps closely. As width increases, cost-minimization phases overlap and multiple dormant neurons activate in close succession, producing a smoother \emph{slide}-like loss curve; accordingly, the correspondence with AGF weakens. All experimental details are available at a \href{https://github.com/danielkunin/alternating-gradient-flows}{Github code base}.}
    \label{fig:stairslide}
    \vspace{-10pt}
\end{wrapfigure}


In this work we introduced Alternating Gradient Flows (AGF), a framework modeling feature learning in two-layer neural networks as an alternating two-step process: maximizing a utility function over dormant neurons and minimizing a cost function over active ones.
We showed how AGF converges to gradient flow in diagonal linear networks, recovers prior saddle-to-saddle analyses in linear networks, and extends to quadratic networks trained on modular addition.
While these findings highlight AGF’s utility as an \emph{ansatz} for feature learning, it remains open whether its correspondence to gradient flow always holds in the vanishing initialization limit.
Proving such a conjecture is theoretically challenging.
Empirical validation is also difficult because it requires taking both the initialization scale and learning rate to zero.
Moreover, this conjecture may simply fail to hold in more general settings.
On natural data tasks, loss curves are often not visibly stepwise even at very small initialization scales~(see \cref{fig:stairslide}), suggesting there may be limitations to AGF.
That said, if many dormant neurons reach their activation thresholds in close succession, their cost-minimization phases could interleave, causing the aggregate loss to appear smooth.
Recent works reconciling the emergent capabilities of large language models with their neural scaling laws have made similar suggestions \cite{yoshida2019data, michaud2023quantization, nam2024exactly, ren2025emergence, arous2025learning}.
However, there are feature learning regimes in two-layer networks—such as those studied in \citet{saad1995dynamics,goldt2019dynamics,arnaboldi2023high}—that display saddle-to-saddle behavior due to population-level transitions not captured by AGF.
Connecting AGF to these multi-index models and teacher–student settings (see \cref{app:related-work} for a review) remains a key direction for future work.
Lastly, the central limitation of the framework is its focus on two-layer networks, leaving open how it might generalize to deeper and more realistic architectures.
Possible extensions include leveraging recent analyses of modularity in deep networks \cite{saxe2022neural,jarvis2025make,ameisen2025circuit} and adapting insights from the early alignment dynamics of deep networks near the origin \cite{kumar2024early,kumar2025towards}.
All together, our results suggest that AGF offers a promising step towards a deeper understanding of \emph{what} features neural networks learn and \emph{how}.

\clearpage

\section*{Acknowledgments and Disclosure of Funding}
We thank Clémentine Dominé, Jim Halverson, Boris Hanin, Christopher Hillar, Alex Infanger, Arthur Jacot, Mason Kamb, David Klindt, Florent Krzakala, Zhiyuan Li, Sophia Sanborn, Nati Srebro, and Yedi Zhang for helpful discussions.
Daniel thanks the Open Philanthropy AI Fellowship for support. 
Giovanni is partially supported by the Wallenberg AI, Autonomous Systems and Software Program (WASP) funded by the Knut and Alice Wallenberg Foundation.
Surya and Feng are partially supported by NSF grant 1845166.
Surya thanks the Simons Foundation, NTT Research, an NSF CAREER Award, and a Schmidt Science Polymath award for support.
Nina is partially supported by NSF grant 2313150 and the NSF grant 240158.
This work was supported in part by the U.S. Army Research Laboratory and the U.S. Army Research Office under Contract No. W911NF-20-1-0151.

\section*{Author Contributions}
Daniel, Nina, Giovanni, and James are primarily responsible for developing the AGF framework in \cref{sec:framework}.
Daniel is primarily responsible for the analysis of diagonal and fully-connected linear networks in \cref{sec:diagnn,sec:unifying-analysis}.
Feng is primarily responsible for the analysis of the attention-only linear transformer in \cref{sec:unifying-analysis}. 
Daniel and Giovanni are primarily responsible for the analysis of the modular addition task in \cref{sec:modular-addition}.
Dhruva is primarily responsible for an implementation of AGF used in the empirics.
All authors contributed to the writing of the manuscript.

\bibliographystyle{unsrtnat}
\bibliography{bibliography}

\newpage 
\appendix

\clearpage

\section{Additional Related Work}
\label{app:related-work}

\subsection{What is a feature?}
\label{app:feature}

Here, we briefly review definitions of a \emph{feature} used in the mechanistic interpretability and deep learning theory literature:
\begin{itemize}
    \item In mechanistic interpretability, terms such as \emph{feature}, \emph{circuit}, and \emph{motif} are used to describe network components—such as directions in activation space, subnetworks of weights, or recurring activation motifs—that correspond to human-interpretable concepts or functions within the model’s computation \cite{sharkey2025open}.
    While this literature has led to many insights into interpretability, these definitions generally lack precise mathematical formalization.
    \item In deep learning theory, features and feature learning are defined in terms of the \emph{neural tangent kernel (NTK)}.
    For a network $f(x;\theta)$, the NTK is given by the kernel  $K(x,x';\theta) = \langle\nabla_\theta f(x;\theta), \nabla_\theta f(x';\theta)\rangle$, formed by the Jacobian feature map $\nabla_\theta f(x;\theta)$. 
    Feature learning—also referred to as \emph{rich learning}—occurs when this kernel evolves during training.
    While this definition is mathematically precise, it offers limited interpretability.
\end{itemize}
In this work, we implicitly adopt the notion of features from the deep learning theory perspective, but consider learning settings often studied in mechanistic interpretability.

\subsection{Analysis of feature learning in two-layer networks}

Here, we discuss related theoretical approaches to studying feature learning in two-layer networks.
These approaches operate in distinct regimes—mean-field/infinite-width, teacher–student at high-dimensional (thermodynamic) limits, and single-step analyses—whereas AGF focuses on the \emph{vanishing initialization} limit for two-layer networks.
Our aim is to position AGF as \emph{complementary to}, not a replacement for, these approaches.

\textbf{Mean-field analysis.}
An early line of work in the study of feature learning analyzes the population dynamics of two-layer neural networks under the \emph{mean-field} parameterization, yielding analytic characterizations of training dynamics in the infinite-width limit \cite{mei2018mean,chizat2018global,sirignano2020mean,rotskoff2022trainability}.
These ideas have been extended to deep networks via the \emph{tensor program} framework, culminating in the \emph{maximal update parametrization} ($\mu\mathrm{P}$) \cite{yang2020feature,yang2022tensor}.
Related analyses have also been obtained through self-consistent dynamical mean-field theory \cite{bordelon2022self}.
Although feature learning occurs in this regime, these analyses primarily address how to preserve feature learning when changing network depth and width, rather than elucidating what specific features are learned or how they emerge.

\textbf{Teacher-student setup.}
The two-layer teacher–student framework provides a precise setting for studying feature learning by analyzing how a student network trained on supervised examples generated from a fixed teacher recovers the structure of the teacher. 
Foundational work by \citet{saad1995line,saad1995exact,saad1995dynamics} provided closed-form dynamics for online gradient descent in soft committee machines trained on i.i.d. Gaussian inputs in the high-dimensional limit, deriving deterministic differential equations governing a set of order parameters.
\citet{goldt2019dynamics,veiga2022phase} extended this dynamical theory to study the evolution of generalization error of one-pass SGD under an arbitrary learning rate, and a range of hidden layer widths.
Further developments have analyzed conditions for the absence of spurious minima \citep{sarao2020optimization}, extended the framework to capture more realistic data settings \citep{loureiro2021learning}, and provided a unified perspective connecting to the infinite hidden width limit of the mean-field regime \citep{arnaboldi2023high}.
Much of this analysis use techniques from statistical physics—including the replica method and approximate message passing—to characterize the generalization error and uncover sharp phase transitions and statistical-to-computational gaps in teacher–student models, discussed further in \citet{aubin2018committee, barbier2019optimal}.

\textbf{Multi-index models.}
Similar to the teacher–student setting, \emph{multi-index models} provide a structured framework in which a neural network is trained on data whose labels depend on a nonlinear function of low-dimensional projections of high-dimensional inputs. 
Feature learning in this setting is governed by the network’s ability to align with the relevant low-dimensional subspace.
This setup often gives rise to characteristic staircase-like training dynamics \cite{damian2022neural, bietti2023learning, simsek2024learning, berthier2024learning, abbe2021staircase, abbe2022merged, abbe2023sgd}.
Prior analyses have primarily focused on characterizing generalization and sample complexity, and recent work has established precise theoretical and computational thresholds for weak learnability in high-dimensional multi-index problems \cite{troiani2024fundamental,defilippis2025optimal}. 
Other studies have demonstrated that reusing batches in gradient descent can help networks overcome information bottlenecks, accelerating the learning of relevant projections \cite{dandi2024benefits,arnaboldi2024repetita}. 
Additionally, incorporating depth into these models has been shown to provide computational advantages for learning hierarchical multi-index functions \cite{dandi2025computational}.

\textbf{Single gradient step.}
A recent line of work has focused on understanding feature learning in neural networks after just a single gradient step. 
Empirically, it has been observed that with large initial learning rates, models can undergo “catapult” dynamics, where sudden spikes in the training loss accelerate feature learning and improve generalization \cite{lewkowycz2020large,zhu2023catapults}.
\citet{ba2022high} initiated the precise analysis of feature learning after a single gradient step by showing, in a high-dimensional single-index setting, how the first update to the first-layer weights can dramatically improve prediction performance beyond random features, depending on the learning rate scaling.
This single-step analysis has been extended to characterize the asymptotic behavior of the generalization error \cite{cui2024asymptotics}, establish an equivalence between the updated features and an isotropic spiked random feature model \cite{dandi2024random}, and investigate the effects of multiple gradient steps \cite{dandi2024two}.
Collectively, these works highlight that even a single gradient step can induce meaningful feature learning in two-layer networks.

\textbf{Distinct phases of alignment and growth in ReLU networks.}
\citet{maennel2018gradient} first observed a striking phenomenon in two-layer ReLU networks trained from small initializations: the first-layer weights concentrate along fixed directions determined by the training data, regardless of network width.
Subsequent studies of piecewise linear networks have refined this effect by imposing structural constraints on the training data, such as orthogonally separable \cite{phuong2020inductive, wang2022early, min2023early}, linearly separable and symmetric \cite{lyu2021gradient}, pair-wise orthonormal \cite{boursier2022gradient}, correlated \cite{chistikov2023learning}, small angle \cite{wang2024understanding}, binary and sparse \cite{glasgow2023sgd}.
Across these analyses, a consistent observation is that the learning dynamics involve distinct learning phases separated by timescales: a fast alignment phase driven by directional movement and a slow fitting phase driven by radial movement \cite{berthier2024learning, kunin2024get}.
Concurrently, \citet{bantzis2025saddle} use a closely related directional–radial decomposition to identify the optimal escape directions from the saddle at the origin in deep ReLU networks.

\clearpage
\section{Derivations in Alternating Gradient Flows Framework}
\label{app:derivations-framework}

In this section, we proivde details and motivations for the general framework of AGF. 
All code and experimental details are available at \href{https://github.com/danielkunin/alternating-gradient-flows}{\texttt{github.com/danielkunin/alternating-gradient-flows}}.


\subsection{Deriving dormant dynamics near a saddle point in terms of the utility function}

At any point in parameter space the gradient of the MSE loss with respect to the parameters of a neuron can be expressed as
\begin{equation}
    -\nabla_{\theta_i} \mathcal{L}(\Theta) = \nabla_{\theta_i}\mathbb{E}_{x} \left[\left\langle f_i(x;\theta_i), r(x;\Theta)\right\rangle\right],
\end{equation}
where $r(x;\Theta) = y(x) - \sum_i f_i(x;\theta_i)$ is a residual that depends on all neurons.
Given a partition of neurons into two sets, an active $\mathcal{A}$ and dormant $\mathcal{D}$ set, then the gradient for a dormant neuron $i \in \mathcal{D}$ can be decomposed into two terms:
\begin{equation}
    \text{\small $-\nabla_{\theta_i} \mathcal{L}(\Theta) =  \nabla_{\theta_i} \mathbb{E}_{x} \left[\left\langle f_i(x;\theta_i), y(x) -\sum_{j \in \mathcal{A}} f_j(x;\theta_j)\right\rangle\right] \! - \!\nabla_{\theta_i}\mathbb{E}_{x} \left[\left\langle f_i(x;\theta_i), \sum_{j \in \mathcal{D}} f_j(x;\theta_j)\right\rangle\right]$}.
\end{equation}
The first term is the gradient of the utility $\nabla_{\theta}\mathcal{U}_i$ as defined in \cref{eq:utility-defn}.
The second term only depends on the dormant neurons and is thus negligible when all the dormant neurons are small in norm.
Taken together, in the vicinity of a saddle point of the loss defined by an active and dormant set, the dormant neurons are driven approximately by gradient flow maximizing their utility:
\begin{equation}
    \frac{d}{dt}\theta_i \approx  \eta\nabla_{\theta}\mathcal{U}_i.
\end{equation}
This observation motivates the utility and our AGF ansatz for gradient flow.
To formally prove that AGF converges to gradient flow in the limit of vanishing initialization one might consider using tools like \emph{Hartman–Grobman theorem} to make this step rigorous.


\paragraph{Directional and radial expansion of the dynamics.}

Using this approximation, we can further decompose the dynamics of a dormant neuron near a saddle point into a directional and radial component:
\begin{equation}
    \label{eq:polar-dynamics}
    \frac{d}{dt} \frac{\theta_i}{\|\theta_i\|} = \eta\frac{\mathbf{P}^\perp_{\theta_i}}{\|\theta_i\|}\nabla_{\theta}\mathcal{U}_i
    ,\quad 
    \frac{d}{dt}\|\theta_i\| = \eta\frac{1}{\|\theta_i\|}\langle \theta_i, \nabla_{\theta}\mathcal{U}_i\rangle,
\end{equation}
where $\mathbf{P}^\perp_{\theta_i} = \left(\mathbf{I}_m - \frac{\theta_i\theta_i^\intercal}{\|\theta_i\|^2}\right)$ is a projection matrix. 
When $\sigma(\cdot)$ is a homogeneous function of degree $k$, then the utility is a homogeneous function of degree $\kappa = k + 1$.
Using Euler's homogeneous function theorem, \cref{eq:polar-dynamics} then simplifies to \cref{eq:dormant-dynamics}.
When $\sigma(\cdot)$ is not homogeneous, we can Taylor expand the utility around the origin, such that the utility coincides approximately with a homogeneous function of degree $\kappa$, where $\kappa$ is the leading order of the Taylor expansion.

\paragraph{Dynamics of normalized utility.}
An interesting observation, that we will use in \cref{app:diagonal-linear}, is that the normalized utility function follows a Riccati-like differential equation
\begin{equation}
    \label{eq:utility-riccati-ode}
    \frac{d}{dt}\bar{\mathcal{U}}_i = \eta \kappa^2\|\theta_i\|^{\kappa - 2}\left(\frac{\|\nabla_\theta\bar{\mathcal{U}}_i\|^2}{\kappa^2} - \bar{\mathcal{U}}_i^2\right).
\end{equation}
In general, this ODE is coupled with the dynamics for both the direction and norm, except when $\kappa = 2$ for which the dependency on the norm disappears. 

\subsection{Deriving the jump time}
\label{app:deriving_jump_time}

To compute $\tau_{i_*}$, we use \cref{eq:dormant-dynamics} to obtain the time evolution of the norms of the dormant neurons. For $i \in \mathcal{D}$, we obtain:
\begin{equation}
    \label{eq:dormant-norm-sol}
\|\theta_i(t)\| = 
    \begin{cases}
        \|\theta_i(0)\| \exp \left(\mathcal{S}_i(t)\right) &\text{if } \kappa = 2,\\
        \left(\|\theta_i(0)\|^{2 - \kappa} + (2 - \kappa) \mathcal{S}_i(t)\right)^{\frac{1}{2 - \kappa}} &\text{if } \kappa > 2,
    \end{cases}
\end{equation}
where we have defined the \emph{accumulated utility} as the path integral $\mathcal{S}_i(t) = \int_0^t \kappa \bar{\mathcal{U}}_i(s) \textnormal{d}s$ of the normalized utility.
We find $\tau_{i}$ as the earliest time at which neuron $i$ satisfies $\|\theta_i(\tau_i)\| > 1$:
\begin{equation}
    \label{eq:jump-time-neuron-app}
    \tau_{i} = \inf\left\{t > 0 \;\middle|\; \mathcal{S}_i(t) > c_i / \eta \right\},
    \quad \text{where }
   c_i=
    \begin{cases}
        -\log \|\theta_i(0)\| &\text{if } \kappa = 2,\\
        -\frac{\|\theta_i(0)\|^{2 - \kappa} - 1}{2 - \kappa} &\text{if } \kappa > 2.
    \end{cases}
\end{equation}
Note that the expression for $c_i$ is continuous at $\kappa = 2$.

\paragraph{Lower bound on jump time.}
When applying this framework to analyze dynamics, computing the exact jump time can be challenging due to the complexity of integrating $\bar{\mathcal{U}}$. 
In such cases, we resort on the following  lower bound as a useful analytical approximation. Let $\mathcal{U}_i^*$ be the maximal value of the utility, or at least an upper bound on it. Since $ \mathcal{S}_i(t)  \le \kappa\mathcal{U}_i^* t $, we deduce:
\begin{equation}\label{eq:lower_jump_time}
    \tau_i \ge 
    \begin{cases}
        -\frac{\log \|\theta_i(0)\|}{2 \mathcal{U}_i^*} &\text{if } \kappa = 2,\\
        -\frac{\|\theta_i(0)\|^{2 - \kappa}}{\kappa(2 - \kappa) \mathcal{U}_i^*} &\text{if } \kappa > 2.
    \end{cases}
\end{equation}

\subsection{Active neurons can become dormant again}
%
In the cost minimization phase of AGF, active neurons can become dormant. Here, we motivate and discuss this phenomenon. Intuitively, this happens when the GF trajectory brings an active neuron close to the origin. Due to the preserved quantities of GF for homogeneous activation functions, the trajectory of active neurons is constrained.  Thus, this phenomenon can occur only in specific scenarios, as quantified by the following result.

\begin{lemma}
    Suppose that $\Theta$ evolves via the GF of $\mathcal{L}$, and let $c = (\kappa - 1)\|a_i(0)\|^2 - \|w_i(0)\|^2$. 
    The norm $\|\theta_i(t)\|^2$ satisfies the lower bound,
    \begin{equation}
        \|\theta_i(t)\|^2 \ge \max\left(-c, \frac{c}{\kappa - 1}\right),
    \end{equation}
    which holds with equality when either $\|w_i(t)\|^2 = 0$ for $c \ge 0$ or $\|a_i(t)\|^2 = 0$ for $c < 0$.
\end{lemma}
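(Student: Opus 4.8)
The plan is to exploit the conserved quantities of gradient flow for a homogeneous neuron. For an origin-passing activation that is $\kappa$-homogeneous along the neuron's effective degree (here $\sigma$ contributes degree $\kappa-1$ in $w_i$ and the output weight contributes degree $1$ in $a_i$, so $f_i$ is homogeneous of degree $\kappa$ overall in $\theta_i = (w_i, a_i)$), the standard balancedness argument of \cite{du2018algorithmic} gives a conserved quantity coupling the layer norms. Concretely, I would first compute $\frac{d}{dt}\bigl((\kappa-1)\|a_i\|^2 - \|w_i\|^2\bigr)$ under $\dot\Theta = -\nabla_\Theta \mathcal{L}$ and show it vanishes: using $\dot a_i = -\nabla_{a_i}\mathcal{L}$, $\dot w_i = -\nabla_{w_i}\mathcal{L}$ together with the homogeneity relations $\langle a_i, \nabla_{a_i} f_i\rangle = f_i$ and $\langle w_i, \nabla_{w_i} f_i\rangle = (\kappa-1) f_i$ (Euler's theorem applied blockwise), one gets $\frac{d}{dt}\|a_i\|^2 = -2\langle a_i, \nabla_{a_i}\mathcal{L}\rangle$ and $\frac{d}{dt}\|w_i\|^2 = -2\langle w_i, \nabla_{w_i}\mathcal{L}\rangle$, and the chain rule through the residual makes both inner products proportional to the same scalar, with ratio exactly $(\kappa-1)$. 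Hence $c := (\kappa-1)\|a_i(0)\|^2 - \|w_i(0)\|^2 = (\kappa-1)\|a_i(t)\|^2 - \|w_i(t)\|^2$ for all $t$.

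Given the conservation law, the bound becomes a purely algebraic optimization. We have $\|\theta_i(t)\|^2 = \|a_i(t)\|^2 + \|w_i(t)\|^2$ subject to $(\kappa-1)\|a_i(t)\|^2 - \|w_i(t)\|^2 = c$ and $\|a_i(t)\|^2, \|w_i(t)\|^2 \ge 0$. Writing $p = \|a_i(t)\|^2 \ge 0$ and $q = \|w_i(t)\|^2 \ge 0$ with $(\kappa-1)p - q = c$, so $q = (\kappa-1)p - c$, the constraint $q \ge 0$ forces $p \ge c/(\kappa-1)$ (vacuous if $c<0$ since $p\ge 0$ already), and $p\ge 0$ forces $q \ge -c$ (vacuous if $c\ge 0$). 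Then $\|\theta_i\|^2 = p + q = \kappa p - c$, which is increasing in $p$, so it is minimized at the smallest admissible $p$: if $c \ge 0$ this is $p = c/(\kappa-1)$, giving $\|\theta_i\|^2 \ge \kappa c/(\kappa-1) - c = c/(\kappa-1)$, attained when $q = \|w_i\|^2 = 0$; if $c < 0$ this is $p = 0$, giving $\|\theta_i\|^2 \ge -c$, attained when $p = \|a_i\|^2 = 0$. Combining both cases yields $\|\theta_i(t)\|^2 \ge \max\bigl(-c,\, c/(\kappa-1)\bigr)$ with the stated equality conditions.

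The only genuine subtlety — and the step I'd be most careful about — is verifying that the scalar factors arising when differentiating $\|a_i\|^2$ and $\|w_i\|^2$ through the loss are in the exact ratio $(\kappa-1)$, i.e. that the per-neuron balancedness law holds with this precise constant for the degree bookkeeping of this architecture (output weight $a_i$, pre-activation $w_i^\intercal g_i(x)$, activation $\sigma$ of homogeneity degree $\kappa-1$ as a function of $w_i$). This is where the assumption that $\sigma$ is origin-passing and that the relevant Taylor/homogeneity order is $\kappa$ enters; for $\sigma(z)=z^2$ one has $\kappa=3$, for $\sigma$ linear $\kappa=2$, matching the cases used elsewhere in the paper. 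Everything else is the elementary constrained-minimization argument above. I would also remark that the inequality is only a lower bound on the norm, not a statement that the trajectory ever reaches it — it reaches the bound only in the degenerate situation where one layer is identically zero — which is exactly why active-to-dormant transitions occur only in special scenarios, as the surrounding text claims.
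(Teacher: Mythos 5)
Your proof is correct and takes essentially the same approach as the paper: both rely on the per-neuron balancedness conservation law $(\kappa-1)\|a_i\|^2 - \|w_i\|^2 = c$ and then use nonnegativity of the squared layer norms to bound $\|\theta_i\|^2$. The paper simply cites the conservation law and writes the two inequalities directly, whereas you re-derive the conservation law via Euler's theorem and package the same algebra as a constrained minimization over $p = \|a_i\|^2$, $q = \|w_i\|^2$; the mathematical content is identical.
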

\begin{proof}
    For homogeneous activation functions of degree $\kappa$, it is well known that the quantity $c = (\kappa - 1) \|a_i(t)\|^2 - \|w_i(t)\|^2$ is preserved along trajectories of gradient flow. 
    Since $ \| \theta_i(t) \|^2 =  \| a_i(t) \|^2 + \| w_i(t) \|^2 $, we have:
    \begin{equation}
    \begin{aligned}
        \| \theta_i(t) \|^2 &=  -c + \kappa\| a_i(t) \|^2 \geq - c, \\
        \|  \theta_i(t) \|^2 &=  \frac{c}{\kappa - 1} + \frac{\kappa }{\kappa - 1}\| w_i(t) \|^2 \geq \frac{c}{\kappa - 1}.  
    \end{aligned}
    \end{equation}
    The claim follows by combining the above inequalities. 
\end{proof}
Since at vanishing initialization $(\kappa - 1)\| a_i\|^2 \sim \| w_i \|^2$, we have $c \sim 0$. Therefore, an active neuron can approach the origin during the cost minimization phase. If this happens, the neuron becomes dormant. 
When a neuron becomes dormant again it does so with an accumulated utility $\mathcal{S}_i(t) = \frac{c_i}{\eta}$.
See \cref{fig:pesme-example} for an example of a diagonal linear network where this behavior is observable.

\subsection{Instantaneous alignment in the vanishing initialization limit}
\label{sec:intant-align}


In this section, we revisit the separation of the dynamics dictated by revisit~\cref{eq:dormant-dynamics}. In particular, we wish to discuss the conditions under which the directional dynamics dominates the radial one, resulting, at the vanishing initialization limit, in instantaneous alignment during the utility maximization phase. To this end, we establish the following scaling symmetry with respect to the initialization scale factor $\alpha$.

\begin{theorem}
\label{thm:scaling}
    Suppose $\theta_i(t)=f_i(t)$ solves the initial value problem defined by Equation \eqref{eq:dormant-dynamics} with initial condition $\theta_i(0)=\theta_{0,i}$. Then for all $\alpha>0$, the scaled solution $\theta_i(t)=\alpha f_i(\alpha^{\kappa-2} t)$  solves the initial value problem with initial condition $\theta_i(0)=\alpha\theta_{0,i}$.
\end{theorem}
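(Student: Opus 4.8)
\textbf{Approach.} The plan is to verify directly that the proposed scaled curve satisfies the ODE system \eqref{eq:dormant-dynamics}, by exploiting the homogeneity of the two terms on the right-hand side under the rescaling $\theta \mapsto \alpha\theta$. The two key structural facts I would use are: (i) the normalized parameter $\bar\theta_i = \theta_i/\|\theta_i\|$ is \emph{scale-invariant}, so $\overline{\alpha\theta_i} = \bar\theta_i$ for $\alpha>0$; and (ii) the utility $\mathcal{U}_i(\,\cdot\,; r)$ is (to leading order, as used throughout the framework) homogeneous of degree $\kappa$, hence $\nabla_\theta \mathcal{U}_i$ is homogeneous of degree $\kappa-1$, and when evaluated at the \emph{normalized} argument $\bar\theta_i$ it does not depend on the norm at all. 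These two facts will make the chain-rule computation collapse cleanly.

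\textbf{Key steps.} First I would set $g_i(t) := \alpha f_i(\alpha^{\kappa-2} t)$ and note $g_i(0) = \alpha f_i(0) = \alpha\theta_{0,i}$, so the initial condition is correct. Next, decompose into radial and directional parts. For the direction: since $\overline{g_i(t)} = \overline{f_i(\alpha^{\kappa-2}t)}$, the chain rule gives $\frac{d}{dt}\overline{g_i(t)} = \alpha^{\kappa-2}\,\frac{d}{ds}\bar f_i(s)\big|_{s=\alpha^{\kappa-2}t}$. Now apply the directional equation from \eqref{eq:dormant-dynamics} to $f_i$ at time $s=\alpha^{\kappa-2}t$: this equals $\alpha^{\kappa-2}\,\eta\|f_i(s)\|^{\kappa-2}\mathbf{P}^\perp_{f_i(s)}\nabla_\theta\mathcal{U}_i(\bar f_i(s); r)$. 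On the other hand, writing the directional equation we \emph{want} $g_i$ to satisfy, the right-hand side is $\eta\|g_i(t)\|^{\kappa-2}\mathbf{P}^\perp_{g_i(t)}\nabla_\theta\mathcal{U}_i(\overline{g_i(t)}; r)$; using $\|g_i(t)\| = \alpha\|f_i(s)\|$, $\mathbf{P}^\perp_{g_i(t)} = \mathbf{P}^\perp_{f_i(s)}$ (projection depends only on direction), and $\overline{g_i(t)} = \bar f_i(s)$, this becomes $\alpha^{\kappa-2}\,\eta\|f_i(s)\|^{\kappa-2}\mathbf{P}^\perp_{f_i(s)}\nabla_\theta\mathcal{U}_i(\bar f_i(s); r)$ — matching exactly. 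The radial equation is handled the same way: $\frac{d}{dt}\|g_i(t)\| = \alpha\cdot\alpha^{\kappa-2}\frac{d}{ds}\|f_i(s)\| = \alpha^{\kappa-1}\eta\kappa\|f_i(s)\|^{\kappa-1}\mathcal{U}_i(\bar f_i(s);r)$, while the target right-hand side $\eta\kappa\|g_i(t)\|^{\kappa-1}\mathcal{U}_i(\overline{g_i(t)};r) = \eta\kappa(\alpha\|f_i(s)\|)^{\kappa-1}\mathcal{U}_i(\bar f_i(s);r)$ gives the same $\alpha^{\kappa-1}$ prefactor. So $g_i$ solves the claimed initial value problem, and by uniqueness of ODE solutions the scaled curve is \emph{the} solution with initial condition $\alpha\theta_{0,i}$.

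\textbf{Main obstacle.} The computation itself is routine bookkeeping once the homogeneity is set up; the only genuinely delicate point is that $\mathcal{U}_i$ is only \emph{approximately} degree-$\kappa$ homogeneous (it equals its leading Taylor term plus higher-order corrections), so strictly the statement holds for the leading-order approximate dynamics that \eqref{eq:dormant-dynamics} already represents — I would state this up front so the homogeneity identities $\nabla_\theta\mathcal{U}_i(\bar\theta_i;r)$ being norm-independent can be invoked without apology. A secondary subtlety is the behavior at $\|\theta_i\|\to 0$ when $\kappa>2$ (the factor $\|\theta_i\|^{\kappa-2}$ vanishes), but since we work with strictly positive initialization $\alpha\theta_{0,i}\neq 0$ and the flow only increases the norm toward the activation threshold, the solution stays in the region where the vector field is Lipschitz and uniqueness applies. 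I would close by remarking on the consequence: under the time change $t\mapsto\alpha^{\kappa-2}t$, the directional trajectory is \emph{identical} for every $\alpha$, so as $\alpha\to 0$ alignment occurs on a vanishing timescale relative to the radial growth — this is precisely the instantaneous-alignment phenomenon the section is after.
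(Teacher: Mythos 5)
Your proposal is correct and follows essentially the same route as the paper's proof: set $g_i(t) = \alpha f_i(\alpha^{\kappa-2}t)$, verify the initial condition, then directly verify both the directional and radial equations of \eqref{eq:dormant-dynamics} via the chain rule together with the facts that $\bar\theta_i$, $\mathbf{P}^\perp_{\theta_i}$, and the arguments of $\mathcal{U}_i$ and $\nabla_\theta\mathcal{U}_i$ are scale-invariant, while the explicit $\|\theta_i\|^{\kappa-2}$ and $\|\theta_i\|^{\kappa-1}$ prefactors absorb the $\alpha$ powers. Your closing remarks about uniqueness, the approximate-homogeneity caveat, and the $\kappa>2$ behavior near the origin go slightly beyond what the paper states but do not change the argument.
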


\begin{proof}
First, we consider the angular dynamics:
    \begin{align*}
        \frac{d}{dt} \bar{\theta}_i &= \frac{d}{dt} \frac{f_i(\alpha^{\kappa-2} t)}{\|f_i(\alpha^{\kappa-2} t)\|} \\
     &= \alpha^{\kappa-2}  \frac{d}{ds}\frac{f_i(s)}{\|f_i(s)\|} \bigg|_{s=\alpha^{\kappa-2} t}\\
        &= \alpha^{\kappa-2} \|f_i(\alpha^{\kappa-2} t)\|^{\kappa - 2} \mathbf{P}^\perp_{\theta_i}\nabla_{\theta}\mathcal{U}_i(\bar{\theta}_i (\alpha^{\kappa-2} t); r)\\
        &=\|\alpha f_i(\alpha^{\kappa-2} t)\|^{\kappa - 2} \mathbf{P}^\perp_{\theta_i}\nabla_{\theta}\mathcal{U}_i(\bar{\theta}_i (\alpha^{\kappa-2} t); r)
    \end{align*}
Therefore, $\theta_i(t)=\alpha f_i(\alpha^{\kappa-2} t)$ satisfies the first identity in \cref{eq:dormant-dynamics} above. Next, consider the norm dynamics:
\begin{align*}
        \frac{d}{dt} \|{\theta}_i\| &= \frac{d}{dt} \|\alpha f_i(\alpha^{\kappa-2} t)\| \\
        &= \alpha^{\kappa-1}  \frac{d}{ds} \|f_i(s)\| \bigg|_{s=\alpha^{\kappa-2} t}\\
        &= \alpha^{\kappa-1} \kappa \|f_i( \alpha^{\kappa-2} t)\| ^{\kappa-1}\mathcal{U}_i(\bar{\theta}_i(\alpha^{\kappa-2} t); r)\\
        &=  \kappa \|\alpha f_i( \alpha^{\kappa-2} t)\| ^{\kappa-1}\mathcal{U}_i(\bar{\theta}_i(\alpha^{\kappa-2} t); r)
    \end{align*}
Hence, $\theta_i(t)=\alpha f_i(t{\alpha^{\kappa-2}})$ also satisfies the second equation. Finally, verifying the initial conditions gives, $\theta_i(0)=\alpha f_i(0)=\alpha\theta_{0,i}$.
Therefore, $\theta_i(t)=\alpha f_i(t{\alpha^{\kappa-2}})$ satisfies the given initial condition.
\end{proof}

\cref{thm:scaling} establishes a transformation among the angular dynamics of different initialization scales. Specifically, for a given point $(s,\frac{f_i(s)}{\|f_i(s)\|})$ in the solution trajectory of the initial value problem with initial condition $\theta_i(0)=\theta_{0,i}$, the corresponding  point in the initial value problem with scaled initial condition $\theta_i(0)=\alpha\theta_{0,i}$ is $(s\alpha^{2-\kappa},\frac{f_i(s)}{\|f_i(s)\|})$. This correspondence reveals that the angular alignment process is effectively slowed down by a factor of $\alpha^{\kappa-2}$ with initialization scale $\alpha$. 

Next, we examine the alignment speed in accelerated time in the limit of $\alpha\rightarrow0$. In this regime, the asymptotics of \cref{eq:jump-time-neuron-app} is given by,

\begin{equation}
    c_\kappa(\alpha) \sim   
    \begin{cases}
       -\log\alpha &\text{if } \kappa = 2,\\
        \alpha^{2-\kappa}  &\text{if }\kappa > 2,
    \end{cases}
\end{equation}

Therefore, in the accelerated time, the alignment speed is effectively scaled by a scaling factor  of:
\begin{equation}
    \gamma(\alpha):=\frac{c_\kappa(\alpha)}{\alpha^{\kappa-2}}= 
    \begin{cases}
       -{\log\alpha} &\text{if } \kappa = 2,\\
        1 &\text{if }\kappa > 2,
    \end{cases}
\end{equation}
And,
\begin{equation}
    \lim_{\alpha\rightarrow0}\gamma(\alpha)=
    \begin{cases}
       +\infty &\text{if } \kappa = 2,\\
      1 &\text{if }\kappa > 2,
    \end{cases}
\end{equation}
The limiting behavior implies that the alignment is instantaneous  in accelerated time for almost all initialization $\theta_{0,i}$ if and only if $\kappa=2$.

\subsection{A neuron-specific adaptive learning rate yields instantaneous alignment}

As discussed above, instantaneous alignment of dormant neurons with their local utility-maximizing directions occurs only when $\kappa = 2$.
For higher-order activations ($\kappa > 2$), the directional dynamics acquire a norm-dependent factor $\|\theta_i\|^{\kappa - 2}$ that slows their angular evolution.
Consequently, directional and radial dynamics no longer decouple, even in the vanishing initialization limit, and dormant neurons rotate gradually rather than aligning instantaneously.

This dependence can be removed by introducing a neuron-specific adaptive learning rate
\begin{equation}
    \eta_i = \|\theta_i\|^{2 - \kappa}\,\eta,
\end{equation}
where $\eta$ is a global base rate.
When $\kappa = 2$, this scaling has no effect and all neurons evolve at the same rate.
For $\kappa > 2$, however, neurons with small norm ($\|\theta_i\| < 1$) are accelerated, while those with large norm ($\|\theta_i\| > 1$) are slowed down.
This rescaling effectively reparametrizes time so that the directional dynamics become norm-independent while the radial dynamics remain norm-dependent.
Substituting this adaptive rate into \cref{eq:dormant-dynamics} yields an evolution equivalent to that of the $\kappa = 2$ case, resulting in the decoupling between directional and radial dynamics in the vanishing-initialization limit for all $\kappa$.
In practice, this scaling acts analogously to a form of neuron-wise adaptive optimization—resembling RMSProp or Adam—but derived directly from the analytical structure of the $\kappa$-homogeneous gradient flow.

\clearpage
\section{Complete Proofs for Diagonal Linear Networks}
\label{app:diagonal-linear}
In this section, we derive the AGF dynamics for the two-layer diagonal linear network; see \cref{sec:diagnn} for the problem setup and notation. 
Throughout this section, we assume the initialization $\theta_i(0) = (\sqrt{2}\alpha, 0)$, following the convention in \citet{pesme2023saddle}.

\subsection{Utility maximization}
\label{app:dln-util-max}

\begin{lemma}
    \label{lemma:dln-max-utility}
    After the $k^{\mathrm{th}}$ iteration of AGF, the utility for the $i^{\mathrm{th}}$ neuron is 
    \begin{equation}
        \mathcal{U}_i\left(\theta_i; r^{(k)}\right) = -u_iv_i\nabla_{\beta_i} \mathcal{L}\left(\beta^{(k)}\right),
    \end{equation}
    which is maximized on the unit sphere $\|\theta_i\| = 1$ when $\mathrm{sgn}(u_iv_i) = - \mathrm{sgn}\left(\nabla_{\beta_i} \mathcal{L}\left(\beta^{(k)}\right)\right)$ and $|u_i| = |v_i|$ resulting in a maximal utility value of $\bar{\mathcal{U}}_i^* = \frac{1}{2} \left| \nabla_{\beta_i} \mathcal{L}\left(\beta^{(k)}\right)\right|$.
\end{lemma}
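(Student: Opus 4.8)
The plan is to compute the utility function explicitly from its definition and then solve the resulting constrained optimization problem over the unit sphere. First I would recall that for the diagonal linear network the $i^{\mathrm{th}}$ neuron has $f_i(x; \theta_i) = u_i v_i x_i$ (since $\sigma$ is the identity and $\beta_i = u_i v_i$), so the output of a single neuron is linear in the scalar $\beta_i = u_i v_i$. Plugging this into the definition of the utility, $\mathcal{U}_i(\theta; r) = \mathbb{E}_x[\langle f_i(x;\theta), r(x)\rangle]$ with residual $r(x) = y(x) - \beta_{\mathcal{A}}^\intercal x$, I would expand the inner product and use the well-known identity $-\nabla_{\beta_i}\mathcal{L}(\beta_{\mathcal{A}}) = \mathbb{E}_x[x_i\, r(x)]$ (which is just the $i^{\mathrm{th}}$ component of the gradient of the MSE loss restricted to the active coordinates, evaluated at $\beta^{(k)}$). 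This immediately yields $\mathcal{U}_i(\theta_i; r^{(k)}) = -u_i v_i \nabla_{\beta_i}\mathcal{L}(\beta^{(k)})$, the first claim.

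Next I would address the maximization over the unit sphere $u_i^2 + v_i^2 = 1$. Write $g = \nabla_{\beta_i}\mathcal{L}(\beta^{(k)})$, so the objective is $-g\, u_i v_i$. By the AM–GM inequality, $|u_i v_i| \le \tfrac12(u_i^2 + v_i^2) = \tfrac12$, with equality if and only if $|u_i| = |v_i| = 1/\sqrt{2}$. Since $-g\, u_i v_i \le |g|\,|u_i v_i| \le \tfrac12 |g|$, the maximal value $\tfrac12|g|$ is attained precisely when $|u_i| = |v_i|$ and the sign of $u_i v_i$ is chosen to make $-g\, u_i v_i > 0$, i.e. $\mathrm{sgn}(u_i v_i) = -\mathrm{sgn}(g)$. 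This gives the stated maximizer and maximal utility $\bar{\mathcal{U}}_i^* = \tfrac12 |\nabla_{\beta_i}\mathcal{L}(\beta^{(k)})|$.

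The only genuinely delicate point is bookkeeping: one must confirm that at the start of the $k^{\mathrm{th}}$ iteration the residual is indeed $r^{(k)}(x) = y(x) - (\beta^{(k)})^\intercal x$ with $\beta^{(k)}$ the solution of the preceding cost-minimization step, and that the gradient $\nabla_{\beta_i}\mathcal{L}$ used here is taken with respect to the scalar coordinate $\beta_i$ at the point $\beta^{(k)}$ — this is where the superscript $(k)$ enters. None of this involves any real obstacle; the computation of the utility is one line and the optimization is elementary AM–GM. I would also remark that $\kappa = 2$ here, consistent with the degree-$2$ homogeneity of $f_i$ in $(u_i, v_i)$, so that the utility's leading-order Taylor term from the origin is exactly quadratic and the preceding framework applies verbatim. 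Hence the main "obstacle" is purely notational care in matching $\nabla_\beta\mathcal{L}(\beta_{\mathcal{A}})$ to $\nabla_\beta\mathcal{L}(\beta^{(k)})$; the mathematics is immediate.
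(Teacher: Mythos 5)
Your proposal is correct and follows essentially the same route as the paper: substitute the residual to express the utility as $-u_i v_i \nabla_{\beta_i}\mathcal{L}(\beta^{(k)})$, then maximize the bilinear form $u_i v_i$ over the circle $u_i^2 + v_i^2 = 1$ (your explicit AM--GM step is exactly what the paper's terse ``linear in $u_i v_i$\ldots maximized when $|u_i| = |v_i|$'' invokes implicitly). The only cosmetic difference is that the paper writes the residual in finite-sample form $\frac{1}{n}\sum_j u_i v_i X_{ij} r_j$ rather than your expectation form; the content is identical.
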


\begin{proof}
    Substituting the residual $r^{(k)}_j = y_j - \left(X^\intercal\beta^{(k)}\right)_j$ into the definition of the utility function:
    \begin{equation}
        \mathcal{U}_i(\theta_i; r^{(k)}) = \frac{1}{n}\sum_{j = 1}^n u_iv_iX_{ij} r_j = -u_iv_i\nabla_{\beta_i} \mathcal{L}\left(\beta^{(k)}\right).
    \end{equation}
    Observe that the expression for $\mathcal{U}_i$ is linear in $u_iv_i$.
    Therefore, under the normalization constraint $\|\theta_i\| = 1$, the utility is maximized when $|u_i| = |v_i|$ and $\mathrm{sgn}(u_iv_i) = - \mathrm{sgn}\left(\nabla_{\beta_i} \mathcal{L}\left(\beta^{(k)}\right)\right)$, yielding the maximal value $\bar{\mathcal{U}}_i^* = \frac{1}{2} \left| \nabla_{\beta_i} \mathcal{L}\left(\beta^{(k)}\right)\right|$.
\end{proof}

\begin{lemma}
    \label{lemma:dln-neuron-identities}
    At any time $t$ during AGF, the parameters of the $i^{\mathrm{th}}$ neuron satisfy:
    \begin{equation}
        u_i(t)^2 - v_i(t)^2 = 2\alpha, \qquad 
        2 u_i(t)v_i(t) = \pm \sqrt{\|\theta_i(t)\|^2 - 4\alpha^4}
    \end{equation}
\end{lemma}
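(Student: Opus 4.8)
The plan is to verify both identities by exploiting the conservation law of gradient flow for homogeneous activations, combined with the explicit initialization $\theta_i(0) = (\sqrt{2}\alpha, 0)$.

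\textbf{First identity.} For the diagonal linear network, each neuron has $\kappa = 2$ (it is a product of two scalars), so the conserved quantity from the lemma in Appendix~\ref{app:derivations-framework} specializes to $c = (\kappa-1)\|a_i(0)\|^2 - \|w_i(0)\|^2 = u_i(0)^2 - v_i(0)^2$. However, AGF alternates gradient flow (during cost minimization) with projected gradient flow on the sphere (during utility maximization), so I must check that $u_i^2 - v_i^2$ is preserved in \emph{both} phases. During cost minimization this is the standard conservation law. During utility maximization, the dynamics are the projected gradient flow of \cref{eq:dormant-dynamics}; since the utility $\mathcal{U}_i = -u_i v_i \nabla_{\beta_i}\mathcal{L}$ is itself homogeneous of degree $2$ and symmetric under exchanging the roles of $u_i$ and $v_i$ up to the projection, the difference $u_i^2 - v_i^2$ has vanishing time derivative — this follows because $\partial_{u_i}\mathcal{U}_i \cdot u_i = \partial_{v_i}\mathcal{U}_i \cdot v_i$ (both equal $-u_i v_i \nabla_{\beta_i}\mathcal{L}$), and the sphere projection rescales the radial part symmetrically. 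Hence $u_i(t)^2 - v_i(t)^2 = u_i(0)^2 - v_i(0)^2 = (\sqrt{2}\alpha)^2 - 0 = 2\alpha^2$ throughout. (I note the statement writes $2\alpha$, which I read as $2\alpha^2$ given the stated initialization $\theta_i(0) = (\sqrt{2}\alpha, 0)$ — I would flag this to reconcile notation.)

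\textbf{Second identity.} Given $u_i^2 - v_i^2 = 2\alpha^2$ and $\|\theta_i\|^2 = u_i^2 + v_i^2$, solve the $2\times 2$ linear system for $u_i^2$ and $v_i^2$: $u_i^2 = \tfrac12(\|\theta_i\|^2 + 2\alpha^2)$ and $v_i^2 = \tfrac12(\|\theta_i\|^2 - 2\alpha^2)$. Then $(2u_i v_i)^2 = 4 u_i^2 v_i^2 = (\|\theta_i\|^2 + 2\alpha^2)(\|\theta_i\|^2 - 2\alpha^2) = \|\theta_i\|^4 - 4\alpha^4$, so $2u_i v_i = \pm\sqrt{\|\theta_i\|^4 - 4\alpha^4}$, with the sign equal to $\operatorname{sgn}(u_i v_i)$. (Again I read the stated $\|\theta_i\|^2 - 4\alpha^4$ as $\|\theta_i\|^4 - 4\alpha^4$ for dimensional consistency; this is the natural reconciliation of the typo.)

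\textbf{Main obstacle.} The only nontrivial point is confirming that $u_i^2 - v_i^2$ is genuinely conserved across the \emph{utility-maximization} phase, not just the cost-minimization phase — i.e., that the projected-gradient-flow dynamics on the sphere do not break the symmetry. This reduces to the algebraic observation that $u_i \partial_{u_i} \mathcal{U}_i - v_i \partial_{v_i}\mathcal{U}_i = 0$ because $\mathcal{U}_i$ depends on $(u_i, v_i)$ only through the product $u_i v_i$; once this is in hand, both the projection term and the tangential term preserve the difference. Everything else is elementary algebra. I would also take care that when a neuron is re-initialized at the start of a new utility-maximization round, it inherits the \emph{recorded} norm and direction from the previous round (not a fresh draw), so the conserved quantity carries over continuously across rounds.
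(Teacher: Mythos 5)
Your proposal is correct and follows essentially the same route as the paper. The paper argues that both the utility and the loss are invariant under $(u_i, v_i) \mapsto (g u_i, g^{-1} v_i)$ for $g \neq 0$, and since every AGF phase is gradient flow of one of these two functions, the Noether-type conserved quantity $u_i^2 - v_i^2$ persists across the whole trajectory; your explicit check $u_i \partial_{u_i}\mathcal{U}_i - v_i \partial_{v_i}\mathcal{U}_i = 0$ (because $\mathcal{U}_i$ depends on $(u_i, v_i)$ only through the product $u_i v_i$) is exactly the concrete verification of that invariance, and your observation that the polar/projected dynamics in \cref{eq:dormant-dynamics} are just gradient flow in polar form is the right way to see that nothing is lost during the utility-maximization phase. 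The paper's derivation of the second identity is the same hyperbola-intersect-circle algebra you use, and you are right that both displayed expressions in the statement carry typos — the first should read $2\alpha^2$ given the initialization $\theta_i(0) = (\sqrt{2}\alpha, 0)$, and the second should read $\|\theta_i(t)\|^4 - 4\alpha^4$ under the square root (the paper's own proof produces $u_i^2 = \tfrac12(\|\theta_i\|^2 + 2\alpha^2)$, $v_i^2 = \tfrac12(\|\theta_i\|^2 - 2\alpha^2)$, whose product squared gives $\|\theta_i\|^4 - 4\alpha^4$). Your note that the conserved quantity carries continuously across rounds because each utility-maximization phase inherits the recorded state is correct and, while implicit in the paper, is worth making explicit.
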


\begin{proof}
    Both the utility and loss are invariant under the transformation $(u_i, v_i) \mapsto (gu_i, g^{-1}v_i)$ for any $g \neq 0$.
    This continuous symmetry, together with the fact that each AGF step consists of gradient flow in one of these functions, implies that the quantity $u_i(t)^2 - v_i(t)^2$ is conserved throughout AGF.
    Plugging the initialization into this expression gives the first identity $u_i(t)^2 - v_i(t)^2 = 2\alpha$.
    See \citet{kunin2020neural} for a general connection between continuous symmetries and conserved quantities in gradient flow.
    For the second identity, we solve for the intersection of the hyperbola $u_i(t)^2 - v_i(t)^2 = 2\alpha$ and the circle $\|\theta_i(t)\|^2 = u_i(t)^2 + v_i(t)^2$, which gives
    \begin{equation}
        u_i(t) = \pm \sqrt{\frac{\|\theta_i(t)\|^2 + 2\alpha^2}{2}}, \qquad v_i(t) = \pm\sqrt{\frac{\|\theta_i(t)\|^2 - 2\alpha^2}{2}}.
    \end{equation}
    Multiplying these expressions together gives the identity for the product.
\end{proof}

\begin{lemma}
    \label{lemma:dln-utility}
    After the $k^{\mathrm{th}}$ iteration of AGF, the normalized utility for the $i^{\mathrm{th}}$ neuron is driven by a Riccati ODE $\frac{d}{dt}\bar{\mathcal{U}}_i =  4\eta_\alpha\left(\left(\bar{\mathcal{U}}_i^*\right)^2 - \bar{\mathcal{U}}_i^2\right)$ with the unique solution,
    \begin{equation}
        \bar{\mathcal{U}}_i(\tau^{(k)} + t) =  \bar{\mathcal{U}}_i^* \tanh\left(\delta_i^{(k)} + 4\eta_\alpha \bar{\mathcal{U}}_i^* t\right), \quad \text{where } \delta_i^{(k)} = \tanh^{-1}\left(\frac{\bar{\mathcal{U}}_i\left(\tau^{(k)}\right)}{\bar{\mathcal{U}}_i^*}\right).
    \end{equation}
\end{lemma}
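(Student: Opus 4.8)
The plan is to derive the Riccati ODE for $\bar{\mathcal{U}}_i$ directly from the general normalized-utility dynamics in \cref{eq:utility-riccati-ode}, specialized to $\kappa = 2$, and then solve that ODE by recognizing it as the standard separable equation whose solution is $\tanh$. First I would recall from \cref{eq:utility-riccati-ode} that for a dormant neuron the normalized utility obeys $\frac{d}{dt}\bar{\mathcal{U}}_i = \eta \kappa^2 \|\theta_i\|^{\kappa-2}\left(\|\nabla_\theta \bar{\mathcal{U}}_i\|^2/\kappa^2 - \bar{\mathcal{U}}_i^2\right)$, and note that at $\kappa = 2$ the prefactor $\|\theta_i\|^{\kappa-2} = 1$, so the norm decouples entirely — this is exactly the feature that makes the diagonal linear case tractable. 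It then remains to show that the directional term $\|\nabla_\theta \bar{\mathcal{U}}_i\|^2/\kappa^2$ equals $(\bar{\mathcal{U}}_i^*)^2$ along the projected-gradient trajectory on the unit sphere, i.e. that $\tfrac14\|\mathbf{P}^\perp_{\theta_i}\nabla_\theta \mathcal{U}_i\|^2 + \bar{\mathcal{U}}_i^2$ is conserved and equals $(\bar{\mathcal{U}}_i^*)^2$.

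The concrete computation I would carry out: with $\mathcal{U}_i(\theta_i;r^{(k)}) = -u_iv_i\,\nabla_{\beta_i}\mathcal{L}(\beta^{(k)})$ from \cref{lemma:dln-max-utility}, write $g := \nabla_{\beta_i}\mathcal{L}(\beta^{(k)})$, so $\mathcal{U}_i = -g\,u_iv_i$, a quadratic form in $\theta_i = (u_i,v_i)$ with symmetric matrix $M = -\tfrac{g}{2}\begin{psmallmatrix}0 & 1\\ 1 & 0\end{psmallmatrix}$. On the unit circle $\|\theta_i\|=1$ parametrize $\theta_i = (\cos\phi_i, \sin\phi_i)$; then $\bar{\mathcal{U}}_i = -\tfrac{g}{2}\sin(2\phi_i)$, its maximum over $\phi_i$ is $\bar{\mathcal{U}}_i^* = \tfrac12|g|$ (matching \cref{lemma:dln-max-utility}), and the projected-gradient flow $\dot\phi_i = \eta_\alpha\,\partial_\phi \bar{\mathcal{U}}_i = -\eta_\alpha g\cos(2\phi_i)$. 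Differentiating $\bar{\mathcal{U}}_i$ along this flow gives $\frac{d}{dt}\bar{\mathcal{U}}_i = -g\cos(2\phi_i)\dot\phi_i = \eta_\alpha g^2\cos^2(2\phi_i) = 4\eta_\alpha\big((\bar{\mathcal{U}}_i^*)^2 - \bar{\mathcal{U}}_i^2\big)$, using $\cos^2(2\phi_i) = 1 - \sin^2(2\phi_i)$ and $(\bar{\mathcal{U}}_i^*)^2 = g^2/4$. (One must also double-check the factor of $\eta_\alpha$ versus $\eta$ and the normalization of the sphere constraint $u_i^2+v_i^2=1$ against the paper's convention; this is where the apparent factor-of-$2$ discrepancy between the general \cref{eq:utility-riccati-ode} and the stated coefficient $4\eta_\alpha$ gets reconciled, presumably via the choice $\eta_\alpha = -\log(\sqrt2\alpha)$ and the rescaling conventions of \cref{sec:diagnn}.)

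Having established $\frac{d}{dt}\bar{\mathcal{U}}_i = 4\eta_\alpha\big((\bar{\mathcal{U}}_i^*)^2 - \bar{\mathcal{U}}_i^2\big)$, solving is routine: this is separable, $\frac{d\bar{\mathcal{U}}_i}{(\bar{\mathcal{U}}_i^*)^2 - \bar{\mathcal{U}}_i^2} = 4\eta_\alpha\,dt$, and integrating (using $\int \frac{dx}{c^2-x^2} = \tfrac1c\tanh^{-1}(x/c)$ for $|x|<c$) yields $\tanh^{-1}\!\big(\bar{\mathcal{U}}_i/\bar{\mathcal{U}}_i^*\big) = \delta_i^{(k)} + 4\eta_\alpha\bar{\mathcal{U}}_i^* t$ with integration constant $\delta_i^{(k)} = \tanh^{-1}\!\big(\bar{\mathcal{U}}_i(\tau^{(k)})/\bar{\mathcal{U}}_i^*\big)$ fixed by the initial value at time $\tau^{(k)}$; inverting gives the claimed $\bar{\mathcal{U}}_i(\tau^{(k)}+t) = \bar{\mathcal{U}}_i^*\tanh(\delta_i^{(k)} + 4\eta_\alpha\bar{\mathcal{U}}_i^* t)$. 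Uniqueness follows since the right-hand side of the ODE is locally Lipschitz in $\bar{\mathcal{U}}_i$ on the invariant interval $(-\bar{\mathcal{U}}_i^*, \bar{\mathcal{U}}_i^*)$. I expect the main obstacle to be purely bookkeeping rather than conceptual: carefully tracking the sign of $u_iv_i$ (equivalently the branch of $\phi_i$) relative to $\mathrm{sgn}(g)$ so that $\bar{\mathcal{U}}_i$ indeed increases toward $+\bar{\mathcal{U}}_i^*$ rather than $-\bar{\mathcal{U}}_i^*$, and confirming that the projected-gradient flow cannot get stuck at the unstable fixed point $\bar{\mathcal{U}}_i = -\bar{\mathcal{U}}_i^*$ — which holds for almost every initialization, consistent with the "almost all initializations" caveat elsewhere in the section.
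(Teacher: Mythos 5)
Your proof is correct and follows essentially the same route as the paper: specialize the general normalized-utility Riccati dynamics (\cref{eq:utility-riccati-ode}) to $\kappa = 2$, verify that $\|\nabla_\theta \mathcal{U}_i(\bar{\theta}_i)\|^2 = 4(\bar{\mathcal{U}}_i^*)^2$ on the unit circle, and integrate the separable ODE. The only stylistic difference is that you verify the gradient-norm identity by passing to an explicit angular parametrization $\theta_i = (\cos\phi_i, \sin\phi_i)$ and re-deriving the ODE along the flow, whereas the paper computes $\nabla_\theta\mathcal{U}_i = -\nabla_{\beta_i}\mathcal{L}(\beta^{(k)})(v_i, u_i)^\intercal$ directly and reads off $\|\nabla_\theta\mathcal{U}_i(\bar\theta_i)\| = |\nabla_{\beta_i}\mathcal{L}(\beta^{(k)})| = 2\bar{\mathcal{U}}_i^*$ in one line; your worry about reconciling $\eta$ versus $\eta_\alpha$ is unfounded, since the section fixes $\eta = \eta_\alpha = -\log(\sqrt{2}\alpha)$ throughout.
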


\begin{proof}
    We begin by observing the gradient of the utility function takes the form
    \begin{equation}
        \nabla_\theta \mathcal{U}_i\left(\theta_i; r^{(k)}\right) = -\nabla_{\beta_i} \mathcal{L}\left(\beta^{(k)}\right)
        \begin{bmatrix}
            v_i\\
            u_i
        \end{bmatrix}.
    \end{equation}
    Evaluating this quantity at the normalized parameters $\bar{\theta}_i$, and recalling the expression for the maximal utility from \cref{lemma:dln-max-utility}, we obtain,
    \begin{equation}
        \|\nabla_\theta \mathcal{U}_i(\bar{\theta}_i; r^{(k)})\|^2 = 4\left(\bar{\mathcal{U}}_i^*\right)^2.
    \end{equation}
    Substituting this into the normalized utility dynamics derived previously (see Equation~\eqref{eq:utility-riccati-ode}), we obtain the Riccati equation.
    This is a standard ODE with a known solution, where the constant $\delta_i^{(k)}$ is determined by the initial condition $\bar{\mathcal{U}}_i\left(\tau^{(k)}\right)$.
\end{proof}

\begin{theorem}
    \label{thrm:dln-accum-util}
    After the $k^{\mathrm{th}}$ iteration of AGF, the accumulated utility for the $i^{\mathrm{th}}$ neuron is
    \begin{equation}
        \label{eq:app-dln-accumulated-utility}
        \mathcal{S}_i\left(\tau^{(k)} + t\right) = \frac{1}{2\eta_\alpha} \log \cosh \left(2\eta_\alpha \left(2\mathcal{U}_i^* t +  \frac{\zeta_i}{2\eta_\alpha} \cosh^{-1} \exp\left(2\eta_\alpha \mathcal{S}_i\left(\tau^{(k)}\right)\right)\right)\right),
    \end{equation}
    where $\eta_\alpha = -\log(\sqrt{2}\alpha)$ is the learning rate and $\zeta_i = \mathrm{sgn}\left(-\nabla_{\beta_i}\mathcal{L}\left(\beta^{(k)}\right)\right)\rho_i\left(\tau^{(k)}\right)$.
    The quantity $\rho_i\left(\tau^{(k)}\right) = \mathrm{sgn}\left(u_i\left(\tau^{(k)}\right)v_i\left(\tau^{(k)}\right)\right)$ is determined by the recursive formula
    \begin{equation}
        \label{eq:dln-sgn-prod-update}
        \rho_i\left(\tau^{(k)} + t\right) = \mathrm{sgn}\left(\frac{\rho_i\left(\tau^{(k)}\right)}{2\eta_\alpha} \cosh^{-1}\exp\left(2\eta_\alpha \mathcal{S}_i\left(\tau^{(k)}\right)\right) -\nabla_{\beta_i}\mathcal{L}\left(\beta^{(k)}\right)t\right),
    \end{equation}
    where $\mathcal{S}_i(0) = 0$ and $\rho_i(0) = 0$.
\end{theorem}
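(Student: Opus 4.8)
The plan is to integrate the explicit expression for the normalized utility from \cref{lemma:dln-utility} to obtain the accumulated utility, and then to carefully track how the sign of the product $u_iv_i$ evolves, since this is what the boundary constant $\zeta_i$ encodes. First I would recall that by definition $\mathcal{S}_i(\tau^{(k)}+t) = \mathcal{S}_i(\tau^{(k)}) + \int_0^t \kappa\, \bar{\mathcal{U}}_i(\tau^{(k)}+s)\, ds$ with $\kappa = 2$. Substituting the closed form $\bar{\mathcal{U}}_i(\tau^{(k)}+s) = \bar{\mathcal{U}}_i^*\tanh(\delta_i^{(k)} + 4\eta_\alpha \bar{\mathcal{U}}_i^* s)$ and using $\int \tanh(au+b)\,du = \tfrac{1}{a}\log\cosh(au+b)$, the integral evaluates to
\[
\mathcal{S}_i(\tau^{(k)}+t) = \mathcal{S}_i(\tau^{(k)}) + \frac{1}{2\eta_\alpha}\Bigl[\log\cosh\bigl(\delta_i^{(k)} + 4\eta_\alpha\bar{\mathcal{U}}_i^* t\bigr) - \log\cosh\bigl(\delta_i^{(k)}\bigr)\Bigr].
\]
The remaining work is to show this telescoping form is equivalent to \cref{eq:app-dln-accumulated-utility}, which requires re-expressing $\delta_i^{(k)} = \tanh^{-1}(\bar{\mathcal{U}}_i(\tau^{(k)})/\bar{\mathcal{U}}_i^*)$ in terms of $\mathcal{S}_i(\tau^{(k)})$ itself.

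The key algebraic identity I would establish is that, at the end of a utility-maximization phase at time $\tau^{(k)}$, one has $\cosh(\delta_i^{(k)}) = \exp(2\eta_\alpha \mathcal{S}_i(\tau^{(k)}))$, up to the sign bookkeeping. This follows because, consolidating the previous phases, the accumulated quantity $\mathcal{S}_i(\tau^{(k)})$ measured relative to the phase's own starting value satisfies $2\eta_\alpha \mathcal{S}_i = \log\cosh(\delta_i^{(k)})$ when the neuron started that phase with $\mathcal{S}_i = 0$; more generally, because $\log\cosh$ is the antiderivative of $\tanh$ and because the Riccati solution is a time-shifted $\tanh$, the chain of phases composes so that $2\eta_\alpha\mathcal{S}_i(\tau^{(k)})$ and $\delta_i^{(k)}$ are related by $\cosh^{-1}\exp(2\eta_\alpha\mathcal{S}_i(\tau^{(k)})) = |\delta_i^{(k)}|$. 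Substituting $\delta_i^{(k)} = \pm\cosh^{-1}\exp(2\eta_\alpha\mathcal{S}_i(\tau^{(k)}))$ into the telescoped expression, and noting that $\log\cosh(\delta_i^{(k)}) = 2\eta_\alpha\mathcal{S}_i(\tau^{(k)})$ cancels against the explicit $\mathcal{S}_i(\tau^{(k)})$ term, collapses everything into the single $\log\cosh$ of \cref{eq:app-dln-accumulated-utility}, with the internal sign being exactly $\zeta_i$.

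For the sign recursion \cref{eq:dln-sgn-prod-update}, I would use \cref{lemma:dln-neuron-identities}: since $2u_iv_i = \pm\sqrt{\|\theta_i\|^2 - 4\alpha^4}$ and, by \cref{eq:dormant-norm-sol} with $\kappa=2$, $\log\|\theta_i(t)\| = \log\|\theta_i(0)\| + \mathcal{S}_i(t) = -\eta_\alpha^{-1}c_i + \mathcal{S}_i(t)$ — wait, more directly, the argument of the $\cosh$ in \cref{eq:app-dln-accumulated-utility} is (a rescaling of) a signed quantity that changes sign precisely when $u_iv_i$ passes through zero, i.e. when the neuron passes through its initialization. The argument $\frac{\zeta_i}{2\eta_\alpha}\cosh^{-1}\exp(2\eta_\alpha\mathcal{S}_i(\tau^{(k)})) + 2\mathcal{U}_i^* t$ is linear in $t$ with slope $2\mathcal{U}_i^* = -\operatorname{sgn}(\nabla_{\beta_i}\mathcal{L})\cdot|\nabla_{\beta_i}\mathcal{L}|$ times the direction, so its sign at time $\tau^{(k)}+t$ — which equals $\rho_i(\tau^{(k)}+t) = \operatorname{sgn}(u_iv_i)$ — is $\operatorname{sgn}\bigl(\tfrac{\rho_i(\tau^{(k)})}{2\eta_\alpha}\cosh^{-1}\exp(2\eta_\alpha\mathcal{S}_i(\tau^{(k)})) - \nabla_{\beta_i}\mathcal{L}(\beta^{(k)})\,t\bigr)$, which is exactly \cref{eq:dln-sgn-prod-update}; the base case $\mathcal{S}_i(0)=0$, $\rho_i(0)=0$ records that at the origin the product is zero (the neuron starts at $(\sqrt{2}\alpha,0)$).

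The main obstacle I anticipate is making the sign bookkeeping fully rigorous across phase boundaries: one must verify that $\zeta_i = \operatorname{sgn}(-\nabla_{\beta_i}\mathcal{L}(\beta^{(k)}))\,\rho_i(\tau^{(k)})$ correctly interpolates between the sign the product $u_iv_i$ had when the previous phase ended and the sign the current residual $r^{(k)}$ wants it to have, handling the degenerate case where these disagree (so the neuron's accumulated utility first decreases, the product shrinks to zero, the neuron momentarily revisits its initialization, and then the product regrows with the opposite sign). Showing that the single $\log\cosh$ formula correctly captures this "decrease-then-increase" behavior — rather than just the monotone case — is the delicate point, and is precisely why the $\cosh^{-1}$ (which is non-negative) appears multiplied by the separately-tracked sign $\zeta_i$ rather than being absorbed into $\mathcal{S}_i$ directly.
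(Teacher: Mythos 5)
Your high-level plan matches the paper's: integrate the Riccati solution $\bar{\mathcal{U}}_i = \bar{\mathcal{U}}_i^*\tanh(\delta_i^{(k)} + 4\eta_\alpha\bar{\mathcal{U}}_i^*t)$ to get the telescoped expression, then re-express $\delta_i^{(k)}$ in terms of $\mathcal{S}_i(\tau^{(k)})$ so the $\mathcal{S}_i(\tau^{(k)})$ term cancels. The sign recursion argument is also in the right spirit. However, the central algebraic identity $|\delta_i^{(k)}| = \cosh^{-1}\exp(2\eta_\alpha\mathcal{S}_i(\tau^{(k)}))$ is left essentially unproven in your write-up: you appeal to ``the chain of phases composes'' because $\log\cosh$ is the antiderivative of $\tanh$, and you yourself flag the phase-boundary bookkeeping as the delicate point. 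That composition argument is genuinely subtle — at each boundary $\tau^{(k)}$ both the residual and hence $\bar{\mathcal{U}}_i^*$ change, so the integration constant $\delta_i^{(k+1)}$ for the next phase is \emph{not} simply $\delta_i^{(k)} + 4\eta_\alpha\bar{\mathcal{U}}_i^*\Delta\tau$; one would need to verify that it agrees up to sign, which is extra work the proposal does not carry out.

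The idea the proposal is missing is that this identity can be read off \emph{instantaneously} from the neuron's state at time $\tau^{(k)}$, with no phase composition at all. By \cref{lemma:dln-max-utility}, $\delta_i^{(k)} = \tanh^{-1}\!\bigl(\bar{\mathcal{U}}_i(\tau^{(k)})/\bar{\mathcal{U}}_i^*\bigr) = \mathrm{sgn}\bigl(-\nabla_{\beta_i}\mathcal{L}(\beta^{(k)})\bigr)\tanh^{-1}\!\bigl(2u_iv_i/\|\theta_i\|^2\bigr)$. Then the hyperbolic identity $\tanh^{-1}(x) = \mathrm{sgn}(x)\cosh^{-1}\!\bigl(1/\sqrt{1-x^2}\bigr)$ together with the conserved quantity $u_i^2 - v_i^2 = 2\alpha^2$ from \cref{lemma:dln-neuron-identities} — which gives $1 - (2u_iv_i/\|\theta_i\|^2)^2 = (2\alpha^2/\|\theta_i\|^2)^2$ — yields $\delta_i^{(k)} = \mathrm{sgn}\bigl(-\nabla_{\beta_i}\mathcal{L}(\beta^{(k)})\bigr)\,\rho_i(\tau^{(k)})\,\cosh^{-1}\!\bigl(\|\theta_i(\tau^{(k)})\|^2/(2\alpha^2)\bigr)$. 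The final step is the radial evolution $\|\theta_i(\tau^{(k)})\|^2 = 2\alpha^2\exp\!\bigl(2\eta_\alpha\mathcal{S}_i(\tau^{(k)})\bigr)$ (from \cref{eq:dormant-norm-sol} with $\kappa = 2$, the initialization $\|\theta_i(0)\| = \sqrt{2}\alpha$, and $\eta = \eta_\alpha$). You actually invoke \cref{lemma:dln-neuron-identities} later for the sign recursion, but not here where it does the real work. Without this step the proposal does not close, so I would call this a genuine gap rather than an alternative route.
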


\begin{proof}
    Using the expression for the normalized utility derived in \cref{lemma:dln-utility}, we can derive an exact expression for the integral of the normalized utility, i.e., the accumulated utility:
    \begin{equation}
        \label{eq:dln-accum-utilty-expanded}
        \textstyle
        \mathcal{S}_i\left(\tau^{(k)} + t\right) = \mathcal{S}_i\left(\tau^{(k)}\right) + \int_0^t 2 \bar{\mathcal{U}}_i(s) ds = \mathcal{S}_i\left(\tau^{(k)}\right) + \frac{1}{2\eta_\alpha}\log\left(\frac{\cosh\left(4\eta_\alpha\bar{\mathcal{U}}_i^* t + \delta_i^{(k)}\right)}{\cosh\left(\delta_i^{(k)}\right)}\right).
    \end{equation}
    Using the hyperbolic identity $\tanh^{-1}(x) = \mathrm{sgn}(x) \cosh^{-1}\left(\frac{1}{\sqrt{1 - x^2}}\right)$, we can express the constant $\delta_i^{(k)}$ introduced in \cref{lemma:dln-utility}:
    \begin{align}
        \delta_i^{(k)} &= \mathrm{sgn}\left(-\nabla_{\beta_i}\mathcal{L}\left(\beta^{(k)}\right)\right)\tanh^{-1}\left(\frac{2u_i(\tau^{(k)})v_i(\tau^{(k)})}{\|\theta_i(\tau^{(k)})\|^2}\right) &&\text{\cref{lemma:dln-max-utility}}\\
        &= \mathrm{sgn}\left(-\nabla_{\beta_i}\mathcal{L}\left(\beta^{(k)}\right)\right)\rho_i\left(\tau^{(k)}\right)\cosh^{-1}\left(\frac{\|\theta(\tau^{(k)})\|^2}{2\alpha^2}\right) &&\text{\cref{lemma:dln-neuron-identities}}\\
        &= \mathrm{sgn}\left(-\nabla_{\beta_i}\mathcal{L}\left(\beta^{(k)}\right)\right)\rho_i\left(\tau^{(k)}\right)\cosh^{-1}\left(\exp\left(2\eta_\alpha \mathcal{S}_i\left(\tau^{(k)}\right)\right)\right),
    \end{align}
    where in the last equality we used the simplification $\|\theta(\tau^{(k)})\|^2 = 2\alpha^2\exp\left(2\eta_\alpha \mathcal{S}_i\left(\tau^{(k)}\right) \right)$.
    Substituting this expression into \cref{eq:dln-accum-utilty-expanded}, notice that the denominator inside the logarithm simplifies substantially, as the $\mathcal{S}_i(\tau^{(k)})$ term cancels out, yielding \cref{eq:app-dln-accumulated-utility}.
    Finally, by \cref{lemma:dln-neuron-identities}, $\rho_i$ changes sign only when $\mathcal{S}_i = 0$, yielding \cref{eq:dln-sgn-prod-update}.
\end{proof}

\begin{corollary}
    \label{corr:dln-next-neuron-agf}
    After the $k^{\mathrm{th}}$ iteration of AGF, the next dormant neuron to activate is $i^* = \argmin_{i \in \mathcal{D}} \Delta\tau_i$ where
    \begin{equation}
        \Delta\tau_i = \frac{\cosh^{-1}\exp \left(2\eta_\alpha\right) - \zeta_i\cosh^{-1}\exp \left(2\eta_\alpha \mathcal{S}_i(\tau^{(k)})\right)}{2\eta_\alpha \cdot 2\bar{\mathcal{U}}_i^*},
    \end{equation}
    $\zeta_i = \mathrm{sgn}\left(-\nabla_{\beta_i}\mathcal{L}\left(\beta^{(k)}\right)\right)\rho_i\left(\tau^{(k)}\right)$, and the next jump time $\tau^{(k+1)} = \tau^{(k)} + \Delta \tau_{i^*}$.
\end{corollary}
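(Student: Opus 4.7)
The plan is to invert the closed-form expression for the accumulated utility in \cref{thrm:dln-accum-util} and solve for the time $\Delta\tau_i$ at which each dormant neuron reaches its activation threshold, then take the minimizer over $\mathcal{D}$. Because the diagonal linear setup initializes $\theta_i(0) = (\sqrt{2}\alpha, 0)$ for every $i$, the quantity $c_i = -\log\|\theta_i(0)\| = \eta_\alpha$ is common across neurons, so the time-acceleration choice $\eta = \eta_\alpha$ in \cref{eq:jump-time-neuron} collapses the threshold $c_i/\eta$ to $1$. The first dormant neuron to activate after time $\tau^{(k)}$ is therefore $i^* = \argmin_{i \in \mathcal{D}}\Delta\tau_i$, where $\Delta\tau_i$ is the smallest positive solution of $\mathcal{S}_i(\tau^{(k)} + \Delta\tau_i) = 1$.

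Substituting \cref{eq:app-dln-accumulated-utility} and clearing the outer $\tfrac{1}{2\eta_\alpha}\log\cosh$ wrapper by applying $\exp$ and then $\cosh^{-1}$ reduces this to an affine equation in $\Delta\tau_i$:
\begin{equation*}
    2\eta_\alpha\left(2\bar{\mathcal{U}}_i^* \Delta\tau_i + \tfrac{\zeta_i}{2\eta_\alpha}\cosh^{-1}\exp(2\eta_\alpha \mathcal{S}_i(\tau^{(k)}))\right) = \pm\cosh^{-1}\exp(2\eta_\alpha).
\end{equation*}
Since $\bar{\mathcal{U}}_i^* > 0$, the earliest positive time corresponds to the $+$ branch; isolating $\Delta\tau_i$ yields the stated formula. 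The arguments to $\cosh^{-1}$ both lie in $[1,\infty)$ because $\mathcal{S}_i(\tau^{(k)}) \geq 0$ (immediate from $\log\cosh \geq 0$ in \cref{eq:app-dln-accumulated-utility}) and $\eta_\alpha > 0$ in the small-initialization regime, so the inversion is well-defined.

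The only real subtlety is the role of the sign $\zeta_i = \operatorname{sgn}(-\nabla_{\beta_i}\mathcal{L}(\beta^{(k)}))\rho_i(\tau^{(k)}) \in \{-1, +1\}$. When $\zeta_i = +1$, the current angular configuration is already favorable for the active residual and the subtraction in the numerator shortens $\Delta\tau_i$, reflecting that pre-aligned dormant neurons activate sooner. When $\zeta_i = -1$, the sign $\rho_i$ of the product $u_iv_i$ disagrees with the descent direction, so the term adds instead, lengthening $\Delta\tau_i$; physically, the neuron must first return through the origin to flip sign, consistent with the recursion in \cref{eq:dln-sgn-prod-update}. I do not expect a genuine analytical obstacle: the proof is essentially bookkeeping about the branch of $\cosh^{-1}$ and verifying that the formula reproduces both the aligned and misaligned cases. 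Once these checks are in place, the $\argmin$ definition of $i^*$ and the update $\tau^{(k+1)} = \tau^{(k)} + \Delta\tau_{i^*}$ follow directly from the infimum in \cref{eq:jump-time-neuron}.
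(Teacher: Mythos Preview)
Your proposal is correct and follows the same approach as the paper, which simply states that one inverts the accumulated-utility expression from \cref{thrm:dln-accum-util} to solve $\mathcal{S}_i(\tau^{(k)}+t)=1$ for $t$. You supply considerably more detail than the paper does—the justification that the activation threshold collapses to $1$, the explicit clearing of the $\log\cosh$ wrapper, the branch analysis for $\cosh^{-1}$, and the physical interpretation of $\zeta_i$—all of which is sound and would strengthen the exposition; one nitpick is that $\zeta_i$ can be $0$ at $k=0$ (since $\rho_i(0)=0$), but in that case $\mathcal{S}_i(\tau^{(k)})=0$ as well and the formula remains valid.
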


\begin{proof}
    From \cref{thrm:dln-accum-util}, we solve for the time $t$ such that $\mathcal{S}_i(\tau^{(k)} + t) = 1$ by inverting the expression for accumulated utility.
    This time is $\Delta\tau_i$.
\end{proof}

\subsection{Cost minimization}
\label{app:dln-cost-min}

Active neurons represents non-zero regression coefficients of $\beta$.
During the cost minimization step, the active neurons work to collectively minimize the loss.
When a neuron activates it does so with a certain sign $\mathrm{sgn}(u_iv_i) = -\operatorname{sgn}(\nabla_{\beta_i} \mathcal{L}(\beta_\mathcal{A}))$.
If during the cost minimization phase a neuron changes sign, then it can do so only by returning to dormancy first.
This is due to the fact that throughout the gradient flow dynamics the quantity $u_i^2 - v_i^2 = 2\alpha^2$ is conserved and thus in order to flip the sign of the product $u_iv_i$, the parameters must return to their initialization.
As a result, the critical point reached during the cost minimization step is the unique solution to the constrained optimization problem,
\begin{equation}
    \beta_\mathcal{A}^* = \argmin_{\beta \in \mathbb{R}^d} \mathcal{L}(\beta) 
    \quad \text{subject to} \quad
    \begin{cases}
        \beta_i = 0 & \text{if } i \notin \mathcal{A},\\
        \beta_i \cdot \mathrm{sgn}(u_i v_i) \ge 0 & \text{if } i \in \mathcal{A},
    \end{cases}
\end{equation}
where uniqueness follows from the general position assumption.
All coordinates where $(\beta_\mathcal{A}^*)_i = 0$ are dormant at the next step of AGF.

\clearpage

\begin{figure}[t]
    \centering
    \begin{subfigure}{0.235\textwidth}
        \centering
        \includegraphics[width=\linewidth]{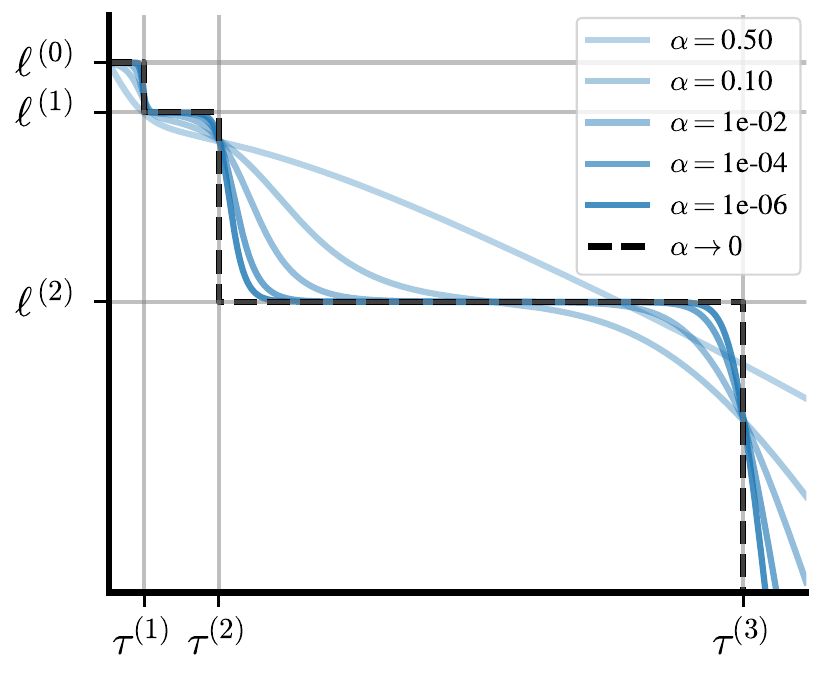}
        \caption{Loss $\mathcal{L}(t)$}
        \label{fig:panel_d}
    \end{subfigure}
    \hfill
    \begin{subfigure}{0.195\textwidth}
        \centering
        \includegraphics[width=\linewidth]{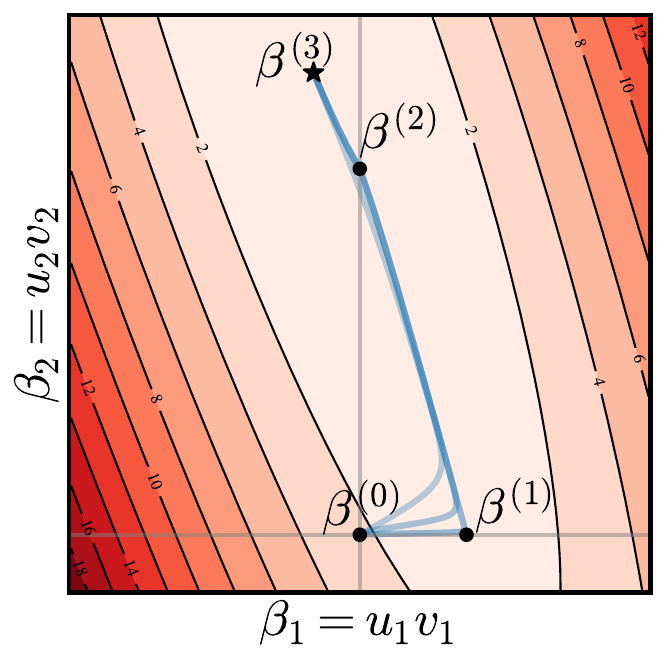}
        \caption{Function $\beta(t)$}
        \label{fig:panel_c}
    \end{subfigure}
    \hfill
    \begin{subfigure}{0.26\textwidth}
        \centering
        \includegraphics[width=\linewidth]{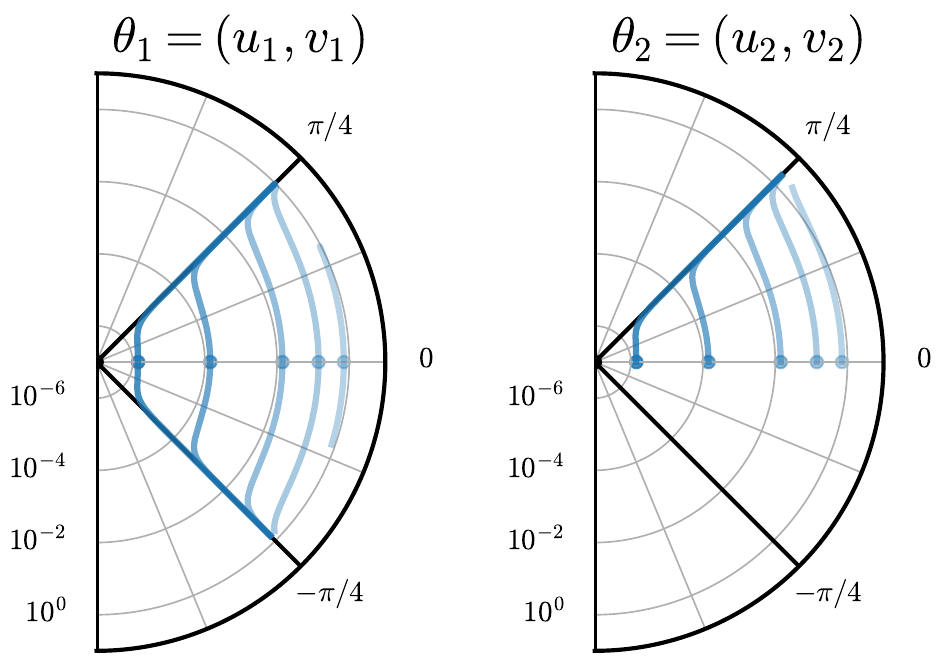}
        \vspace{-2pt}
        \caption{Parameters $\theta(t)$}
        \label{fig:panel_a}
    \end{subfigure}
    \hfill
    \begin{subfigure}{0.29\textwidth}
        \centering
        \includegraphics[width=\linewidth]{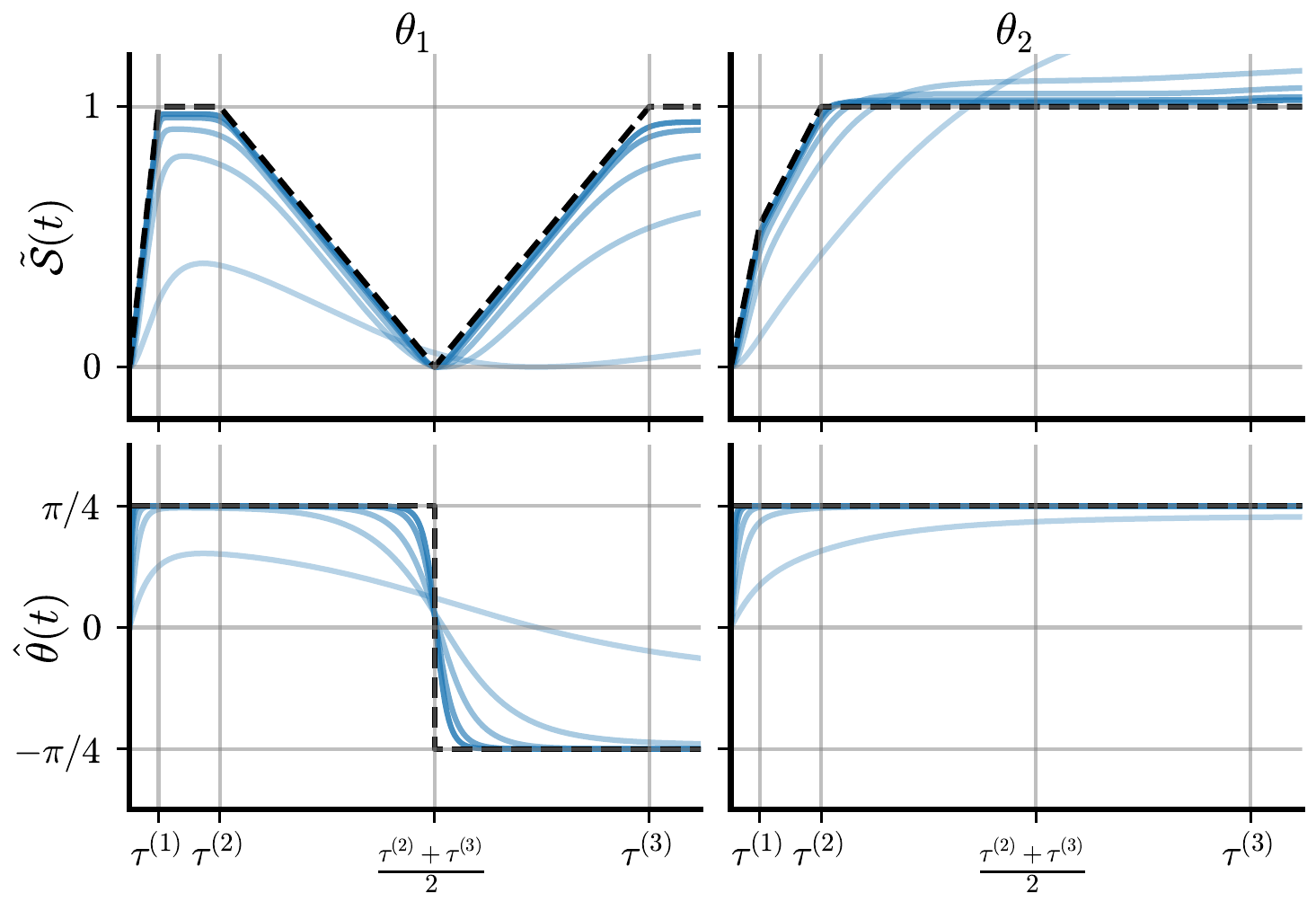}
        \caption{Utility $\tilde{\mathcal{S}}(t)$ and $\hat{\theta}(t)$}
        \label{fig:panel_b}
    \end{subfigure}
    \caption{\textbf{AGF $=$ GF as $\alpha \to 0$ in diagonal linear networks.}
    We consider a diagonal linear network $\beta = u \odot v \in \mathbb{R}^2$, trained by gradient flow from initialization $u = \sqrt{2}\alpha \mathbf{1}$, $v = 0$, with varying $\alpha$, inspired by Figure 2 of \citet{pesme2023saddle}.
    This example is special as it demonstrates how an active neuron, in this case $\beta^{(1)}$, can return to dormancy during the cost minimization phase, as discussed theoretically in \cref{app:derivations-framework}.
    In (a) we plot the loss over accelerated time and in (b) the loss landscape over $\beta$.
    For small $\alpha$, trajectories evolve from $\beta^{(0)}$ to $\beta^{(3)}$, passing near intermediate saddle points $\beta^{(1)}$ and $\beta^{(2)}$.
    Each saddle point is associated with a plateau in the loss.
    As $\alpha \to 0$, gradient flow spends all its time at these saddles, jumping instantaneously between them at $\tau^{(1)}, \tau^{(2)}, \tau^{(3)}$, matching the stepwise loss drops.
    In (c) we plot trajectories in parameter space and in (d) the same dynamics visualized in terms of their accumulated utility and angle.
    The jump times between critical points correspond to when the accumulated utility satisfies $\tilde{\mathcal{S}}_i(\tau) = 1$.
    When the first coordinate returns to dormancy, the accumulated utility first touches zero, causing the angle to flip sign, before reactivating with the opposite sign.
    %
    }
    \label{fig:pesme-example}
\end{figure}


\subsection{AGF in action}

Combining the results of utility maximization (\cref{app:dln-util-max}) and cost minimization (\cref{app:dln-cost-min}) yields an explicit recursive expression for the sequence generated by AGF.
For each neuron we only need to track the accumulated utility $\mathcal{S}_i(t)$ and the sign $\rho_i(t)$, which at initialization are both zero.
We can now consider the relationship between the sequence produced by AGF in the vanishing initialization limit $\alpha \to 0$ and the sequence for gradient flow in the same initialization limit introduced by \citet{pesme2023saddle}.

\begin{wrapfigure}{r}{0.6\textwidth}
    \vspace{-15pt}
    \begin{minipage}{\linewidth}
        \footnotesize
        \begin{algorithm}[H]
            \caption{\citet{pesme2023saddle}}
            \label{alg:successive_saddles}
        
        
            \textbf{Initialize:} $t \gets 0$, $\beta \gets 0 \in \mathbb{R}^d$, $\mathcal{S} \gets 0 \in \mathbb{R}^d$\;
        
            \While{$\nabla \mathcal{L}(\beta) \neq 0$}{
        
                \vspace{-0.5mm}
                \begin{tikzpicture}[overlay,remember picture]
                    \node[opacity=0.0, rounded corners,inner sep=3pt, yshift=-3mm] (A) at (0.0,0) {\phantom{A}};
                    \node[opacity=0.0, rounded corners,inner sep=3pt, yshift=-10mm] (B) at (7.6,0) {\phantom{B}};
                    \draw[fill=blue!20,opacity=0.5] (A.north west) rectangle (B.south east);
                \end{tikzpicture}
                
                $\mathcal{D} \gets \{ j \in [d] \mid \nabla \mathcal{L}(\beta)_j \neq 0 \}$
                
                    $\tau^* \gets \inf \left\{ \tau_i > 0 \mid \exists i \in \mathcal{D}, \; \mathcal{S}_i - \tau_i \nabla \mathcal{L}(\beta)_i = \pm 1 \right\}$
        
                $t \gets t + \tau^*, \quad \mathcal{S} \gets \mathcal{S} - \tau^* \nabla \mathcal{L}(\beta)$
        
                \vspace{-2mm}
                \begin{tikzpicture}[overlay,remember picture]
                    \node[opacity=0.0, rounded corners,inner sep=3pt, yshift=-2.25mm] (A) at (0.0,0) {\phantom{A}};
                    \node[opacity=0.0, rounded corners,inner sep=3pt, yshift=-7.5mm] (B) at (7.6,0) {\phantom{B}};
                    \draw[fill=red!20,opacity=0.5] (A.north west) rectangle (B.south east);
                \end{tikzpicture}
                
                $\beta = \arg\min \mathcal{L}(\beta) \text{ where } 
                \beta \in$ \scalebox{0.65}{$\left\{\beta \in \mathbb{R}^d \;\middle|\;
                \begin{aligned}
                    &\beta_i \geq 0 &&\text{if } \mathcal{S}_i = +1, \\
                    &\beta_i \leq 0 &&\text{if } \mathcal{S}_i = -1, \\
                    &\beta_i = 0 &&\text{if } \mathcal{S}_i \in (-1,1)
                \end{aligned}\right\}$}
                \vspace{1mm}
            }
        
            \Return Sequence of $(\beta, t)$\;
        \end{algorithm}
        \vspace{-2mm}
        \caption*{\footnotesize{We adapt the original notation to fit our framework, highlighting utility maximization (blue) and cost minimization (red) steps.}}
    \end{minipage}
    \vspace{-20pt}
\end{wrapfigure}

In \citet{pesme2023saddle}, they prove that in the vanishing initialization limit $\alpha \to 0$, the trajectory of gradient flow, under accelerated time $\tilde{t}_\alpha(t) = -\log(\alpha) \cdot t$, converges towards a piecewise constant limiting process corresponding to the sequence of saddles produced by \cref{alg:successive_saddles}.
This algorithm can be split into two steps, which match exactly with the two steps of AGF in the vanishing initialization limit.

As in AGF, at each iteration of their algorithm a coefficient that was zero becomes non-zero.
The new coefficient comes from the set $\{i \in [d] : \nabla \mathcal{L}(\beta)_i \neq 0\}$, which is equivalent to the set of dormant neurons with non-zero utility, as all active neurons will be in equilibrium from the previous cost minimization step.
The index for the new active coefficient is determined by the following expression,
\begin{equation}
    i_* = \argmin \{\tau_i > 0 : \exists i \in \mathcal{D}, s_i - \tau_i \nabla\mathcal{L}(\beta)_i = \pm 1\}.
\end{equation}
Because $\tau_i > 0$ and $s_i \in (-1, 1)$ for all coefficients $i \in \mathcal{D}$, then this expression only makes sense if we choose the boundary associated with $\mathrm{sgn}(\nabla \mathcal{L}(\beta)_i)$.
Taking this into account, we get the following simplified expression for the jump times proposed by \citet{pesme2023saddle},
\begin{equation}
    \tau_i = \frac{1 + \mathrm{sgn}(\nabla \mathcal{L}(\beta)_i)s_i}{|\nabla \mathcal{L}(\beta)_i|}.
\end{equation}
This expression is equivalent to the expression given in \cref{corr:dln-next-neuron-agf} in the limit $\eta_\alpha \to \infty$.
To see this, we use the asymptotic identity $\cosh^{-1}(\exp(x)) \sim x + \log 2$ as $x \to \infty$.
Combined with the simplification $2\bar{\mathcal{U}}_i^* = | \nabla_{\beta_i} \mathcal{L}(\beta)|$, this allows us to simplify the expression for the jump time and show that $\rho_i\mathcal{S}_i = s_i$ where $s_i$ is the integral from \cref{alg:successive_saddles}.

The second step of \cref{alg:successive_saddles}, is that the new $\beta$ is determined by the following constrained minimization problem,
\begin{equation}
    \beta^* = \argmin_{\beta \in \mathbb{R}^d} \mathcal{L}(\beta)
    \quad \text{subject to} \quad
    \begin{cases}
        \beta_i \ge 0, & \text{if } s_i = 1 \\
        \beta_i \le 0, & \text{if } s_i = -1 \\
        \beta_i = 0,   & \text{if } s_i \in (-1, 1)
    \end{cases}
\end{equation}
Using the correspondence $\rho_i = \mathrm{sgn}(s_i)$, and noting that neurons with $s_i \in (-1, 1)$ correspond to the dormant set, this constrained optimization problem is equivalent to the cost minimization step of AGF, presented in \cref{app:dln-cost-min}.



\clearpage
\section{Complete Proofs for Fully Connected Linear Networks}
\label{app:linear}
In this section, we provide the details behind the results around linear networks in Section \ref{sec:unifying-analysis}.

We begin by clarifying the notion of `neuron' in this context. 
As briefly explained in Section \ref{sec:unifying-analysis}, due to the symmetry of the parametrization of fully-connected linear networks, the usual notion of a neuron does not define a canonical decomposition of $f$ into rank-1 maps. 
Instead, the singular value decomposition of the linear map $AW$ computed by the network, defines such a canonical decomposition, being the unique orthogonal one. 
Therefore, we will think of neurons as basis vectors $(\tilde{a}_i, \tilde{w}_i) \in \mathbb{R}^{c + d}$ under orthogonality constraints. 
These vectors align with the singular vectors and are partitioned into dormant and active sets, $\mathcal{D}$ and $\mathcal{A}$, based on whether their corresponding singular value is $\mathcal{O}(1)$.

\subsection{Utility maximization}
Differently from the usual setting of AGF, the orthogonality constraint on the dormant basis vectors implies that the utility function is not decoupled. Instead, it is maximized over the space of $|\mathcal{D}|$ orthonormal basis vectors, i.e., the Stiefel manifold.  
\begin{lemma}
    After the $k^{\mathrm{th}}$ iteration of AGF, the utility function of the $i^{\mathrm{th}}$ dormant basis vector with parameters $\theta_i = (\tilde{a}_i, \tilde{w}_i)$ is:
    \begin{equation}
  \mathcal{U}_i\left(\theta_i; r^{(k)}\right) = -\tilde{a}_i^\intercal \nabla_\beta \mathcal{L}\left( \beta_\mathcal{A}^{(k)} \right) \tilde{w}_i.
 \end{equation}
    The total utility $\sum_{i \in \mathcal{D}}  \mathcal{U}_i\left(\theta_i; r^{(k)}\right) $ is maximized on the Stiefel manifold when $\{\tilde{a}_i, \tilde{w}_i\}_{i=1}^{|\mathcal{D}|}$ coincides with the set of the top $|\mathcal{D}| = H - k$ left and right singular vectors of $\nabla_\beta \mathcal{L}(\beta_\mathcal{A}^{(k)})$, resulting in a maximal utility value for the $i^{\mathrm{th}}$ dormant basis vector of $\bar{\mathcal{U}}_i^* = \sigma_i^{(k)} / 2$, where $\sigma_i^{(k)}$ is the corresponding singular value. Moreover, $\mathcal{U}_i$ has no other local maxima.
\end{lemma}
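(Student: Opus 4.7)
The first identity follows by direct substitution. With $f_i(x;\theta_i) = \tilde{a}_i \tilde{w}_i^{\intercal} x$ and residual $r(x) = (B - \beta_\mathcal{A}^{(k)})x$, the utility $\mathcal{U}_i = \mathbb{E}_x[\langle f_i, r\rangle]$ expands to $\tilde{a}_i^{\intercal}(B - \beta_\mathcal{A}^{(k)})\Sigma_{xx}\tilde{w}_i$. Since $\nabla_\beta \mathcal{L}(\beta) = (\beta - B)\Sigma_{xx}$ for the MSE loss under Gaussian inputs, this equals $-\tilde{a}_i^{\intercal} G \tilde{w}_i$ with $G = \nabla_\beta \mathcal{L}(\beta_\mathcal{A}^{(k)})$, proving the first claim.

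For the maximization, I would recast the problem as a trace-eigenvalue problem on the Stiefel manifold. Stack the dormant basis vectors as columns $Z_i = (\tilde{a}_i^{\intercal}, \tilde{w}_i^{\intercal})^{\intercal} \in \mathbb{R}^{c+d}$ of a matrix $Z \in \mathbb{R}^{(c+d) \times |\mathcal{D}|}$; the orthonormality constraint becomes $Z^{\intercal}Z = I_{|\mathcal{D}|}$. Introducing the symmetric block matrix
\begin{equation}
    \tilde{G} = \tfrac{1}{2}\begin{pmatrix} 0 & -G \\ -G^{\intercal} & 0 \end{pmatrix},
\end{equation}
a direct computation gives $\sum_{i \in \mathcal{D}} \mathcal{U}_i = \mathrm{tr}(Z^{\intercal} \tilde{G} Z)$. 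Using the SVD $G = U \Sigma V^{\intercal}$, one verifies that $\tilde{G}$ has eigenvalues $\pm \sigma_j/2$ with orthonormal eigenvectors $(u_j^{\intercal}, \mp v_j^{\intercal})^{\intercal}/\sqrt{2}$. By Ky Fan's maximum principle, the trace is maximized when the columns of $Z$ span the top $|\mathcal{D}|$ eigenvectors of $\tilde{G}$, yielding $\tilde{a}_i = u_i/\sqrt{2}$ and $\tilde{w}_i = -v_i/\sqrt{2}$ (up to the usual SVD sign ambiguity). Substituting back gives $\mathcal{U}_i = \tfrac{1}{2} u_i^{\intercal} G v_i = \sigma_i/2$ for each aligned pair.

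For the uniqueness claim (no other local maxima), I would appeal to the assumption of distinct singular values. Critical points of $\mathrm{tr}(Z^{\intercal}\tilde{G}Z)$ on the Stiefel manifold correspond to choices of $|\mathcal{D}|$ eigenvectors of $\tilde{G}$; any selection that omits one of the top $|\mathcal{D}|$ eigenvalues in favor of a strictly smaller one is a saddle, since an infinitesimal rotation swapping the two admissible eigenvectors strictly increases the objective. This standard second-variation argument, together with distinctness, rules out spurious maxima.

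\textbf{Main obstacle.} The routine part is the utility formula and the trace reformulation. The main technical care goes into the uniqueness statement: the Stiefel manifold admits many critical points, and one must explicitly exhibit the ascent direction (or invoke a second-order analysis) to rule out local maxima other than the top eigenvectors. A secondary subtlety is handling the coupling of the orthogonality constraint across the $\tilde{a}$ and $\tilde{w}$ blocks, which is what motivates the block-matrix trick above rather than treating $\tilde{A}$ and $\tilde{W}$ as independent Stiefel matrices.
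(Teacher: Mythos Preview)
Your proposal is correct and follows essentially the same approach as the paper: the paper's proof simply computes the utility by substitution and then defers to ``the standard theory of Rayleigh quotients over Stiefel manifolds,'' while you spell out that standard theory explicitly via the block matrix $\tilde{G}$ and Ky Fan's principle. Your joint orthonormality constraint $Z^\intercal Z = I$ on the concatenated vectors is the right reading of the paper's setup (and is what yields the factor $1/2$ in $\bar{\mathcal{U}}_i^* = \sigma_i/2$), and your saddle-point argument for uniqueness under distinct singular values is exactly what the cited reference would give.
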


\begin{proof}
After substituting $r^{(k)} = y - \beta_\mathcal{A}^{(k)}x$, the computation of the utility is straightforward. Everything else follows from the standard theory of Reyleigh quotients over Stiefel manifolds \cite{helmke2012optimization}.  
\end{proof}
This means that the gradient flow of the utility function aligns the dormant basis vectors with the singular vectors of $-\nabla_\beta \mathcal{L}(\beta_\mathcal{A}^{(k)}) $.

\subsection{Cost minimization}
As discussed in Section \ref{sec:unifying-analysis}, the cost minimization phase is governed by the well-known Eckart-Young theory of reduced-rank regression \cite{izenman1975reduced}. According to the latter, during cost minimization, the active basis vectors converge to 
\begin{equation}\label{eq:app_betak}    
\beta_\mathcal{A}^{(k)} = \mathbf{P}_{U_k}\Sigma_{yx} \Sigma_{xx}^{-1},
\end{equation}
where $\mathbf{P}_{U_k}$ is the projection onto the top $k$ eigenvectors of $\Sigma_{yx} \Sigma_{xx}^{-1} \Sigma_{yx}^\intercal$. 
Plugging this solution into the expression for the gradient, we find that the utility at the next iteration will be computed with the projection $\mathbf{P}^\perp_{U_k} \Sigma_{yx}$:
\begin{equation}
    -\nabla_\beta \mathcal{L}( \beta_\mathcal{A}^{(k)} ) = \Sigma_{yx} - \beta_\mathcal{A}^{(k)} \Sigma_{xx} = ( \mathbf{I}_c - \mathbf{P}_{U_k}) \Sigma_{yx}.
\end{equation}

\subsection{AGF in action}

\begin{wrapfigure}{r}{0.6\textwidth}
    \footnotesize
    \vspace{-16pt}
    \begin{minipage}{\linewidth}
        \begin{algorithm}[H]
            \caption{Greedy Low-Rank Learning \citet{li2020towards}}
            \label{alg:glrl}
        
            
            \textbf{Initialize:} $r \gets 0$, $W_0 \gets 0 \in \mathbb{R}^{d \times d}$, $U_0(\infty) \in \mathbb{R}^{d \times 0}$
        
            \While{$\lambda_1(-\nabla \mathcal{L}(W_r)) > 0$}{
        
                \vspace{-0.5mm}
                \begin{tikzpicture}[overlay,remember picture]
                    \node[opacity=0.0, rounded corners,inner sep=3pt, yshift=-3mm] (A) at (0.0,0) {\phantom{A}};
                    \node[opacity=0.0, rounded corners,inner sep=3pt, yshift=-11.5mm] (B) at (7.5,0) {\phantom{B}};
                    \draw[fill=blue!20,opacity=0.5] (A.north west) rectangle (B.south east);
                \end{tikzpicture}
                
                $r \gets r + 1$\;
                $u_r \gets$ unit top eigenvector of $-\nabla \mathcal{L}(W_{r-1})$\;
                $U_r(0) \gets [U_{r-1}(\infty) \quad \sqrt{\varepsilon} u_r] \in \mathbb{R}^{d \times r}$\;
        
                \vspace{-2mm}
                \begin{tikzpicture}[overlay,remember picture]
                    \node[opacity=0.0, rounded corners,inner sep=3pt, yshift=-3mm] (A) at (0.0,0) {\phantom{A}};
                    \node[opacity=0.0, rounded corners,inner sep=3pt, yshift=-11.5mm] (B) at (7.5,0) {\phantom{B}};
                    \draw[fill=red!20,opacity=0.5] (A.north west) rectangle (B.south east);
                \end{tikzpicture}
                
                \For{$t = 0,1,\dots, T$}{
                    $U_r(t + 1) \gets U_r(t) - \eta \nabla \mathcal{L}(U_r(t))$\;
                }
                
                $W_r \gets U_r(\infty) U_r^\top (\infty)$
                \vspace{1mm}
            }
            \Return Sequence of $W_r$
        \end{algorithm}
        \vspace{-2mm}
    \end{minipage}
    \vspace{-10pt}
\end{wrapfigure}

Putting together the utility maximization and cost minimization steps, AGF progressively learns the projected OLS solution (\cref{eq:app_betak}), coinciding with the GLRL algorithm by \citet{li2020towards}, shown in \cref{alg:glrl}, with notation from the original work. 
At each stage, the GLRL algorithm selects a top eigenvector (blue: utility maximization) and performs gradient descent in the span of selected directions (red: cost minimization).

We now discuss jump times, and motivate Conjecture \ref{conj:fully-connected-linear}. 
Recall from Section \ref{sec:intant-align} that in this setting, because $\kappa = 2$, in the limit of vanishing initialization, the alignment in the utility maximization phase occurs instantaneously. As a consequence, the individual utility functions remain, essentially, constantly equal to their corresponding maximal value throughout each utility maximization phase. From the definition of accumulated utility with $\eta_\alpha = - \log( \alpha)$, we immediately deduce the recursive relation:
\begin{equation}\label{eq:cumut}
    \mathcal{S}_i\left( \tau^{(k)} + t \right)  =  \mathcal{S}_i\left( \tau^{(k-1)} \right)  +  \sigma_i^{(k)} t.
\end{equation}
However, since $-\nabla_\beta \mathcal{L}( \beta_\mathcal{A}^{(k)} ) = \mathbf{P}^\perp_{U_k} \Sigma_{yx}$ and $-\nabla_\beta \mathcal{L}( \beta_\mathcal{A}^{(k-1)} ) = \mathbf{P}^\perp_{U_{k-1}} \Sigma_{yx}$ have different singular values and vectors, the relation between the values of Equation \eqref{eq:cumut} as $i \in \mathcal{D}$ varies is unclear, and it is hard to determine the first dormant basis vector to accumulate a utility value of $1$. Yet, suppose that the dormant basis vectors align in such a way that the ordering of the corresponding singular values is preserved across the utility maximization phases. In this case, the ordering of the cumulated utilities in Equation \eqref{eq:cumut} is also preserved for all iterations $k$ and all times $t$. In particular, it follows by induction that the  basis vector with index $i_*$ corresponding to the largest eigenvalue $\sigma_{i_*}^{(k)}$ jumps after a time of: 
\begin{equation}\label{eq:rectime}
    \Delta \tau^{(k)} = \frac{1}{\sigma_{i_*}^{(k)}} \left( 1- \mathcal{S}_{i_*}\left( \tau^{(k-1)} \right) \right). 
\end{equation}
Once unrolled, the above recursion is equivalent to the statement on jump times in Conjecture \ref{conj:fully-connected-linear}. 
%

When $\Sigma_{xx}$ commutes with $\Sigma_{yx}^\intercal \Sigma_{yx}$, the left singular vectors of $\Sigma_{yx}$ simultaneously diagonalize $\Sigma_{xx}$ and $\Sigma_{yx} \Sigma_{xx}^{-1} \Sigma_{yx}^\intercal$. In this case, the singular values of $\mathbf{P}^\perp_{U_k}\Sigma_{yx}$ correspond to the bottom ones of $\Sigma_{yx}$, with the same associated singular vectors. Therefore, the dormant basis vectors are correctly aligned already from the first iteration of AGF. By the discussion above, this confirms Conjecture \ref{conj:fully-connected-linear} in this specific scenario. Moreover, the recursive expression from Equation \eqref{eq:rectime} reduces to $\Delta \tau^{(k)} = 1 /\sigma_k - 1 / \sigma_{k-1}$, where $\sigma_k$ is the $k^{\mathrm{th}}$ largest singular value of $\Sigma_{yx}$. Therefore, the jump times are simply:
\begin{equation}
    \tau^{(k)} = \frac{1}{\sigma_k}.
\end{equation}
These values, together with the loss values in Conjecture \ref{conj:fully-connected-linear}, coincide with the ones derived by \citet{gidel2019implicit} for GF at vanishing initialization. This means that under the commutativity assumption, AGF and GF converge to the same limit, similarly to the setting of diagonal linear networks.

When $\Sigma_{xx}$ and $\Sigma_{yx}^\intercal \Sigma_{yx}$ do not commute, in order to prove Conjecture \ref{conj:fully-connected-linear}, it is necessary to understand the relation between the SVD of $\mathbf{P}^\perp_{U_k} \Sigma_{yx}$  and of $ \Sigma_{yx}$, as $k$ varies. The Poincar\'e separation theorem describes this relation for the singular values, stating that they are interlaced, i.e., they alternate between each other when ordered. This is not sufficient, since the geometry of the singular vectors plays a crucial role in the alignment phase. Even though there exist classical results from linear algebra in this direction \cite{davis1970rotation}, we believe that Conjecture \ref{conj:fully-connected-linear} remains open.

\clearpage
\section{Complete Proofs for Attention-only Linear Transformer}
\label{app:transformer}
Here, we provide additional details and derivations for the results around transformers (see \cref{sec:unifying-analysis} for the problem setup and notation). 
This analysis is based on the setting presented by \citet{zhang2025training}, to which we refer the reader for further details.

Note that this scenario does not exactly fit the formalism of Section \ref{sec:framework}, since we are considering a neural architecture different from a fully-connected network. Yet, due to our assumption on the rank of the attention heads, each head behaves as a bottleneck, resembling a `cubical version' of a neuron. Therefore, we will interpret attention heads as `neurons', and apply AGF to this context (with $\kappa = 3$). 
%

We will show in the following sections that at the end of the $k$\textsuperscript{th} iteration of AGF, the $i$\textsuperscript{th} attention head, $h= 1, \ldots, k$, learns the function:
\begin{equation}\label{eq:att_app_fun_k-1}
    f_h(X;\theta_h^*)_{D+1,N+1}=\frac{1}{\textnormal{tr}\Sigma_{xx} + (N+1)\lambda_h}\sum_{n=1}^N y_nx_n^\intercal v_h v_h^\intercal x_{N+1},
\end{equation}
where $\lambda_h$ is the $h$\textsuperscript{th} largest eigenvalue of $\Sigma_{xx}$, and $v_h$ is the corresponding eigenvector. We will proceed by induction on $k$. 

\subsection{Utility maximization}
We prove the following lemma, establishing, inductively, the utility function for the $k$\textsuperscript{th} iteration of AGF.

\begin{lemma}
\label{lemma:app_transformer_utility}
    At the $k^{\mathrm{th}}$ iteration of AGF, assume that the $(k-1)$ active attention heads have the function form as in~\cref{eq:att_app_fun_k-1}. Then, the utility function of a dormant head with parameters $\theta_i = (V_{i},{Q}_i,{K}_i)$ is given by: 
    \begin{equation}
            \mathcal{U}_i\left(\theta_i; r^{(k)}\right)= N V_{i} \ {Q}_i^\intercal  \left(\Sigma_{xx}^2 - \sum_{h=1}^{k-1}  \lambda_h^2  v_h v_h^\intercal\right){K}_i
    \end{equation}
    Moreover, the utility is maximized over normalized parameters by $(\bar{V}_i^*, \bar{Q}_i^*, \bar{K}_i^*) = (\pm1 /\sqrt{3}, \pm v_{k} / \sqrt{3}, \pm v_{k} / \sqrt{3})$, where the sign is determined such that $\bar{\mathcal{U}}_i^{*} = N\lambda_{k}^2 / (3\sqrt{3})$. Moreover, $\mathcal{U}_i$ has no other local maxima.
\end{lemma}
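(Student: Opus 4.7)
The plan is to split the argument into two mostly independent tasks: (a) derive the closed form of $\mathcal{U}_i$ by evaluating a Gaussian expectation, and (b) maximize the resulting trilinear form over the unit sphere.

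For task (a), I would write the residual at iteration $k$ as $r(X) = y_{N+1} - \sum_{h=1}^{k-1} A_h \sum_{m=1}^N y_m x_m^\intercal v_h v_h^\intercal x_{N+1}$ with $A_h = 1/(\text{tr}\,\Sigma_{xx} + (N+1)\lambda_h)$, and then evaluate $\mathcal{U}_i = \mathbb{E}[f_i \cdot r]$ by integrating first over $\beta \sim \mathcal{N}(0, I_d)$ (which turns every factor $y_p y_q$ into $x_p^\intercal x_q$) and then over the Gaussian contexts $x_n$ via Wick contractions. The target term $\mathbb{E}[f_i \, y_{N+1}]$ quickly reduces to $N V_i Q_i^\intercal \Sigma_{xx}^2 K_i$. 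The active-head terms are the core computation: the double sum $\sum_{n,m}$ splits into a diagonal part $n=m$ (requiring the fourth Gaussian moment $\mathbb{E}[\|x\|^2 x x^\intercal] = \text{tr}(\Sigma_{xx})\Sigma_{xx} + 2\Sigma_{xx}^2$ from Isserlis' theorem) and an off-diagonal part $n\neq m$ (a simpler contraction among three independent Gaussians). Both parts end up proportional to $Q_i^\intercal v_h v_h^\intercal K_i$, and I expect their sum to factor as $N V_i \lambda_h^2 [\text{tr}\,\Sigma_{xx} + (N+1)\lambda_h] \, Q_i^\intercal v_h v_h^\intercal K_i$, so that the prefactor $A_h$ exactly cancels the bracket. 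Summing over $h<k$ and using $\Sigma_{xx}^2 = \sum_j \lambda_j^2 v_j v_j^\intercal$ yields the claimed expression.

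For task (b), writing $\mathcal{U}_i = N V_i Q_i^\intercal M K_i$ with $M = \sum_{j\geq k}\lambda_j^2 v_j v_j^\intercal$, the operator norm is $\lambda_k^2$ with top eigenvector $\pm v_k$. The variational bound $|Q_i^\intercal M K_i| \leq \lambda_k^2 \|Q_i\|\|K_i\|$ saturates iff $Q_i$ and $K_i$ both align with $v_k$, and AM-GM applied to $V_i^2, \|Q_i\|^2, \|K_i\|^2$ (which sum to $1$) gives $|V_i|\|Q_i\|\|K_i\| \leq 3^{-3/2}$ with equality iff each equals $1/3$. Combining yields $|\mathcal{U}_i| \leq N\lambda_k^2/(3\sqrt{3})$, attained exactly at the configurations in the lemma, with the sign of $V_i$ fixed so that the triple product is positive. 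To rule out spurious local maxima, I would inspect the Lagrange stationarity equations $N Q_i^\intercal M K_i = 2\lambda V_i$, $N V_i M K_i = 2\lambda Q_i$, $N V_i M Q_i = 2\lambda K_i$, which force $Q_i$ and $K_i$ to lie in a common eigenspace of $M$; under the distinct-eigenvalue assumption on $\Sigma_{xx}$, every critical point then corresponds to some eigenvalue $\mu \in \{\lambda_j^2\}_{j\geq k}$ with objective $|\mu|/(3\sqrt{3})$, so only $\mu=\lambda_k^2$ is a local maximum while the rest are saddles.

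The sharpest technical step is the fourth-moment computation in the diagonal $n=m$ case and verifying that its sum with the $N(N-1)$ off-diagonal contributions exactly reconstitutes the factor $\text{tr}\,\Sigma_{xx} + (N+1)\lambda_h$. This cancellation is precisely what makes the inductive structure of the lemma work: it is what produces the clean ``projection-removed'' form $\Sigma_{xx}^2 - \sum_{h<k}\lambda_h^2 v_h v_h^\intercal$, and any algebraic slip would leave residual $\text{tr}\,\Sigma_{xx}$ or $\lambda_h$ terms in the utility, breaking the subsequent principal-component interpretation.
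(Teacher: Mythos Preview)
Your proposal is correct and follows essentially the same route as the paper. The paper organizes the Gaussian computation by first integrating out $\beta$ and $x_{N+1}$, then recognizing the remaining expectation as $\mathbb{E}\big[(\sum_n x_n x_n^\intercal)^2\big] = N\,\Sigma_{xx}\,\mathrm{tr}\,\Sigma_{xx} + N(N+1)\Sigma_{xx}^2$, which is exactly your diagonal/off-diagonal split repackaged; the cancellation against $A_h$ is identical. For the maximization and the absence of spurious local maxima, the paper simply invokes ``standard theory of Rayleigh quotients,'' whereas you spell out the AM--GM bound and the Lagrange stationarity argument---your version is more explicit but not a different idea.
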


\begin{proof}
By assumption, we have: 
\begin{equation} 
 \mathcal{U}_i\left(\theta_i; r^{(k)}\right)=\mathbb{E}_{x_1,...,x_{N+1},\beta} \left[\left(y_{N+1}- \sum_{h=1}^{k-1}f_h(X;\theta_h^*)_{D+1,N+1}\right)f_i(X,\theta_i)_{D+1,N+1}\right]. 
\end{equation}
The first term, which coincides with the initial utility, can be computed as: 
\begin{equation}\label{eq:utility_init_transformer}
\begin{aligned}
 \mathcal{U}_i\left(\theta_i;y\right) &=\mathbb{E}_{x_1,...,x_{N+1},\beta} \left[y_{N+1}f_i(X,\theta_i)_{D+1,N+1}\right]  \\ 
    &=\mathbb{E}_{x_1,...,x_{N+1},\beta} \left[\beta^\intercal x_{N+1} V_{i}\sum_{n=1}^N y_nx_n^\intercal {K}_i {Q}_i^\intercal x_{N+1}\right]  \\   
    &=V_{i} \ Q_i^\intercal \mathbb{E}_{x_{N+1}} [x_{N+1} x_{N+1}^\intercal] \mathbb{E}_{\beta}[\beta \beta^\intercal] \sum_{n=1}^N \mathbb{E}_{x_n}[x_n x_n^\intercal] {K}_i  \\ 
    &= N V_{i} \ {Q}_i^\intercal \Sigma_{xx}^2 {K}_i. 
\end{aligned}
\end{equation}
Denote $A_h=(\textnormal{tr}\Sigma_{xx} + (N+1)\lambda_k)^{-1}$. Then, the summand can be computed as:
\begin{equation}   
\begin{aligned}
\mathbb{E}_{x_1,...,x_{N+1},\beta} &\left[f_h^* (X,\theta_h)_{D+1,N+1} f_i(X,\theta_i)_{D+1,N+1}\right] \\
&= A_h V_{i}\mathbb{E}_{x_1,...,x_{N+1},\beta} \text{tr}\left(\beta\beta^\intercal   \sum_{n=1}^N x_nx_n^\intercal {K}_i {Q}_i^\intercal x_{N+1} x_{N+1}^\intercal  v_h v_h^\intercal 
\sum_{n=1}^N x_nx_n^\intercal  \right) \\
&= A_h V_{i}\text{tr}\left(  {K}_i{Q}_i^\intercal \Sigma_{xx} v_h v_h^\intercal 
\mathbb{E}_{x_1,\ldots, x_N} 
 \left(\sum_{n=1}^N x_nx_n^\intercal \right)^2 \right) \\
&= A_h V_{i}\text{tr}\left(  {K}_i {Q}_i ^\intercal \lambda_h  v_h v_h^\intercal 
\left(N\Sigma_{xx} \text{tr}\Sigma_{xx} + N(N+1) \Sigma_{xx}^2\right) \right) \\
&= N V_{i} {Q}_i^\intercal  \lambda_h^2  v_h v_h^\intercal {K}_i,
\end{aligned}
\end{equation}
where in the second equality, we have used the fact that
\begin{equation} \label{eq:expectation_of_emp_covariance_square}  
\begin{aligned}
\mathbb{E}_{x_1,...,x_{N}}\left[\left(\sum_{n=1}^N x_n x_n^\intercal\right)^2 \right] &= \mathbb{E}_{x_1,...,x_{N}}\left[\sum_{n=1}^N (x_n x_n^\intercal)^2 +2\sum_{1\le i<j\le N }x_i x_i^\intercal x_j x_j^\intercal \right] \\
    &= N\Sigma_{xx} \text{tr}\Sigma_{xx} + N(N+1) \Sigma_{xx}^2.
\end{aligned}
\end{equation}
This provides the desired expression for the utility, which corresponds to a Rayleigh quotient. 
Since the largest eigenvalue of $(\Sigma_{xx}^2 - \sum_{h=1}^{k-1}  \lambda_h^2  v_h v_h^\intercal)$ is $\lambda_{k}^2$, the rest of the statement follows from the standard theory of Rayleigh quotients. 
\end{proof}




\subsection{Cost minimization}\label{sec:costmin_attn_app}
After utility maximization, a new head aligns to the corresponding eigenvector and becomes active. This implies that at the beginning of the cost minimization phase of the $k^{\mathrm{th}}$ iteration of AGF, 
the $k$ active attention heads have
parameters $\theta_h= (V_h, Q_h, K_h)$, such that $Q_h$ and $K_h$ coincide up to a multiplicative scalar to $v_h$ for all $1\le h \le k$. Moreover, \cref{eq:att_app_fun_k-1} holds for $1\le h<k$.


We wish to show that during the cost minimization phase, the newly-activated head with parameters $\theta_k$ learns a function in the same form. First, we prove that the loss function is decoupled for each active attention head.

\begin{lemma}\label{lemm:app_attn}
Assume that the $k$ active attention heads have the same function form, up to multiplicative scalar, as in~\cref{eq:att_app_fun_k-1}, Then, the loss over active heads decoupled into sum of individual losses, that is
\begin{equation}
    \mathcal{L}\left( \Theta_\mathcal{A} \right) = \sum_{i=1}^{k}  \mathcal{L}\left( \theta_i \right).
\end{equation}  
Moreover, for all $h = 1, \ldots, k$, the directional derivatives $\frac{\partial \mathcal{L}}{\partial Q_h}\mathcal{L}(\Theta_{\mathcal{A}})$ and $\frac{\partial \mathcal{L}}{\partial K_h}\mathcal{L}(\Theta_{\mathcal{A}})$ are proportional to $v_h$ throughout the cost minimization phase. 
\end{lemma}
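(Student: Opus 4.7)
The plan is to reduce the decoupling claim to a direct computation of the quadratic cross-terms in the expanded MSE loss, then deduce the gradient-direction claim as an algebraic consequence, and finally upgrade the latter to an invariant-subspace statement preserved by the cost-minimization flow.

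First, I would expand the loss as
\begin{equation}
  \mathcal{L}(\Theta_\mathcal{A}) = \tfrac{1}{2}\mathbb{E}\!\left[y_{N+1}^2\right] - \sum_{h=1}^{k}\mathbb{E}\!\left[y_{N+1}\, f_h\right] + \tfrac{1}{2}\sum_{h,h'=1}^{k}\mathbb{E}\!\left[f_h\, f_{h'}\right],
\end{equation}
where $f_h \equiv f_h(X;\theta_h)_{D+1,N+1}$ and the expectation is over $x_1,\ldots,x_{N+1}$ and $\beta$. The diagonal and linear terms were essentially already computed in the proof of \cref{lemma:app_transformer_utility}, so the heart of the decoupling is showing that $\mathbb{E}[f_h f_{h'}] = 0$ for $h \neq h'$.

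Next, expanding $\mathbb{E}[f_h f_{h'}]$ by the same four-point Gaussian reduction used to derive \cref{eq:expectation_of_emp_covariance_square}, I obtain an expression of the form $V_h V_{h'}\,\mathrm{tr}\!\left(K_{h'} Q_{h'}^\intercal\, p(\Sigma_{xx})\, K_h Q_h^\intercal\, q(\Sigma_{xx})\right)$ for polynomials $p, q$ in $\Sigma_{xx}$. Under the assumed form $Q_h, K_h \propto v_h$ and the spectral relation $\Sigma_{xx} v_h = \lambda_h v_h$, every such trace reduces to scalar multiples of $v_h^\intercal v_{h'}$, which vanish for $h \neq h'$ by orthonormality of the eigenvectors. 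This yields the additive decomposition $\mathcal{L}(\Theta_\mathcal{A}) = \sum_h \mathcal{L}(\theta_h)$, with each summand depending only on the parameters of the $h^{\text{th}}$ head.

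Finally, each $\mathcal{L}(\theta_h)$ is built from terms of the form $Q_h^\intercal p(\Sigma_{xx}) K_h$; differentiating with respect to $Q_h$ gives $p(\Sigma_{xx}) K_h$, which under $K_h \propto v_h$ is itself proportional to $v_h$ (and symmetrically for $\partial_{K_h}$). The main obstacle is upgrading this pointwise statement to "throughout the cost-minimization phase"; I would do so by an invariant-subspace argument: the one-dimensional subspace $\mathrm{span}(v_h) \subset \mathbb{R}^d$ is forward-invariant under the gradient flow $\dot{Q}_h = -\partial_{Q_h}\mathcal{L}$, since the vector field is tangent to it at every point on it, and ODE uniqueness then ensures that any trajectory initialized on $\mathrm{span}(v_h)$ remains there for all later times. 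Since the newly activated head begins on $\mathrm{span}(v_k)$ by utility maximization, while the previously active heads remain on their respective subspaces by induction on $k$, both the decoupling and the directional-derivative claims persist throughout the entire cost-minimization phase.
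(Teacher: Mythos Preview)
Your approach matches the paper's: kill the cross-terms $\mathbb{E}[f_h f_{h'}]$ via orthogonality of the eigenvectors $v_h$, then observe that the single-head gradients lie in $\mathrm{span}(v_h)$ and invoke forward invariance under the flow (the paper states this last step more tersely, simply noting that $\Sigma_{xx} Q_h$ stays aligned once $Q_h$ is an eigenvector).

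One correction worth making when you write it out: the per-head loss $\mathcal{L}(\theta_h)$ is not built solely from bilinear terms $Q_h^\intercal p(\Sigma_{xx}) K_h$. The quadratic piece $\tfrac{1}{2}\mathbb{E}[f_h^2]$ factors as $(Q_h^\intercal \Sigma_{xx} Q_h)\,(K_h^\intercal q(\Sigma_{xx}) K_h)$, so $\partial_{Q_h}\mathcal{L}(\theta_h)$ contains a term proportional to $\Sigma_{xx} Q_h$ (indeed this is the term the paper displays), and you need $Q_h\propto v_h$---not only $K_h\propto v_h$---for it to land in $\mathrm{span}(v_h)$. Your final paragraph already works with the \emph{joint} invariant subspace $\mathrm{span}(v_h)\times\mathrm{span}(v_h)$ for $(Q_h,K_h)$, so the conclusion is unaffected; just adjust the intermediate description accordingly.
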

\begin{proof}
The first statement follows from a straightforward calculation using the orthogonality of eigenvectors. As a result,
\begin{equation}\label{eq:gradloss_attn}
   \frac{\partial \mathcal{L}}{\partial Q_h}\left( \Theta_{\mathcal{A}} \right)  =  \frac{\partial \mathcal{L}}{\partial Q_h}\left( \theta_{h} \right)  = \mathbb{E}_{x_1,...,x_{N},\beta} \left(V_h \sum_{n=1}^N y_nx_n^\intercal {K}_h\right)^2  \Sigma_{xx} Q_h,
\end{equation}
From this expression, we see that if $Q_h$ starts as an eigenvector of $\Sigma_{xx}$, then subsequent gradient updates will be in the same direction. As a result, the direction of $Q_h$ remains unchanged in cost minimization. A similar argument holds for $K_h$ as well.
\end{proof}

Since the active heads follow the GF of $\mathcal{L}$, the decoupling of the losses implies that during the cost minimization phase, each head actually follows the gradient of the loss of its own parameters, disregarding the other heads. In particular, $\theta_h$ remains at equilibrium for $h=1, \ldots, k-1$, since its loss has been optimized during the previous iterations of AGF. By \cref{lemma:app_transformer_utility,lemm:app_attn}, the newly-activated head remains aligned to its eigenvector $v_{k}$, learning a function of the form 
\begin{equation}
        f_{k}\left(X;\theta_{k} \right)_{D+1,N+1}=A_k \sum_{n=1}^N y_nx_n^\intercal v_{k+1} v_{k+1}^\intercal x_{N+1},
\end{equation}
where $A_k \in \mathbb{R}$ is a parameter that is optimized in cost minimization. $\mathcal{L}(\theta_{k})$ is convex with respect to $A_k$, and the corresponding minimization problem is easily seen to be solved by $A_k = 1 / ( \textnormal{tr}\Sigma_{xx} + (N+1) \lambda_{k} )$. This concludes our inductive argument.

\subsection{AGF in action}\label{sec:attn_jump_app}
As we have seen above, AGF describes a recursive procedure, where each head sequentially learns a principal component. We can easily compute the loss value at the end of each iteration:
\begin{equation}
\begin{aligned}
    \ell^{(k)} &\equiv  \mathbb{E}_{x_1,\ldots,x_{N+1},\beta} \frac{1}{2}\left(y_{N+1}-\sum_{h=1}^k A_h\sum_{n=1}^N y_nx_n^\intercal v_h v_h^\intercal x_{N+1}\right)^2 \\
    &= \frac{1}{2} \left(\sum_{i=1}^D \lambda_i - \sum_{h=1}^k \frac{N\lambda_h^2}{\text{tr}\Sigma_{xx}+(N+1)\lambda_h} \right). 
\end{aligned}
\end{equation}
Furthermore, as an immediate consequence of \cref{lemma:app_transformer_utility}, we can lower bound the jump times. The accumulated utility at the $k$\textsuperscript{th} iteration of AGF is upper-bounded by $\mathcal{S}_i(t) \le 3t \bar{\mathcal{U}}_i^*= tN\lambda_k^2 / \sqrt{3}$. The jump time is given by the first head that reaches $\mathcal{S}_i(\tau^{(k)}-\tau^{(l)})= 1 / \|\theta_i(\tau^{(l)})\|$, which yields 
\begin{equation}
    \tau^{(k)}-\tau^{(l)} \ge \frac{\sqrt{3}}{N\lambda_k^2 \mu_{l}},
\end{equation}
where $\mu_{k} = \max_{i}\|\theta_i(\tau^{(l)})\|$ and $\tau^{(0)}=0$.

\clearpage
\section{Complete Proofs for Generalized Modular Addition}
\label{app:modular-addition}










In this section, we provide the proofs for the results summarized in Section \ref{sec:modular-addition}. Instead of reasoning inductively as in Section \ref{app:transformer}, for simplicity of explanation we will derive in detail the steps of the first iteration of AGF. As we will explain in Section \ref{sec:utupdatemodulararit}, all the derivations can be straightforwardly extended to the successive iterations, completing the inductive argument. 

\subsection{Utility maximization}\label{sec:utilmaxnew}
In order to compute the utility for a single neuron, we will heavily rely on the Discrete Fourier Transform (DFT) and its properties. To this end, recall that for $u \in \mathbb{R}^p$, its DFT $\hat{u} \in \mathbb{R}^{p}$ is defined as: 
\begin{equation}
\hat{u}[k] = \sum_{a=0}^{p-1} u[a] e^{-2\pi \mathfrak{i} k a / p},  
\end{equation}
where $\mathfrak{i} = \sqrt{-1}$ is the imaginary unit.
We start by proving a technical result. 

\begin{lemma}
    \label{lemma:MA-double-sum-mod-product-freq}
    Let \( x, u, v, w \in \mathbb{R}^p \). Then:
    \begin{equation}\label{eq:convfourier}
   \sum_{a,b=0}^{p-1}\langle u, a \cdot x \rangle \langle v , b \cdot x \rangle  \langle w , (a + b) \cdot x \rangle = \frac{1}{p} \sum_{k=0}^{p-1}  \left|\hat{x}[k] \right|^2 \hat{u}[k] \hat{v}[k] \overline{\hat{w}[k]}\overline{\hat{x}[k]}. 
    \end{equation}
\end{lemma}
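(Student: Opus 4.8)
The plan is to expand everything into Fourier coordinates and track how the cyclic action interacts with the DFT. First I would recall the two facts I need about the encoding and the DFT. Writing $\omega = e^{2\pi\mathfrak{i}/p}$, the cyclic shift $a\cdot x$ has components $(a\cdot x)[j] = x[j-a]$ (indices mod $p$), so its DFT satisfies $\widehat{a\cdot x}[k] = \omega^{-ka}\,\hat x[k]$. Also, by Parseval/Plancherel for the DFT, for real vectors $u,z\in\mathbb{R}^p$ one has $\langle u, z\rangle = \sum_j u[j] z[j] = \tfrac{1}{p}\sum_k \hat u[k]\,\overline{\hat z[k]}$.

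Then I would substitute these into the left-hand side. The key step is to write each of the three inner products in frequency space:
\begin{align*}
\langle u, a\cdot x\rangle &= \tfrac{1}{p}\sum_{k} \hat u[k]\,\overline{\widehat{a\cdot x}[k]} = \tfrac{1}{p}\sum_{k} \hat u[k]\,\overline{\hat x[k]}\,\omega^{ka},\\
\langle v, b\cdot x\rangle &= \tfrac{1}{p}\sum_{l} \hat v[l]\,\overline{\hat x[l]}\,\omega^{lb},\\
\langle w, (a+b)\cdot x\rangle &= \tfrac{1}{p}\sum_{m} \hat w[m]\,\overline{\hat x[m]}\,\omega^{m(a+b)}.
\end{align*}
Multiplying these together and summing over $a,b\in\{0,\dots,p-1\}$, the only $a$-dependence is $\omega^{(k+m)a}$ and the only $b$-dependence is $\omega^{(l+m)b}$. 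Using the orthogonality relation $\sum_{a=0}^{p-1}\omega^{ra} = p\,[r\equiv 0 \bmod p]$, the double sum over $a,b$ collapses the triple frequency sum to the diagonal $k \equiv -m$ and $l\equiv -m$, i.e. $k = l = -m \bmod p$. This leaves a single sum over $m$ (equivalently $k$) with a factor $p^2/p^3 \cdot p \cdot p / p = 1/p$ after bookkeeping the $1/p^3$ from the three Parseval factors and the $p^2$ from the two orthogonality sums.

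The remaining point is to rewrite the surviving summand in the stated form. Setting $k = -m \bmod p$ and using $\hat x[-m] = \overline{\hat x[m]}$ (and likewise for $\hat u,\hat v$) since $x,u,v,w$ are real, the summand $\hat u[-m]\overline{\hat x[-m]}\cdot \hat v[-m]\overline{\hat x[-m]}\cdot \hat w[m]\overline{\hat x[m]}$ becomes, after substituting $k=m$ as the summation variable and relabeling, $|\hat x[k]|^2\,\hat u[k]\,\hat v[k]\,\overline{\hat w[k]}\,\overline{\hat x[k]}$ — matching the right-hand side. I expect the only real obstacle to be careful bookkeeping of the index reflections and the constant factors; the structural content (Parseval plus the shift theorem plus orthogonality of characters) is routine, so I would just present the substitution, the collapse via character orthogonality, and the reality-based simplification, deferring the constant-chasing to a one-line remark.
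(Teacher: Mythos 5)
Your proof is correct, and it takes a slightly more elementary route than the paper's. The paper first recognizes $\langle u, a\cdot x\rangle = (x \star u)[a]$ as a cross-correlation, rewrites the left-hand side as the single inner product $\langle x\star u,\, (x\star v)\star(x\star w)\rangle$, and then applies Plancherel and the DFT-of-cross-correlation identity $\widehat{x\star y} = \overline{\hat x}\,\hat y$ as black boxes. You instead expand each inner product directly into Fourier coordinates via the shift theorem and Parseval, multiply the three sums, and collapse the resulting triple sum over frequencies $(k,l,m)$ by orthogonality of characters. This is essentially proving the convolution theorem inline rather than invoking it, so it is more computational but also more self-contained. Both approaches correctly require the reality of the four vectors at the final step (to trade $\hat u[-m]$, $\hat w[-m]$, $\hat x[-m]$ for conjugates after the diagonal $k=l=-m$ collapse); you flag this, so no gap there.

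Two small blemishes in the writeup that you should tidy before presenting it. First, the constant-bookkeeping expression ``$p^2/p^3 \cdot p \cdot p/p = 1/p$'' does not evaluate to $1/p$ as written; the correct accounting is just $(1/p^3)\cdot p^2 = 1/p$ (three Parseval prefactors of $1/p$, two orthogonality sums each contributing a factor of $p$). Second, the phrase ``substituting $k=m$ as the summation variable'' is confusing when you have already set $k=-m$; what you actually want to say is that you reindex the surviving sum over $m$ by $n=-m\bmod p$, then apply the conjugate-symmetry relations $\hat w[-n]=\overline{\hat w[n]}$, $\hat x[-n]=\overline{\hat x[n]}$ to the last two factors, which turns the summand into $|\hat x[n]|^2\,\hat u[n]\,\hat v[n]\,\overline{\hat w[n]}\,\overline{\hat x[n]}$ directly. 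Once these cosmetic points are fixed, the argument is complete and correct.
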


\begin{proof}
 Notice that the inner product $\langle u, a \cdot x\rangle$ can be expressed as
    \begin{equation}
        \label{eq:encoded-utility}
        \langle u, a \cdot x\rangle = \sum_{k=0}^p x[k] u[k + a] = (x \star u)[a],
    \end{equation}
    where $x \star u$ is the cross-correlation between $x$ and $u$. Thus, the left-hand side of Equation \eqref{eq:convfourier} can be rewritten as: 
    \[
    \sum_{a, b =0}^{p-1} (x \star u )[a] \ ( x \star v)[b] \ ( x \star w)[a+b]  = \langle x \star u,  x \star v \star x \star w \rangle
    \]
    By using Plancharel's theorem $\langle x, y \rangle = \frac{1}{p}\langle \hat{x}, \hat{y} \rangle $ and the cross-correlation property $\widehat{x \star y}[k] = \overline{\hat{x}[k]}\hat{y}[k]$, the above expression equals 
    \begin{equation}
        \frac{1}{p} \sum_{k=0}^{p-1} \hat{x}[k] \hat{u}[k] \hat{v}[k] \overline{\hat{w}[k]}\overline{\hat{x}[k]^2}, 
    \end{equation}
    which corresponds to the desired result via the simplification $\overline{\hat{x}[k]}\hat{x}[k] = \left|\hat{x}[k] \right|^2$. 
\end{proof}

\begin{lemma}
    \label{lemma:MA-simplified-utility-in-freq}
    Under the modular addition task with embedding vector $x$ and mean centered labels, the initial utility function  for a single neuron parameterized by $\theta = (u, v, w)$ can be expressed as 
    \begin{equation}
       \mathcal{U}(\theta; y) = \frac{2}{p^3} \sum_{k=1}^{p-1}  \left|\hat{x}[k] \right|^2 \hat{u}[k] \hat{v}[k] \overline{\hat{w}[k]}\overline{\hat{x}[k]}.
    \end{equation}
\end{lemma}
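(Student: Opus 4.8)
The plan is to substitute the single-neuron output of the quadratic network directly into the definition of the utility, expand the square, and evaluate the three resulting sums using \cref{lemma:MA-double-sum-mod-product-freq} together with the centering assumption on $x$.

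First I would observe that at the first iteration of AGF the active set is empty, so the residual reduces to the target, $r(a \cdot x, b \cdot x) = y(a \cdot x, b \cdot x) = (a+b) \cdot x$, and the expectation over inputs is the uniform average $\frac{1}{p^2}\sum_{a,b=0}^{p-1}$ over all $p^2$ pairs. Plugging in $f(a\cdot x, b\cdot x; \theta) = (\langle u, a\cdot x\rangle + \langle v, b\cdot x\rangle)^2\, w$ gives
\begin{equation*}
    \mathcal{U}(\theta; y) = \frac{1}{p^2}\sum_{a,b=0}^{p-1} \left(\langle u, a\cdot x\rangle + \langle v, b\cdot x\rangle\right)^2 \langle w, (a+b)\cdot x\rangle .
\end{equation*}
Expanding the square produces the cross term $\frac{2}{p^2}\sum_{a,b}\langle u, a\cdot x\rangle \langle v, b\cdot x\rangle \langle w, (a+b)\cdot x\rangle$ and two diagonal terms built from $\langle u, a\cdot x\rangle^2$ and $\langle v, b\cdot x\rangle^2$ respectively.

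For the cross term I would apply \cref{lemma:MA-double-sum-mod-product-freq} verbatim, which rewrites $\sum_{a,b}\langle u, a\cdot x\rangle \langle v, b\cdot x\rangle \langle w, (a+b)\cdot x\rangle$ as $\frac{1}{p}\sum_{k=0}^{p-1}|\hat{x}[k]|^2 \hat{u}[k]\hat{v}[k]\overline{\hat{w}[k]}\,\overline{\hat{x}[k]}$; since centering forces $\hat{x}[0] = \langle x, \mathbf{1}\rangle = 0$, the $k=0$ summand vanishes, and multiplying by the prefactor $2/p^2$ yields exactly the claimed expression. For the diagonal terms, the key identity is that for each fixed $a$ the inner sum over $b$ satisfies $\sum_{b=0}^{p-1}\langle w, (a+b)\cdot x\rangle = \left\langle w, \sum_{c=0}^{p-1} c\cdot x\right\rangle$ after reindexing $c \equiv a+b \pmod p$, and $\sum_{c=0}^{p-1} c\cdot x = \langle x, \mathbf{1}\rangle\,\mathbf{1} = 0$ by centering (each component of $\sum_c c\cdot x$ equals $\langle x, \mathbf{1}\rangle$); hence the $\langle u, a\cdot x\rangle^2$ term is zero, and by the symmetric argument summing over $a$ first so is the $\langle v, b\cdot x\rangle^2$ term.

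Putting the three pieces together gives $\mathcal{U}(\theta; y) = \frac{2}{p^3}\sum_{k=1}^{p-1}|\hat{x}[k]|^2 \hat{u}[k]\hat{v}[k]\overline{\hat{w}[k]}\,\overline{\hat{x}[k]}$. The argument is essentially routine once \cref{lemma:MA-double-sum-mod-product-freq} is available; the only points demanding care are the bookkeeping of the normalization ($1/p^2$ from the expectation times $1/p$ from the lemma gives $1/p^3$) and the twofold use of the centering assumption — once to drop the $k=0$ Fourier mode in the cross term and once to kill the two diagonal terms. I do not anticipate a genuine obstacle.
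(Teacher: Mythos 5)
Your proof is correct and follows essentially the same approach as the paper: plug the quadratic neuron into the utility, expand the square, kill the two diagonal terms by a reindexing argument, and apply \cref{lemma:MA-double-sum-mod-product-freq} to the cross term. One small difference worth noting: the lemma's hypothesis is ``mean centered labels,'' and the paper's proof keeps the residual as $(a+b)\cdot x - \tfrac{\langle x, \mathbf{1}\rangle}{p}\mathbf{1}$ throughout, so the diagonal terms vanish by cancellation against the subtracted label mean, and the remaining correction to the cross term is shown to equal exactly the $k=0$ summand of the convolution identity; your proof instead uses the stronger assumption $\hat{x}[0]=\langle x,\mathbf{1}\rangle=0$ (i.e.\ that $x$ itself is centered), which makes both the diagonal terms and the $k=0$ summand vanish outright. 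In the context of the paper this is harmless since $x$ is assumed centered in the setup, but the paper's bookkeeping establishes the lemma under the weaker stated hypothesis.
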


\begin{proof}
  By definition, the utility is: 
    \begin{equation}
    \begin{aligned}
        \mathcal{U}(\theta; y) &= \frac{1}{p^2}\sum_{a,b=0}^{p-1} \left\langle\left(\langle u, a \cdot x\rangle + \langle v,  b \cdot x \rangle\right)^2 w, (a + b) \cdot x - \frac{\langle x, \mathbf{1}\rangle}{p}\mathbf{1}\right\rangle  \\
        & = \frac{1}{p^2} \sum_{a,b=0}^{p-1}  \left(\langle u, a \cdot x\rangle^2 +\langle v, b \cdot x\rangle^2 + 2\langle u, a \cdot x\rangle \langle v, b \cdot x\rangle\right) \left(\langle w, (a + b) \cdot x\rangle - \frac{\langle w, \mathbf{1} \rangle \langle x, \mathbf{1} \rangle }{p} \right).
    \end{aligned}
    \end{equation}
    Due to the cyclic structure of $\langle w, (a + b) \cdot x\rangle$, the contributions from the terms \( \langle u, a \cdot x\rangle^2 \) and \( \langle v, b \cdot x\rangle^2 \) terms vanish.  
    This reduces the utility to
    \begin{equation}
        \frac{2}{p^2} \sum_{a,b=0}^{p-1} \langle u, a \cdot x\rangle \langle v, b \cdot x\rangle  \langle 
        w, (a + b) \cdot x\rangle  - \frac{2}{p^3} \left( \sum_{a=0}^{p-1}  \langle u, a\cdot x\rangle  \right) \left( \sum_{b=0}^{p-1}  \langle v, b \cdot x\rangle  \right) \left( \sum_{c=0}^{p-1}  \langle w, c \cdot x\rangle  \right).
    \end{equation}
    The first summand in the above expression is provided by \cref{lemma:MA-double-sum-mod-product-freq}. Moreover, since the mean of a vector corresponds to the zero-frequency component \( k=0 \) of its DFT, the second summand reduces to:
    \begin{equation}
    \begin{aligned}
            \frac{2}{p^3} \left( \sum_{a=0}^{p-1}  \langle u, a \cdot x\rangle  \right) \left( \sum_{b=0}^{p-1} \langle v, b \cdot x\rangle  \right) \left( \sum_{c=0}^{p-1}\ \langle w, c \cdot x\rangle \right) &= \frac{2}{p^3} \widehat{ x \star u}[0] \  \widehat{ x \star v}[0] \ 
 \widehat{ x \star w}[0]  \\
            &= \frac{2}{p^3} \left|\hat{x}[0] \right|^3 \hat{u}[0] \hat{v}[0] \hat{w}[0].
    \end{aligned}
    \end{equation}
    Since the latter coincides with the term $k=0$ in Equation \eqref{eq:encoded-utility}, we obtain the desired expression. 
    \end{proof}
We now solve the constrained optimization problem involved in the utility maximization step of our framework. 

\begin{theorem}\label{thm:MA-utility-maximizing-frequencies}
    Let $\xi$ be a frequency that maximizes $|\hat{x}[k]|$, $k = 1, \ldots, p-1$, and denote by $s_x$ the phase of $\hat{x}[\xi]$. Then the unit vectors $\theta_* = (u_*, v_*, w_*)$ that maximize the initial utility function $\mathcal{U}(\theta; y)$ take the form
    \begin{equation}\label{eq:utilitymaxshifts}
    \begin{aligned}
            u_*[a] &= \sqrt{\frac{2}{3p}}\cos\left(2\pi \frac{\xi}{p} a + s_u\right) \\
            v_*[b] &= \sqrt{\frac{2}{3p}}\cos\left(2\pi \frac{\xi}{p} b + s_v\right) \\
            w_*[c] &= \sqrt{\frac{2}{3p}}\cos\left(2\pi \frac{\xi}{p} c + s_w\right),
        \end{aligned}
    \end{equation}
    where \( a, b, c \in \{0, \dots, p-1\} \) are indices and \( s_u, s_v, s_w  \in \mathbb{R} \) are phase shifts satisfying \( s_u + s_v \equiv s_w + s_x \pmod{2\pi} \). 
    They achieve a maximal value of  $\bar{\mathcal{U}}^* = \sqrt{2 / (27 p^3)}|\hat{x}[\xi]|^3 $.
    Moreover, the utility function has no local maxima other than the ones described above. 
\end{theorem}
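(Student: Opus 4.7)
The plan is to reduce the maximization to a chain of sharp inequalities applied to the frequency-domain expression for $\mathcal{U}$ from the previous lemma, and then read off the maximizers from the equality conditions. Three inequalities will be chained: a termwise real-part bound, an AM-GM inequality on the three magnitudes, and a convex-concentration argument on the spread of mass across frequencies.

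First I would bound the real part by the modulus, giving
\begin{equation*}
\mathcal{U}(\theta;y) \leq \frac{2}{p^3}\sum_{k=1}^{p-1}|\hat{x}[k]|^3\,|\hat{u}[k]|\,|\hat{v}[k]|\,|\hat{w}[k]|,
\end{equation*}
and then apply AM-GM on the three squared magnitudes so that, writing $\alpha_k := |\hat{u}[k]|^2+|\hat{v}[k]|^2+|\hat{w}[k]|^2$, we obtain $|\hat{u}[k]||\hat{v}[k]||\hat{w}[k]| \leq (\alpha_k/3)^{3/2}$. Parseval together with the unit-norm constraint gives $\sum_{k=0}^{p-1}\alpha_k = p$; reality of $u,v,w$ forces $\alpha_k=\alpha_{p-k}$; and the $k=0$ term contributes nothing to $\mathcal{U}$, so at any maximizer $\alpha_0=0$.

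Next I would maximize $\sum_{k=1}^{p-1}|\hat{x}[k]|^3 \alpha_k^{3/2}$ over this simplex-like set. Because $t\mapsto t^{3/2}$ is strictly convex on $[0,\infty)$ and the weights $|\hat{x}[k]|^3$ are nonnegative, the objective is a strictly convex function of $\alpha$, hence attains its maximum at an extreme point of the constraint polytope, namely a single conjugate pair $\alpha_\xi=\alpha_{p-\xi}=p/2$. Among such pairs the best one is at the frequency $\xi$ maximizing $|\hat{x}[k]|$, which yields the claimed value $\bar{\mathcal{U}}^* = \sqrt{2/(27p^3)}\,|\hat{x}[\xi]|^3$. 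I would then read off the equality conditions: AM-GM tightness forces $|\hat{u}[\xi]|=|\hat{v}[\xi]|=|\hat{w}[\xi]|=\sqrt{p/6}$; the real-part bound being tight forces $\hat{u}[\xi]\hat{v}[\xi]\overline{\hat{w}[\xi]\hat{x}[\xi]}$ to be real and nonnegative, which translates to the phase constraint $s_u+s_v\equiv s_w+s_x\pmod{2\pi}$; and concentration on $\{\pm\xi\}$ forces each of $u,v,w$ to be a single cosine at frequency $\xi$, yielding after inverse DFT the stated parameterization with amplitude $A_p=\sqrt{2/(3p)}$.

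For the claim that no other local maxima exist, I would argue that at any $\theta$ for which at least one of the three inequalities is strict, one can exhibit an explicit tangent perturbation on the unit sphere that strictly increases $\mathcal{U}$ to first order: transferring a small amount of Parseval mass from a suboptimal conjugate pair $\{k,p-k\}$ into $\{\pm\xi\}$ when concentration fails, rebalancing the squared magnitudes toward equality when AM-GM fails, or rotating a single phase toward the optimal alignment when the phase constraint fails. The main obstacle will be this step: the cubic objective is non-concave, so critical points on the sphere can in principle be saddles or spurious local maxima, and verifying that each of the three perturbations above actually stays tangent to the sphere while producing a first-order increase requires some care about interactions between simultaneously slack inequalities. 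Uniqueness of $\xi$ as the modulus maximizer of $\hat{x}$ is implicit in the statement; when there are ties, the family of maximizers enlarges accordingly.
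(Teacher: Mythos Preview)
Your proposal is correct and follows essentially the same route as the paper: both pass to the frequency domain via Parseval, separate phases (your real-part bound, the paper's cosine term) from magnitudes, and then argue that mass must concentrate on the dominant conjugate frequency pair. Your convexity-of-$t^{3/2}$ argument for the concentration step is in fact more explicit than the paper's, which simply asserts that the magnitude problem is ``easily seen'' to be solved by concentration; both treatments are equally brief on ruling out spurious local maxima, and your honest flagging of that step matches the paper's level of rigor.
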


\begin{proof}
    By Plancharel's theorem, the constraint $\| \theta \|^2 =  \|u\|^2 +  \| v \|^2 + \| w \|^2 = 1$ is equivalent to a constraint on the squared norm of its DFT, up to a scaling. 
    Together with Lemma \ref{lemma:MA-simplified-utility-in-freq}, this allows us to reformulate the original optimization problem in the frequency domain as:
    \begin{equation}
            \begin{aligned}\label{eq:optimcubical}
        & \text{maximize} && \frac{2}{p^3} \sum_{k=1}^{p-1} \left|\hat{x}[k] \right|^2 \hat{u}[k] \hat{v}[k] \overline{\hat{w}[k]}\overline{\hat{x}[k]}\\
        & \text{subject to} && \|\hat{u}\|^2 + \|\hat{v}\|^2 + \|\hat{w}\|^2 = p.
    \end{aligned}
        \end{equation}
    To solve this optimization problem, consider the magnitudes \( \mu_x, \mu_u, \mu_v, \mu_w \in \mathbb{R}_+^p \) and phases \( S_x, S_u, S_v, S_w \in [0, 2\pi)^p \) of the DFT coefficient of $x, u, v, w$, respectively. This means that $ \hat{x}[k] = \mu_x[k] \exp(\mathfrak{i} S_x[k])$ for every $k$, and similarly for $u, v, w$. 
    
 Since \( u \), \( v \), and \( w \) are real-valued, their DFTs satisfy conjugate symmetry (i.e., \( \hat{u}[-k] = \overline{\hat{u}[k]} \)). Since $p$ is odd, the periodicity of the DFT (i.e., \( \hat{u}[k] = \hat{u}[k + p] \)) allows us to simplify the objective by only considering the first half of the frequencies. 
    Substituting the magnitude and phase expressions we obtain the equivalent optimization,
    \begin{equation}
            \begin{aligned}\label{eq:optnormphase}
        & \text{maximize} && \frac{4}{p^3} \sum_{k=1}^{\frac{p - 1}{2}}  \mu_x[k]^3 \mu_u[k] \mu_v[k] \mu_w[k] \cos(S_u[k] + S_v[k] - S_w[k] - S_x[k])  \\
        & \text{subject to} && \sum_{k=0}^{\frac{p - 1}{2}} \mu_u[k]^2 +  \sum_{k=0}^{\frac{p - 1}{2}} \mu_v[a]^2 +   \sum_{k=0}^{\frac{p - 1}{2}} \mu_w[a]^2 = \frac{p}{2}.
    \end{aligned}
    \end{equation}
    The phase terms can be chosen independent of the constraints to maximize the cosine term. 
    Since the only local maximum of the cosine is $0 \pmod{2\pi}$, the only local optimum of that term is achieved by setting \( S_u[k] + S_v[k] - S_w[k] - S_x[k] \equiv 0 \pmod{2\pi} \). But then the optimization problem reduces to maximizing $\sum_{k \in [(p-1) /2]} \mu_x[k]^3 \mu_u[k] \mu_v[k] \mu_w[k]$, subject to the constraints. The only local maximum of this problem is easily seen to be achieved by concentrating all the magnitude at the dominant frequency $\xi$ of $x$ (and its conjugate), i.e.:
    \begin{equation}      
    \mu_u[k] = \mu_v[k] = \mu_w[k] = \begin{cases}
        \sqrt{\frac{p}{6}} & \text{if } k = \pm \xi \pmod{p} \\
        0   & \text{otherwise.} 
    \end{cases}
    \end{equation}
    This gives a maximal objective value of $\bar{\mathcal{U}}^* =  (4 / p^3) \mu_x[\xi]^3 (p / 6)^{3/2}= \sqrt{2 / (27 p^3)}|\hat{x}[\xi]|^3$.
    By applying the inverse DFT with these choices for magnitudes and phases, while accounting for conjugate symmetry, yields the desired result. Note that in the statement, $s_x, s_u, s_v, s_w$ denote $S_x[\xi], S_u[\xi], S_v[\xi], S_w[\xi]$, respectively. 
\end{proof}

We highlight that \cref{thm:MA-utility-maximizing-frequencies} and its proof is very similar to Theorem 7 and its proof in \citet{morwanifeature}.
They derived this optimization problem by looking for a maximum margin solution, suggesting there may be an interesting connection between utility maximization and
maximum margin biases of gradient descent.

\subsection{Cost minimization}\label{sec:costminnewarith}
We now discuss cost minimization. As demonstrated in the previous section, during the utility maximization step, the parameters of each neuron align with the harmonic of the dominant frequency $\xi$ of $x$, albeit with phase shifts $s_u^i, s_v^i, s_w^i$. The goal of this section is two-fold. 
First, in \cref{sec:mod-add-cost-min-part1} we discuss our assumption that during the cost minimization phase, the neurons remain aligned with the same harmonic, possibly varying the amplitudes and phase shifts of their parameters.
Second, in \cref{sec:mod-add-cost-min-part2} we solve the cost minimization problem over such aligned neurons.
Therefore, throughout the section, we consider $N$ neurons parametrized by $\Theta = (u^i, v^i, w^i)_{i=1}^N$, where:
\begin{equation}\label{eq:utilitymaxshifts-costmin}
    \begin{aligned}
        u^i[a] &= A_u^i \sqrt{\frac{2}{3p}}\cos\left(2\pi \frac{\xi}{p} a + s_u^i\right), \\
        v^i[b] &= A_v^i \sqrt{\frac{2}{3p}}\cos\left(2\pi \frac{\xi}{p} b + s_v^i\right), \\
        w^i[c] &= A_w^i \sqrt{\frac{2}{3p}}\cos\left(2\pi \frac{\xi}{p} c + s_w^i\right),
    \end{aligned}
\end{equation}
for some amplitudes $A_u^i, A_v^i, A_w^i \in \mathbb{R}_{\geq 0}$ and some phase shifts $s_u^i, s_v^i, s_w^i \in \mathbb{R}$. 

To begin with, note that loss splits as: 
\begin{equation}\label{eq:costsplit}
\begin{aligned}
    \mathcal{L}(\Theta) &= \frac{1}{2p^2}\sum_{a ,b = 0}^{p-1} \left\| \sum_{i = 1}^N f_i(a \cdot x, b \cdot x; \theta_i) - (a + b) \cdot x + \frac{\langle x , \mathbf{1} \rangle }{p}\mathbf{1} \right\|^2 \\
    & =   \mathcal{C}(\Theta) - \mathcal{U}(\Theta; y) + \frac{1}{2}\left( \| x \|^2 - \frac{\langle x, \mathbf{1} \rangle^2}{p} \right),
\end{aligned}
\end{equation}
where 
   \begin{equation}\label{eq:costimpl}
        \mathcal{C}(\Theta) = \frac{1}{2p^2} \sum_{i,j=1}^N \langle w^i, w^j \rangle \sum_{a,b=0}^{p-1} \left(\langle u^i, a \cdot x \rangle + \langle v^i, b \cdot x\rangle\right)^2  \left(\langle u^j, a \cdot x \rangle + \langle v^j, b \cdot x\rangle\right)^2
    \end{equation}
and $\mathcal{U}(\Theta; y) = \sum_{i=1}^N \mathcal{U}(\theta_i; y)$ is the cumulated utility function of the $N$ neurons. From Lemma \ref{lemma:MA-simplified-utility-in-freq}, we know that: 
\begin{equation}\label{eq:utilcumul}
    \mathcal{U}(\theta_i; y) = \sqrt{\frac{2}{27p^3}} \ |\hat{x}[\xi]|^3 A_u^i A_v^i A_w^i \cos(s_u^i + s_v^i - s_w^i - s_x). 
\end{equation}

Throughout this section, we repeatedly use the following identity: for any $p \in \mathbb{N}$, $k \in \mathbb{Z}$, and $s \in \mathbb{R}$,
\begin{equation}
    \label{eq:cosine-sum-identity}
    \sum_{a=0}^{p-1} \cos\!\left(s - \tfrac{2\pi k a}{p}\right)
    =
    \begin{cases}
    p \cos(s), & \text{if } k = 0 \pmod{p},\\[2pt]
    0, & \text{otherwise}.
    \end{cases}
\end{equation}
This follows by writing $\cos(\theta)=\mathrm{Re}(e^{i\theta})$ and noting that the geometric sum
$\sum_{a=0}^{p-1} e^{-2\pi i k a/p}$ vanishes unless $k = 0 \pmod{p}$.

\subsubsection{Preservation of harmonic alignment}
\label{sec:mod-add-cost-min-part1}

To analyze cost minimization, we restrict attention to a regime in which the newly activated neurons do not change their aligned harmonic during this phase.

\begin{assumption}
    During cost minimization, the $N$ newly activated neurons remain aligned to the harmonic $\xi$.
\end{assumption}

This restriction allows us to solve cost minimization within the subspace spanned by these $N$ neurons, in close analogy with prior sections.
However, first we characterize the conditions under which this assumption is valid and clarify in what sense it provides a faithful description of the dynamics.

\begin{theorem}
Let $h \in \mathbb{R}^{p}$ be a vector in the form: 
\begin{equation}
      h[a] = A \cos\left(2\pi \frac{\xi'}{p} a + s\right), 
\end{equation}
for some $A, s \in \mathbb{R}$ and $\xi' \not = 0, \pm \xi$.
If for all $a, b \in [p]$
\begin{equation}
    \label{eq:square-free-assumption-cost-min}
    \sum_{i = 1}^N w^i \langle u^i, a \cdot x \rangle^2 = \sum_{i = 1}^N w^i \langle v^i, b \cdot x \rangle^2 = 0,
\end{equation}
then for all $i = 1, \ldots, N$:
\begin{equation}
 \left\langle h, \frac{\partial \mathcal{L}}{\partial u^i}(\Theta) \right\rangle = \left\langle h, \frac{\partial \mathcal{L}}{\partial v^i}(\Theta) \right\rangle = \left\langle h, \frac{\partial \mathcal{L}}{\partial w^i}(\Theta) \right\rangle = 0. 
\end{equation}
\end{theorem}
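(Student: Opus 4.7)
The plan is to work entirely in the Fourier domain, exploiting the hypothesis that every parameter in $\Theta_\mathcal{A}$ has Fourier support on $\pm\xi$ while the test direction $h$ lives at $\pm\xi'$ with $\xi' \neq 0, \pm\xi$. The first move is to use the splitting $\mathcal{L} = \mathcal{C} - \mathcal{U} + \mathrm{const}$ from \eqref{eq:costsplit} so that the utility and cost contributions can be handled separately, and to carry $h$ through its lifted version $H(a) = \langle h, a\cdot x\rangle = (x\star h)[a]$, whose Fourier transform $\hat{H}[k] = \overline{\hat{x}[k]}\hat{h}[k]$ is concentrated at $\pm\xi'$.

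The $\mathcal{U}$ piece follows immediately from the diagonal Fourier formula of Lemma~\ref{lemma:MA-simplified-utility-in-freq}: each of $\partial_{u^i}\mathcal{U}[h]$, $\partial_{v^i}\mathcal{U}[h]$, and $\partial_{w^i}\mathcal{U}[h]$ reduces to a sum over $k$ in which $\hat{h}[k]$ multiplies a product of the other two parameters' Fourier coefficients at the same $k$, so every summand vanishes identically since $\{\pm\xi'\}\cap\{\pm\xi\} = \emptyset$. The $\partial_{w^i}\mathcal{C}[h]$ derivative is similarly direct: $w^i$ enters $\mathcal{C}$ only through $\langle w^i, w^j\rangle$, so the gradient lies in $\mathrm{span}\{w^j\}_j$, which is contained in the $\pm\xi$ harmonic subspace and is therefore orthogonal to $h$.

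The substantive work lies in $\partial_{u^i}\mathcal{C}[h]$, and by the $u \leftrightarrow v$ symmetry, in $\partial_{v^i}\mathcal{C}[h]$. Differentiating \eqref{eq:costimpl} and applying the product rule, one collapses the inner $a$- and $b$-sums using the elementary identities $\sum_b Q_j = 0$, $\sum_a H = 0$, and $\langle P_\ell, H\rangle = 0$ for every $\ell$, each a direct consequence of the disjoint Fourier supports. What remains are only terms of the shape $\sum_j \langle w^k, w^j\rangle \sum_a P_k(a) P_j(a)^2 H(a)$. I would then expand $P_k P_j^2$ into its Fourier modes via Lemma~\ref{lemma:MA-double-sum-mod-product-freq}; its $\pm\xi$ components die on pairing with $H$ since $\xi' \neq \pm\xi$, which disposes of the leading harmonic.

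The main obstacle, and the step I would confront last, is ruling out the residual $\pm 3\xi$ harmonic of $P_k P_j^2$. The cleanest route uses $\hat{H}[\pm 3\xi] = \overline{\hat{x}[\pm 3\xi]}\hat{h}[\pm 3\xi]$, which vanishes whenever $\hat{x}$ has no mass at $\pm 3\xi$; this takes care of every $\xi'$ outside the spectral support of the task. For the remaining case I would seek an explicit cancellation by summing over $j$ and combining with the symmetric $\partial_{v^i}$ contribution, leveraging the continuous cyclic-shift symmetry $(u, v, w) \mapsto (c\cdot u, c\cdot v, 2c\cdot w)$ of $\mathcal{L}$, whose infinitesimal generator in Fourier coordinates enforces a selection rule that I expect to eliminate these higher-harmonic contributions.
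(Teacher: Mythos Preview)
Your Fourier-domain approach is essentially the paper's strategy rephrased: the paper writes out the three derivatives as in \eqref{eq:threederiv} and asserts that after a ``tedious goniometric computation'' every resulting cosine term depends on $a$ or $b$ and hence averages out. Your splitting via \eqref{eq:costsplit} and your handling of the $\mathcal{U}$-piece and of $\partial_{w^i}\mathcal{C}$ are correct and clean, and you go further than the paper by isolating the $\pm 3\xi$ harmonic of $P_iP_j^2$ as the sole obstruction once $\xi'\neq 0,\pm\xi$ has been used.

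However, your proposed resolutions of that obstruction do not close the gap. The first route only covers $\hat{x}[\pm 3\xi]=0$. The second---summing over $j$, pairing with $\partial_{v^i}$, invoking the cyclic-shift symmetry---cannot work in general, as a single-neuron calculation shows. Take $N=1$, $\xi'=3\xi$ with $\hat x[3\xi]\neq 0$, and write $P_1(a)=c_1\cos(\omega a+\phi_1)$, $H(a)=c_H\cos(3\omega a+\phi_H)$. All terms in $\langle h,\partial_{u^1}\mathcal{L}\rangle$ vanish except
\[
\frac{2\|w^1\|^2}{p}\sum_{a} P_1(a)^3 H(a)
\;=\;\frac{\|w^1\|^2\,c_1^3 c_H}{4}\cos(3\phi_1-\phi_H),
\]
which is generically nonzero (provided $2\xi,4\xi,6\xi\not\equiv 0\pmod p$). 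There is no $j$-sum to average, and the shift symmetry only makes the gradient equivariant under phase translations---it does not kill its $3\xi$-component. The paper's one-line assertion that ``each of these terms depends on $a$ or $b$'' glosses over exactly this case; the statement as written appears to require the additional hypothesis $\xi'\not\equiv\pm 3\xi\pmod p$ (or $\hat x[\pm 3\xi]=0$), which neither proof supplies.
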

\begin{proof}
A direct calculation leads to the following expressions for the derivatives:  
\begin{equation}\label{eq:threederiv}
\begin{aligned}
    \left\langle h, \frac{\partial \mathcal{L}}{\partial u^i}(\Theta) \right\rangle  &= 4 \sum_{a, b = 0}^{p-1} \langle h, a \cdot x \rangle   \left( \langle u^i, a \cdot x  \rangle + \langle v^i, b \cdot x  \rangle \right) \left\langle w^i, \ f(a \cdot x, b \cdot x; \Theta) -  (a + b) \cdot x\right\rangle , \\
    \left\langle h, \frac{\partial \mathcal{L}}{\partial v^i}(\Theta) \right\rangle  &= 4 \sum_{a, b = 0}^{p-1} \langle h, b \cdot x \rangle   \left( \langle u^i, a \cdot x  \rangle + \langle v^i, b \cdot x  \rangle \right) \left\langle w^i, \ f(a \cdot x, b \cdot x; \Theta) -  (a + b) \cdot x\right\rangle , \\
  \left\langle h, \frac{\partial \mathcal{L}}{\partial w^i}(\Theta) \right\rangle &= 2\sum_{a, b = 0}^{p-1} \left ( \langle u^i, a \cdot x  \rangle + \langle v^i, b \cdot x  \rangle \right)^2 \left\langle h, \ f(a \cdot x, b \cdot x; \Theta) -  (a + b) \cdot x \right\rangle.    
\end{aligned}
\end{equation}     
Via a tedious goniometric computation, the three expressions above can be expanded into a sum of cosine terms. Since $\xi \not = \xi'$, each of these terms depends on $a$ or $b$, and therefore averages out. We do not report the details of the computation here, but refer to the proof of Lemma \ref{lemma:MA-cost-function-as-cosines} for more details. 
\end{proof}

The condition \cref{eq:square-free-assumption-cost-min} rules out square terms that could otherwise generate gradient components at \emph{resonant frequencies} $\xi' \neq \pm \xi$.
If the squared activations $\langle u^i, a \cdot x\rangle^2$ or $\langle v^i, b \cdot x\rangle^2$ have a nonzero aggregate contribution, their trigonometric expansion produces cosine terms at doubled or mixed frequencies that may survive the averaging over $a$ or $b$ via the cosine-sum identity \cref{eq:cosine-sum-identity}.
In this case, the gradient can acquire components outside the span of the aligned harmonic, allowing neurons to escape alignment.

However, pairs of aligned neurons can easily satisfy \cref{eq:square-free-assumption-cost-min} by coordinating their amplitudes and phases.
For example, using the identity $z_1 z_2 = \tfrac{1}{4}(z_1 + z_2)^2 - \tfrac{1}{4}(z_1 - z_2)^2$, neurons can arrange for square terms to cancel in aggregate while preserving their linear contributions.
Such cancellations correspond to directional adjustments that can occur rapidly during cost minimization.

\subsubsection{Cost minimization over aligned neurons}
\label{sec:mod-add-cost-min-part2}

Next, in order to solve the cost minimization problem over aligned neurons, we explicitly compute the term $\mathcal{C}(\Theta)$ in the loss function.  

\begin{lemma}  
    \label{lemma:MA-cost-function-as-cosines}
    We have: 
    \begin{equation}\label{eq:Mexp}
    \begin{aligned}
     \mathcal{C}(\Theta) = \frac{|\hat{x}[\xi]|^4}{54 p^2} 
        \sum_{i,j=1}^N A_w^i A_w^j &\cos(s_w^i - s_w^j) 
        \Big(\left((A_u^i)^2 + (A_v^i)^2\right)\left((A_u^j)^2 + (A_v^j)^2\right)  \\
        +  \frac{(A_u^iA_u^j)^2}{2} &\cos(2(s_u^i - s_u^j)) +  \frac{(A_v^iA_v^j)^2}{2} \cos(2(s_v^i - s_v^j))  \\
        + 2A_u^i A_v^i A_u^j A_v^j &\left( \cos(s_u^i + s_v^i - s_u^j - s_v^j) +  \cos(s_u^i - s_v^i - s_u^j + s_v^j) \right) \Big).
    \end{aligned}
    \end{equation}
\end{lemma}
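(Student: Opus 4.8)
The plan is to exploit the fact that, once all $N$ neurons are aligned to the single harmonic of frequency $\xi$, every inner product appearing in $\mathcal{C}(\Theta)$ collapses to a one-dimensional trigonometric sum at frequency $\xi$, so the whole computation reduces to elementary discrete orthogonality identities.

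\emph{Step 1: reduce the inner products.} Using Equation~\eqref{eq:encoded-utility}, $\langle u^i, a \cdot x\rangle = (x \star u^i)[a]$, together with the cross-correlation property $\widehat{x \star u^i}[k] = \overline{\hat{x}[k]}\,\hat{u}^i[k]$ and the fact that $\hat{u}^i$ is supported on frequencies $\pm\xi$ with magnitude $A_u^i\sqrt{p/6}$ and phase $s_u^i$ there, I obtain $\langle u^i, a\cdot x\rangle = B_u^i \cos(\omega_\xi a + s_u^i - s_x)$ with $B_u^i = A_u^i |\hat{x}[\xi]| \sqrt{2/(3p)}$ and $\omega_\xi = 2\pi\xi/p$, and analogously $\langle v^i, b\cdot x\rangle = B_v^i \cos(\omega_\xi b + s_v^i - s_x)$ with $B_v^i = A_v^i|\hat{x}[\xi]|\sqrt{2/(3p)}$. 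A direct discrete orthogonality computation also gives $\langle w^i, w^j\rangle = \tfrac{1}{3} A_w^i A_w^j \cos(s_w^i - s_w^j)$. Note the global phase $s_x$ enters only through these correlations, so it cancels in the end because only phase differences survive the averages.

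\emph{Step 2: expand and discard odd-power terms.} Writing $P^i(a) := \langle u^i, a\cdot x\rangle$ and $Q^i(b) := \langle v^i, b\cdot x\rangle$, I substitute into Equation~\eqref{eq:costimpl} and expand $\bigl(P^i(a)+Q^i(b)\bigr)^2 \bigl(P^j(a)+Q^j(b)\bigr)^2$ into its nine monomials in $P^i,Q^i,P^j,Q^j$. Since the double sum over $(a,b)$ factors, and since sums over $a$ (resp.\ $b$) of a product of frequency-$\xi$ cosines with odd total power vanish — this uses $\xi \not\equiv 0$ and, for the quadratic--cubic cross-cancellations, $2\xi, 3\xi \not\equiv 0 \pmod p$, which is exactly the non-resonance available in the odd-$p$ regime treated in \cref{thm:MA-utility-maximizing-frequencies} — five of the nine monomials integrate to zero. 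The survivors are $P^i(a)^2 P^j(a)^2$, $P^i(a)^2 Q^j(b)^2$, $Q^i(b)^2 P^j(a)^2$, $Q^i(b)^2 Q^j(b)^2$, and $4 P^i(a) P^j(a) Q^i(b) Q^j(b)$.

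\emph{Step 3: evaluate and assemble.} I evaluate the remaining one-dimensional sums via $\sum_a \cos(\omega_\xi a + \alpha)\cos(\omega_\xi a + \beta) = \tfrac{p}{2}\cos(\alpha-\beta)$ and $\sum_a \cos^2(\omega_\xi a + \alpha)\cos^2(\omega_\xi a + \beta) = \tfrac{p}{4} + \tfrac{p}{8}\cos(2(\alpha-\beta))$, substitute $B_u^i, B_v^i$, multiply by $\tfrac{1}{2p^2}\langle w^i,w^j\rangle$, and sum over $i,j$. Using $\cos\alpha\cos\beta = \tfrac12(\cos(\alpha-\beta)+\cos(\alpha+\beta))$ turns the cross term $4 A_u^i A_u^j A_v^i A_v^j \cos(s_u^i - s_u^j)\cos(s_v^i - s_v^j)$ into $2 A_u^i A_v^i A_u^j A_v^j\bigl(\cos(s_u^i+s_v^i-s_u^j-s_v^j) + \cos(s_u^i-s_v^i-s_u^j+s_v^j)\bigr)$, and the four pure-amplitude monomials collect into $\bigl((A_u^i)^2+(A_v^i)^2\bigr)\bigl((A_u^j)^2+(A_v^j)^2\bigr)$. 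Tracking the prefactors, the correlations contribute a factor $\tfrac{4}{9}|\hat{x}[\xi]|^4$, multiplication by $\tfrac{1}{2p^2}\langle w^i,w^j\rangle$ contributes $\tfrac{1}{6p^2}A_w^iA_w^j\cos(s_w^i-s_w^j)$, and absorbing an overall factor $4$ into the bracket yields precisely the prefactor $\tfrac{|\hat{x}[\xi]|^4}{54 p^2}$ and the bracket of Equation~\eqref{eq:Mexp}.

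The main obstacle is not conceptual but bookkeeping: keeping the nine-term expansion, the frequency labels, and the product-to-sum reductions straight without error, and in particular being careful with the ``resonant'' small-$p$ cases where a multiple of $\xi$ collapses to $0 \pmod p$ and a term that normally averages out does not — this is exactly where the odd-$p$ (or genericity) hypothesis used in \cref{thm:MA-utility-maximizing-frequencies} has to be invoked again. The only genuinely new idea beyond the previous sections is the reduction in Step~1, which turns the quartic cost into a computation at a single Fourier mode.
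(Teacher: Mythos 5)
Your proposal follows essentially the same route as the paper's proof: reduce each inner product $\langle u^i, a\cdot x\rangle$, $\langle v^i, b\cdot x\rangle$, $\langle w^i, w^j\rangle$ to a single-mode cosine at frequency $\xi$ via Plancherel and shift-equivariance, expand the resulting quartic in cosines, and kill all $a,b$-dependent terms by discrete orthogonality $\sum_a\cos(2\pi m\xi a/p + s) = 0$ for $m\xi\not\equiv 0\pmod p$. The paper organizes the expansion by first applying a product-to-sum identity to each $(\cdot)^2$ factor and then expanding the product into ``16 unique terms,'' while you expand directly into nine monomials in $P^i,Q^i,P^j,Q^j$; this is a cosmetic regrouping, not a different argument, and both arrive at the same five surviving contributions and the same prefactor $|\hat{x}[\xi]|^4/(54p^2)$.

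Two small slips, neither affecting correctness. First, a counting typo: four of the nine monomials vanish (the ones containing exactly one factor of $P^iQ^i$ or $P^jQ^j$), and five survive, whereas you write that five vanish and then list five survivors. Second, the resonance conditions you actually need are $\xi, 2\xi, 4\xi\not\equiv 0\pmod p$ (the $2\xi$ and $4\xi$ appear when evaluating $\sum_a\cos^2(\omega_\xi a+\alpha)\cos^2(\omega_\xi a+\beta)$), not $3\xi$: the would-be cubic terms such as $(P^i)^2\cdot 2P^jQ^j$ already die from the degree-one factor $Q^j(b)$ summing to zero over $b$, before the degree-three factor in $a$ is ever examined, so no genuine $3\omega_\xi$ resonance occurs. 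For odd $p$ the conditions $2\xi,4\xi\not\equiv 0$ are automatic, matching the paper's standing assumption, whereas $3\xi\not\equiv 0$ would not be automatic (e.g.\ $p=9$, $\xi=3$) — so it is fortunate that it is not actually required.
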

\begin{proof}
This follows from a rather tedious computation, which we summarize here. First, note that  
    \begin{equation}
        \langle w^i, w^j \rangle = \frac{2A_w^i A_w^j}{3p}\sum_{c = 0}^{p-1} \cos\left(s_w^i + 2\pi\frac{\xi}{p}c\right)\cos\left(s_w^j + 2\pi\frac{\xi}{p}c\right) = \frac{A_w^i A_w^j}{3} \cos(s_w^i - s_w^j ).
    \end{equation}
Next, the shift-equivariance property of the Fourier transform and Plancharel's theorem imply that for all $i$ and $a$ we have: 
    \begin{equation}
        \langle u^i, a \cdot x \rangle =  \frac{1}{p}\langle \widehat{u^i}, \widehat{a \cdot x}\rangle = \sqrt{\frac{2}{3p}}|\hat{x}[\xi]| A_u^i  \cos\left(s_x - s_u^i  - 2\pi \xi \frac{a}{p}\right),
    \end{equation}
and similarly for $v^i$. We can plug the above expression into the quadratic term $ \left(\langle u^i, a \cdot x \rangle + \langle v^i, b \cdot x\rangle\right)^2 $ from  Equation \eqref{eq:costimpl} and expand it via the goniometric identity:  
\begin{equation}
        (A \cos(\alpha) + B \cos(\beta))^2 = \frac{1}{2} (A^2 + B^2 + A^2\cos(2 \alpha) + B^2\cos(2 \beta))  + AB (\cos(\alpha + \beta) + \cos(\alpha - \beta) ). 
\end{equation}
We similarly expand the term $\left(\langle u^j, a \cdot x \rangle + \langle v^j, b \cdot x\rangle\right)^2 $. By leveraging on the goniometric identity $2\cos(\alpha)\cos(\beta) = \cos(\alpha - \beta)+ \cos(\alpha + \beta)$, the product of these quadratic terms for the $i^{th}$ and $j^{th}$ neurons expands into $16$ unique terms.
Since $p$ is odd, $2\xi, 4\xi \not = 0 \pmod{p}$, and thus, by \cref{eq:cosine-sum-identity}, only the terms independent from both $a$ and $b$ do not average out.
This leaves four terms, providing the desired result.
\end{proof}

We now provide a lower bound for the (meaningful terms of the) loss function. 
 
\begin{theorem}\label{thm:maincostmin}
    We have the following lower bound: 
    \begin{equation}\label{eq:lowerboundloss}
        \mathcal{C}(\Theta) - \mathcal{U}(\Theta; y)  \geq - \frac{|\hat{x}[\xi]|^2}{p}.
    \end{equation}
Moreover, equality holds if, and only if, we have that $\sum_{i=1}^N C_i \cos(\alpha_i) = \sum_{i=1}^N C_i \sin(\alpha_i) = 0$ for any choice of $(C_i, \alpha_i)$ among \begin{equation}\label{eq:listcond}
        \begin{aligned}
            &( A_w^i((A_u^i)^2 + (A_v^i)^2), s_w^i  ),  \\
            &(A_w^i(A_u^i)^2, s_w^i \pm 2s_u^i   ),  \\
            &(A_w^i(A_v^i)^2, s_w^i \pm 2s_v^i   ),  \\
            &(A_w^i A_u^i A_v^i, s_w^i \pm (s_u^i - s_v^i) ), \\
             &(A_w^i A_u^i A_v^i, s_w^i + s_u^i + s_v^i ),
        \end{aligned}
    \end{equation}
 and, moreover, $\sum_{i=1}^N A_w^i A_u^i A_v^i \sin(s_w^i + s_x - s_u^i - s_v^i) = 0$  and $\sum_{i=1}^N A_w^i A_u^i A_v^i \cos(s_w^i + s_x - s_u^i - s_v^i) = \sqrt{54 p } / |\hat{x}[\xi]|$. 
\end{theorem}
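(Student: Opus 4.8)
The plan is to recast both $\mathcal{C}(\Theta)$ and the cumulated utility $\mathcal{U}_0(\Theta) = \mathcal{U}(\Theta; y)$ (from Equation~\eqref{eq:costsplit}) as expressions in a collection of complex ``phasors'' attached to the $N$ neurons, and then reduce the inequality to a one-variable quadratic minimization. Starting from the expansion of $\mathcal{C}(\Theta)$ in \cref{lemma:MA-cost-function-as-cosines} (Equation~\eqref{eq:Mexp}), I would apply the product-to-sum identity $2\cos(\alpha)\cos(\beta) = \cos(\alpha - \beta) + \cos(\alpha + \beta)$ to every summand, pairing the outer factor $\cos(s_w^i - s_w^j)$ with the inner cosines in $s_u,s_v$. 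Each resulting term has the form $C_i C_j \cos(\alpha_i - \alpha_j)$ with a strictly positive prefactor, so summing over $i,j$ turns $\mathcal{C}(\Theta)$ into a nonnegative combination $\frac{|\hat{x}[\xi]|^4}{54 p^2}\sum_{(C,\alpha)} \lambda_{C,\alpha}\,\bigl|\sum_{i=1}^N C_i e^{\mathfrak{i}\alpha_i}\bigr|^2$ with $\lambda_{C,\alpha} > 0$, where the pairs $(C,\alpha)$ range over exactly the list in \eqref{eq:listcond} together with one extra pair, $\bigl(A_w^i A_u^i A_v^i,\ s_w^i - s_u^i - s_v^i\bigr)$, produced by the first cross term $2A_u^iA_v^iA_u^jA_v^j\cos(s_u^i + s_v^i - s_u^j - s_v^j)$.

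The key observation is that the utility couples to precisely this missing phasor. Writing $Z = \sum_{i=1}^N A_w^i A_u^i A_v^i\, e^{\mathfrak{i}(s_w^i - s_u^i - s_v^i)}$ and using Equation~\eqref{eq:utilcumul} together with $\cos(s_u^i + s_v^i - s_w^i - s_x) = \cos\bigl((s_w^i - s_u^i - s_v^i) + s_x\bigr)$, one gets $\mathcal{U}_0(\Theta) = \sqrt{2/(27p^3)}\,|\hat{x}[\xi]|^3\,\mathrm{Re}\!\left(e^{\mathfrak{i} s_x} Z\right)$. Discarding all the nonnegative squared-magnitude terms except the one carrying $|Z|^2$, I obtain $\mathcal{C}(\Theta) - \mathcal{U}_0(\Theta) \ge a|Z|^2 - b\,\mathrm{Re}(e^{\mathfrak{i}s_x}Z)$ with $a = |\hat{x}[\xi]|^4/(54p^2)$ and $b = \sqrt{2/(27p^3)}\,|\hat{x}[\xi]|^3$. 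Minimizing $a\rho^2 - b\rho\cos(s_x + \theta)$ over $Z = \rho e^{\mathfrak{i}\theta}$, $\rho \ge 0$, gives $\theta = -s_x$, $\rho = b/(2a)$, and minimum value $-b^2/(4a)$; a short arithmetic check gives $b^2/(4a) = |\hat{x}[\xi]|^2/p$ and $b/(2a) = \sqrt{54 p}/|\hat{x}[\xi]|$, which establishes the bound \eqref{eq:lowerboundloss}.

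For the equality characterization, the chain of inequalities has exactly two sources of slack: the discarded squared-magnitude terms, and the gap in the one-variable minimization. Hence equality holds if and only if (i) every discarded term vanishes, i.e.\ $\sum_i C_i e^{\mathfrak{i}\alpha_i} = 0$ — equivalently $\sum_i C_i\cos\alpha_i = \sum_i C_i\sin\alpha_i = 0$ — for each pair $(C,\alpha)$ in the list \eqref{eq:listcond}; and (ii) $Z$ sits at the minimizer, i.e.\ $\mathrm{Im}(e^{\mathfrak{i}s_x}Z) = 0$ and $\mathrm{Re}(e^{\mathfrak{i}s_x}Z) = b/(2a) = \sqrt{54p}/|\hat{x}[\xi]|$, which are precisely the final two displayed conditions of the theorem once $\mathrm{Re}$ and $\mathrm{Im}$ are expanded in terms of the $s_w^i + s_x - s_u^i - s_v^i$. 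I would close by noting that for $N \ge 6$ this finite trigonometric system is solvable (e.g.\ by spreading the phase shifts uniformly, as in \cref{fig:modular-addition}), so the bound is attained, recovering the tightness claim of \cref{thm:costminmainbody}.

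The main obstacle is the bookkeeping in the first step: carrying out the goniometric expansion of \eqref{eq:Mexp} cleanly, verifying that the phases and amplitudes it produces match \eqref{eq:listcond} exactly, and confirming that $s_w^i - s_u^i - s_v^i$ is the unique phase pattern absent from that list — so that the utility, after folding in $s_x$, aligns with one and only one phasor and the reduction to a scalar quadratic is exact rather than lossy in an uncontrolled way.
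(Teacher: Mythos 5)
Your proposal is correct and is essentially the paper's own argument: the paper applies the identity $\sum_{i,j} C_i C_j \cos(\alpha_i-\alpha_j)=\bigl(\sum_i C_i\cos\alpha_i\bigr)^2+\bigl(\sum_i C_i\sin\alpha_i\bigr)^2$ to the expansion of $\mathcal{C}(\Theta)$ from \cref{lemma:MA-cost-function-as-cosines}, discards the nonnegative squares corresponding to the list in \eqref{eq:listcond}, and completes the square in the remaining phasor that matches the utility — identical in substance to your complex-phasor $Z$ reduction, with your $\mathrm{Re}/\mathrm{Im}$ split of $e^{\mathfrak{i}s_x}Z$ reproducing the paper's $\cos/\sin$ conditions exactly. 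Your identification of $s_w^i - s_u^i - s_v^i$ as the unique phase pattern absent from \eqref{eq:listcond}, your arithmetic for $b^2/(4a)$ and $b/(2a)$, and your equality characterization all match the paper.
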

\begin{proof}
Given amplitudes $C_1, \ldots, C_N$ and angles $\alpha_1, \ldots, \alpha_N$, consider the goniometric identity
\begin{equation}\label{eq:bothpos}
    \sum_{i,j=1}^N C_i C_j \cos(\alpha_i - \alpha_j ) =  \left(\sum_{i=1}^N C_i \cos(\alpha_i) \right)^2 + \left(\sum_{i=1}^N C_i \sin(\alpha_i) \right)^2.
\end{equation}
By expanding each product of two cosines in Equation \eqref{eq:Mexp} into a sum of two cosines, we can apply the above identity to each summand by choosing $\alpha_i \in \{ s_w^i, s_w^i \pm 2s_u^i, s_w^i \pm 2s_v^i, s_w^i \pm (s_u^i - s_v^i), s_w^i + s_u^i + s_v^i, s_w^i + s_x - s_u^i - s_v^i \} $. Since both the summands in the right-hand side of Equation \eqref{eq:bothpos} are positive, we conclude that:
\begin{equation}
 \mathcal{C}(\Theta) \geq  \frac{ |\hat{x}[\xi]|^4}{54p^2} 
       \left( \sum_{i=1}^N   A_w^i A_u^i A_v^i \cos(s_w^i + s_x - s_u^i - s_v^i )\right)^2,
\end{equation}
with equality holding only if Equation \eqref{eq:listcond} is satisfied. Therefore, using Equation \eqref{eq:utilcumul}, we deduce: 
    \begin{equation}
\begin{aligned}
    \mathcal{C}(\Theta) - \mathcal{U}(\Theta; y) &\geq  \frac{|\hat{x}[\xi]|^4}{54p^2} 
     \left(  \sum_{i=1}^N   A_w^i A_u^i A_v^i \cos(s_w^i + s_x - s_u^i - s_v^i) \right)^2 \\
     & - \sqrt{\frac{2}{27p^3}}  \ |\hat{x}[\xi]|^3\sum_{i=1}^N  A_w^i A_u^i A_v^i \cos(s_w^i + s_x - s_u^i - s_v^i) \\
       & =  \left( \frac{|\hat{x}[\xi]|^2}{\sqrt{54}p} \sum_{i=1}^N  A_w^i A_u^i A_v^i \cos(s_w^i + s_x - s_u^i - s_v^i) - \frac{|\hat{x}[\xi]|}{\sqrt{p }}  \right)^2 - \frac{|\hat{x}[\xi]|^2}{p} \\
       & \geq - \frac{|\hat{x}[\xi]|^2}{p}, 
\end{aligned}
\end{equation}
with equality only when $(|\hat{x}[\xi]|^2 / (\sqrt{54}p)) \sum_{i=1}^N  A_w^i A_u^i A_v^i \cos(s_w^i + s_x - s_u^i - s_v^i) =   |\hat{x}[\xi]| / \sqrt{p}$, which concludes the proof. 
\end{proof}

For $N \geq 6$, the lower bound from Equation \eqref{eq:lowerboundloss} is tight. For even $N$, minimizers can be constructed by setting $s_w^i + s_x = s_u^i + s_v^i  = \frac{2\pi  \mathfrak{i}}{N}$, $ s_u^i - s_v^i \in \{0, \pi  \}$ alternating (depending on $N$), and  $  A_w^i A_u^i A_v^i = \sqrt{54p} / (N |\hat{x}[\xi]|)$. For $N$ odd, a similar construction holds.

By combining Theorem \ref{thm:maincostmin} and Equation \eqref{eq:costsplit}, we deduce the following tight lower bound on the loss function:
\begin{equation}
    \mathcal{L}(\Theta) \geq \frac{1}{2}\left( \| x \|^2 - \frac{\langle x, \mathbf{1} \rangle^2}{p} -  \frac{2 |\hat{x}[\xi]|^2}{p}\right). 
\end{equation}

\subsection{Utility update}\label{sec:utupdatemodulararit}



 
Next, we consider the utility for a new neuron after some group of $N \ge 6$ neurons has undergone the previous steps AGF. Suppose that $\Theta_*  = (u_*^i, v_*^i, w_*^i)_{i=1}^N$ are parameters minimizing the loss as in Section \ref{sec:costminnewarith}. We start by describing the function computed by the network with parameters $\Theta_*$.

\begin{lemma}
For all $0 \leq a,b < p$, the network satisfies: 
\begin{equation}\label{eq:networkchi}
    f(a \cdot x , b \cdot x; \Theta_*) =  \frac{2 |\hat{x}[\xi]|  }{p} \   (a + b) \cdot  \chi_\xi,
\end{equation}
where $\chi_\xi$ is defined as $\chi_\xi[c] = \cos\left(2\pi \frac{\xi}{p} c + s_x\right)$.
\end{lemma}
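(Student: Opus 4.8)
The plan is to evaluate the network map $f(a\cdot x, b\cdot x; \Theta_*)$ directly from the aligned parametrization in \eqref{eq:utilitymaxshifts-costmin}, expand it into a combination of cosines whose phases are affine in the shifts $s_u^i, s_v^i, s_w^i, s_x$, and then invoke the equality conditions of Theorem~\ref{thm:maincostmin} to annihilate every term except the one proportional to $(a+b)\cdot\chi_\xi$. Throughout, write $\omega := 2\pi\xi/p$. (Recall $N \ge 6$, so that a loss minimizer achieving the bound of Theorem~\ref{thm:maincostmin} exists and hence satisfies the conditions \eqref{eq:listcond}.)

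First I would combine two facts already established in Section~\ref{sec:costminnewarith}: the identity $\langle u_*^i, a\cdot x\rangle = \sqrt{2/(3p)}\,|\hat x[\xi]|\,A_u^i\cos(s_x - s_u^i - \omega a)$ (proved inside Lemma~\ref{lemma:MA-cost-function-as-cosines}, with the analogous expression for $v_*^i$), and the definition $w_*^i[c] = A_w^i\sqrt{2/(3p)}\cos(\omega c + s_w^i)$. Substituting these into the $c$-th component $f(a\cdot x,b\cdot x;\Theta_*)[c] = \sum_{i=1}^N \big(\langle u_*^i, a\cdot x\rangle + \langle v_*^i, b\cdot x\rangle\big)^2 w_*^i[c]$, and expanding the square via $(A\cos\alpha + B\cos\beta)^2 = \tfrac12(A^2+B^2+A^2\cos2\alpha+B^2\cos2\beta) + AB(\cos(\alpha+\beta)+\cos(\alpha-\beta))$, then applying a product-to-sum identity once more against $\cos(\omega c + s_w^i)$, expresses neuron $i$'s contribution as a fixed combination of cosines of the form $C_i\cos(\omega(\lambda_a a + \lambda_b b + c) + \psi_i)$, where $C_i$ is a monomial in $(A_u^i, A_v^i, A_w^i)$, the phase $\psi_i$ is an affine function of $(s_u^i, s_v^i, s_w^i, s_x)$ of one of the forms appearing in \eqref{eq:listcond} (namely $s_w^i$, $s_w^i\pm 2s_u^i$, $s_w^i\pm 2s_v^i$, $s_w^i\pm(s_u^i - s_v^i)$, $s_w^i + s_u^i + s_v^i$) together with the one extra form $s_w^i + s_x - s_u^i - s_v^i$, and $(\lambda_a,\lambda_b)$ is the corresponding integer pair.

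Next I would sum over $i$ and, in each term, write $\cos(\omega(\cdots) + \psi_i) = \cos(\omega(\cdots))\cos\psi_i - \sin(\omega(\cdots))\sin\psi_i$, so that the dependence on $i$ factors out as $\sum_i C_i\cos\psi_i$ and $\sum_i C_i\sin\psi_i$. By the equality hypothesis and the list \eqref{eq:listcond} of Theorem~\ref{thm:maincostmin}, both of these sums vanish for every phase form except $\psi_i = s_w^i + s_x - s_u^i - s_v^i$ (which arises only from the $\cos(\alpha_i+\beta_i)$ branch, with $(\lambda_a,\lambda_b) = (-1,-1)$). For that surviving term Theorem~\ref{thm:maincostmin} gives $\sum_i A_w^i A_u^i A_v^i\sin(s_w^i + s_x - s_u^i - s_v^i) = 0$ and $\sum_i A_w^i A_u^i A_v^i\cos(s_w^i + s_x - s_u^i - s_v^i) = \sqrt{54p}/|\hat x[\xi]|$. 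Collecting the scalar prefactor, $\tfrac12\cdot\tfrac{2|\hat x[\xi]|^2}{3p}\sqrt{\tfrac{2}{3p}}\cdot\tfrac{\sqrt{54p}}{|\hat x[\xi]|} = \tfrac{2|\hat x[\xi]|}{p}$, so $f(a\cdot x, b\cdot x;\Theta_*)[c] = \tfrac{2|\hat x[\xi]|}{p}\cos(\omega(c - a - b) + s_x)$. Since $\big((a+b)\cdot\chi_\xi\big)[c] = \chi_\xi[c - (a+b)] = \cos(\omega(c-a-b) + s_x)$, this is exactly \eqref{eq:networkchi}.

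The only real obstacle is bookkeeping: one must expand the product of the two quadratic factors against $w_*^i$ into its $\sim\!9$ cosine terms and match each to the correct entry of \eqref{eq:listcond}, being especially careful with the double-angle phases $s_w^i\pm 2s_u^i$, $s_w^i\pm 2s_v^i$ and the mixed phases $s_w^i\pm(s_u^i - s_v^i)$, $s_w^i + s_u^i + s_v^i$. Conceptually the statement is essentially the equality case of Theorem~\ref{thm:maincostmin} read off at the level of the function rather than the loss: the loss bound is saturated precisely when the residual $f(a\cdot x, b\cdot x;\Theta_*) - (a+b)\cdot x$ carries no Fourier mass beyond the single harmonic $\pm\xi$ supplied by $\chi_\xi$.
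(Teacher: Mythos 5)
Your proof is correct and follows essentially the same route as the paper's: expand $f(a\cdot x, b\cdot x;\Theta_*)[c]$ into cosine terms via the computation of Lemma~\ref{lemma:MA-cost-function-as-cosines}, use the equality conditions~\eqref{eq:listcond} of Theorem~\ref{thm:maincostmin} together with the two extra conditions on $s_w^i + s_x - s_u^i - s_v^i$ to annihilate all but one term, and evaluate the prefactor. The only cosmetic difference is that you make the factorization $\sum_i C_i\cos\psi_i$, $\sum_i C_i\sin\psi_i$ explicit for every phase form, whereas the paper simply asserts that the non-surviving terms vanish.
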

\begin{proof}
   The $N$ neurons compute the function 
\begin{equation}
   f(a \cdot x , b \cdot x; \Theta_*)[c] = \sum_{i = 1}^N \left(\langle u_*^i, a \cdot x\rangle + \langle v_*^i, b \cdot x \rangle\right)^2w_*^i[c].
   \end{equation}
  The equation above can be expanded via a computation analogous to the one in the proof of Lemma \ref{lemma:MA-cost-function-as-cosines}. The conditions on minimizers from Theorem \ref{thm:maincostmin} imply that the only non-vanishing term is of the form:
  \begin{equation}
  \begin{aligned}      
      & \sqrt{\frac{2}{27p^3}} |\hat{x}[\xi]|^2\sum_{i=1}^N  A_w^i A_u^i A_v^i \cos\left(2s_x + s_w^i - s_u^i - s_v^i  + 2\pi\frac{\xi}{p}(c - a - b )\right) \\
      = & \sqrt{\frac{2}{27p^3}}  \cos\left(s_x + 2\pi\frac{\xi}{p}(c - a - b )\right)  \underbrace{\sum_{i=1}^N  A_w^i A_u^i A_v^i \cos(s_w^i + s_x - s_u^i - s_v^i)}_{\sqrt{54 p } / |\hat{x}[\xi]|} \\
      = & \frac{2 |\hat{x}[\xi]|  }{p} \cos\left(s_x + 2\pi\frac{\xi}{p}(c - a - b )\right),
  \end{aligned}
   \end{equation}
   where in the first identity we used the fact that $\sum_{i=1}^N  A_w^i A_u^i A_v^i \sin(s_w^i + s_x - s_u^i - s_v^i) = 0$. This concludes the proof. 
\end{proof} 
Finally, we compute the utility function with the updated residual $r(a \cdot x, b \cdot x) =  (a + b) \cdot x  - (1 / p) \langle x, \mathbf{1} \rangle  \mathbf{1}  -  f(a \cdot x , b \cdot x; \Theta_*)$, which is maximized by new neurons in the second iteration of AGF.
\begin{theorem}
The utility function with the updated residual for a neuron with parameters $\theta = (u,v,w)$ is: 
\begin{equation}\label{eq:newutil}
    \mathcal{U}(\theta; r) = \frac{2}{p^3} \sum_{k \in \left[p\right] \backslash \{0, \pm 
    \xi\}}|\hat{x}[k]|^2\hat{u}[\xi] \hat{v}[\xi]\overline{\hat{w}[\xi] \hat{x}[\xi]}.
\end{equation}
\end{theorem}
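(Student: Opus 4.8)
The plan is to write the updated residual as $r(a\cdot x,b\cdot x) = y(a\cdot x,b\cdot x) - f(a\cdot x,b\cdot x;\Theta_*)$, where $y$ denotes the mean-centered target, and exploit the linearity of the utility in its residual slot:
$\mathcal{U}(\theta;r) = \mathcal{U}(\theta;y) - \mathbb{E}_{a,b}\left[\left\langle f_i(a\cdot x,b\cdot x;\theta),\, f(a\cdot x,b\cdot x;\Theta_*)\right\rangle\right]$.
The first term is exactly the initial utility computed in Lemma~\ref{lemma:MA-simplified-utility-in-freq}, namely $\tfrac{2}{p^3}\sum_{k=1}^{p-1}|\hat{x}[k]|^2\hat{u}[k]\hat{v}[k]\overline{\hat{w}[k]}\,\overline{\hat{x}[k]}$. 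So it suffices to show that the interaction term equals precisely the $k=\xi$ and $k=-\xi$ summands of that expression; subtracting them deletes exactly those two frequencies and yields \eqref{eq:newutil}.

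To evaluate the interaction term I would substitute the explicit network function $f(a\cdot x,b\cdot x;\Theta_*) = \tfrac{2|\hat{x}[\xi]|}{p}\,(a+b)\cdot\chi_\xi$ from \eqref{eq:networkchi}, so that $\langle f_i, f(\Theta_*)\rangle = \tfrac{2|\hat{x}[\xi]|}{p}\big(\langle u,a\cdot x\rangle + \langle v,b\cdot x\rangle\big)^2\langle w,(a+b)\cdot\chi_\xi\rangle$. Exactly as in the proof of Lemma~\ref{lemma:MA-simplified-utility-in-freq}, the $\langle u,a\cdot x\rangle^2$ and $\langle v,b\cdot x\rangle^2$ pieces average to zero, since $\sum_c\chi_\xi[c]=0$ (a pure harmonic at the nonzero frequency $\xi$ has vanishing mean, i.e. $\hat{\chi}_\xi[0]=0$), leaving $2\sum_{a,b}\langle u,a\cdot x\rangle\langle v,b\cdot x\rangle\langle w,(a+b)\cdot\chi_\xi\rangle$. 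This is the left-hand side of Lemma~\ref{lemma:MA-double-sum-mod-product-freq} with the target-slot encoding vector $x$ replaced by $\chi_\xi$; rerunning that lemma's short argument (Plancherel together with the cross-correlation identity $\widehat{x\star u}[k]=\overline{\hat{x}[k]}\hat{u}[k]$) gives $\tfrac{1}{p}\sum_k \overline{\hat{x}[k]}^{\,2}\,\hat{\chi}_\xi[k]\,\hat{u}[k]\hat{v}[k]\overline{\hat{w}[k]}$.

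The last step is to plug in the transform of $\chi_\xi$. Because $\chi_\xi[c]=\cos(2\pi\xi c/p + s_x)$ and $s_x$ is the phase of $\hat{x}[\xi]$, we have $\hat{\chi}_\xi[\xi] = \tfrac{p}{2}e^{\mathfrak{i}s_x} = \tfrac{p}{2}\,\hat{x}[\xi]/|\hat{x}[\xi]|$ and $\hat{\chi}_\xi[-\xi] = \tfrac{p}{2}e^{-\mathfrak{i}s_x} = \tfrac{p}{2}\,\hat{x}[-\xi]/|\hat{x}[-\xi]|$, all other entries zero. Substituting, using $\overline{\hat{x}[\pm\xi]}^{\,2}\hat{x}[\pm\xi] = |\hat{x}[\pm\xi]|^2\,\overline{\hat{x}[\pm\xi]}$, and carrying the overall prefactor $\tfrac{2|\hat{x}[\xi]|}{p^3}$, the sum collapses to $\tfrac{2}{p^3}\big(|\hat{x}[\xi]|^2\hat{u}[\xi]\hat{v}[\xi]\overline{\hat{w}[\xi]}\,\overline{\hat{x}[\xi]} + |\hat{x}[-\xi]|^2\hat{u}[-\xi]\hat{v}[-\xi]\overline{\hat{w}[-\xi]}\,\overline{\hat{x}[-\xi]}\big)$, which is exactly the $k=\pm\xi$ contribution to the initial utility. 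Subtracting from $\mathcal{U}(\theta;y)$ leaves the sum over $k\in[p]\setminus\{0,\pm\xi\}$, establishing the theorem.

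The main obstacle is bookkeeping rather than conceptual: one must track the phase $s_x$ through $\hat{\chi}_\xi$ and all the complex conjugations so that the normalization $2|\hat{x}[\xi]|/p$ in \eqref{eq:networkchi} cancels cleanly against the $p/2$ coming from $\hat{\chi}_\xi$ and the factor $|\hat{x}[\xi]|$ coming from $\overline{\hat{x}[\xi]}^{\,2}\hat{x}[\xi]/|\hat{x}[\xi]|$, reproducing $|\hat{x}[\xi]|^2\overline{\hat{x}[\xi]}$ with no stray constants. Once this is in place, the inductive extension to later iterations of AGF (Section~\ref{sec:utupdatemodulararit}) is immediate: removing the next dominant harmonic is the identical computation with $\chi_\xi$ replaced by the harmonic of the next dominant frequency, and the already-active heads contribute nothing new to the residual by orthogonality.
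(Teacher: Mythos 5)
Your proposal is correct and follows essentially the same route as the paper's proof: decompose the utility linearly into the initial utility minus an interaction term, drop the squared terms via the zero-mean property of $\chi_\xi$, move to the frequency domain via Plancherel and the cross-correlation identity, and finally use $\hat{\chi}_\xi[\pm\xi] = \tfrac{p}{2}e^{\pm\mathfrak{i}s_x}$ (all other entries zero) to show the interaction term cancels exactly the $k=\pm\xi$ contributions of the initial utility. The phase and normalization bookkeeping you flag at the end is handled identically in the paper, so there is no gap.
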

\begin{proof}
%
The new utility function is
\begin{equation}
   \mathcal{U}(\theta; r) = \mathcal{U}(\theta; y)  - \frac{1}{p^2} \underbrace{\sum_{a,b=0}^{p-1} \left\langle\left(\langle u, a \cdot x\rangle + \langle v, b \cdot x\rangle\right)^2 w, f(a \cdot x , b \cdot x; \Theta_*)  \right\rangle}_{\Delta(\theta)}
\end{equation}
%
We wish to express $\Delta$ in the frequency domain.
By plugging in Equation \eqref{eq:networkchi}, and due to the cyclic structure of $\chi_\xi$, the squared terms average out:
\begin{align}
    \Delta(\theta) &= \sum_{a,b=0}^{p-1}  \left(\langle u, a \cdot x\rangle^2 + 2\langle u, a \cdot x\rangle \langle v, b \cdot x\rangle + \langle v, b \cdot x\rangle^2\right)\left\langle w, f(a \cdot x, b\cdot x; \Theta_*)\right\rangle \\
    &= 2 \sum_{a,b=0}^{p-1} \langle u, a \cdot x \rangle \langle v, b \cdot x \rangle \left\langle w, f(a \cdot x ,b \cdot x; \Theta_*)\right\rangle. 
\end{align}
 Now, by reasoning analogously to the proof of Lemma \ref{lemma:MA-double-sum-mod-product-freq}, we obtain:
\begin{equation}
    \begin{aligned}
    \Delta(\theta) &=\frac{ 4  |\hat{x}[\xi]|  }{p} \sum_{a,b=0}^{p-1} (x \star u)[a]\ (x \star y)[b] \  (\chi_\xi  \star w)[a + b] \\
    &=\frac{ 4  |\hat{x}[\xi]|  }{p^2} \sum_{k=0}^{p-1}\widehat{(x \star u)}[k] \ \widehat{(x \star v)}[k] \ \overline{\widehat{(\chi_\xi \star w)}[k]}   \\
    &=  \frac{ 2  |\hat{x}[\xi]|  }{p}  \left(\hat{u}[\xi] \hat{v}[\xi]\overline{\hat{w}[\xi] \hat{x}[\xi]} + \overline{\hat{u}[\xi] \hat{v}[\xi]}\hat{w}[\xi] \hat{x}[\xi]\right),
\end{aligned}
\end{equation}
where in the last equality we used the fact that $\widehat{\chi_\xi}[k] = \frac{p}{2} e^{\pm\mathfrak{i}s_x}$ if $k = \pm \xi$ and $0$ otherwise. By subtracting the above expression to the expression for the initial utility from Lemma \ref{lemma:MA-simplified-utility-in-freq}, we obtain the desired result.
\end{proof}

In summary, after an iteration of AGF, the utility for a new neuron has the same form as the initial one, but with the summand corresponding to the dominant (conjugate) frequencies $\pm \xi$ of $x$ removed.
Consequently, by the same reasoning as in Section \ref{sec:utilmaxnew}, we conclude that during the utility maximization phase of the second iteration of AGF, some new group of dormant neurons aligns with the harmonic whose frequency has the second largest magnitude in $\hat{x}$.

A subtlety arises during the cost minimization phase of the second iteration, as the neurons that aligned during the first phase are still involved in the optimization process.
However, note that in the loss component  $\mathcal{C}(\Theta)$ (Equation \eqref{eq:costimpl}), the terms of the form $\langle w_i, w_j \rangle$ vanish when $i$ and $j$ are indices of neurons aligned with harmonics of different frequencies. Therefore, during the second cost minimization phase, the loss $\mathcal{L}(\Theta)$ splits into the sum of two losses, corresponding to the neurons aligned during the first and second iteration of AGF, respectively. 
The neurons from the first phase are already at a critical point of their respective loss term, thus the second group of neurons is optimized independently, via the same arguments as in Section \ref{sec:utilmaxnew}. This scenario is analogous to the one discussed in Section \ref{sec:costmin_attn_app} for linear transformers (cf. Lemma \ref{lemm:app_attn}).  
In conclusion, after the second iteration of AGF, the new utility will again have the same form as the initial one, but with the two (conjugate pairs of) frequencies removed. This argument iterates recursively, until either all the frequencies are exhausted, or all the $H$ neurons have become active. The quantities in Equation \eqref{eq:ma-sequence} can be derived for the successive iterations of AGF analogously to the previous sections. Lastly, the estimate on jump times is obtained by the same argument as in Section \ref{sec:attn_jump_app}.

\end{document}